\documentclass{article}

\usepackage{microtype}
\usepackage{graphicx}
\usepackage{subcaption}
\usepackage{booktabs} %

\usepackage{titletoc}

\usepackage{hyperref}

\usepackage{amsmath}
\usepackage{amssymb}
\usepackage{mathtools}
\usepackage{amsthm}

\usepackage{thmtools}
\usepackage{thm-restate}

\usepackage{paralist}

\newcommand{\bs}[1]{\left[{#1}\right]}
\def\gS{{\mathcal{\X}}}
\def\\X{{\mathcal{\X}}}
\newcommand{\X}{\mathcal{X}}
\def\A{{\mathcal{A}}}
\newcommand{\innerprod}[2]{\left\langle{#1},{#2}\right\rangle}
\newcommand{\expert}{\operatorname{E}}
\newcommand{\bc}{\operatorname{BC}}

\newcommand{\initial}{\nu_1}
\newcommand{\bcc}[1]{\left\{{#1}\right\}}
\newcommand{\brr}[1]{\left({#1}\right)}
\newcommand{\norm}[1]{\left\| {#1} \right\|}
\newcommand{\Xeinh}{X_{\scriptscriptstyle\textup{E},h}^{n,i}}
\newcommand{\Aeinh}{A_{\scriptscriptstyle\textup{E},h}^{n,i}}
\newcommand{\tauE}{\tau_{\scriptscriptstyle\textup{E}}}

\def\gA{{\mathcal{A}}}

\def\gC{{\mathcal{C}}}
\def\gD{{\mathcal{D}}}

\def\gG{{\mathcal{G}}}

\def\gS{{\mathcal{\X}}}

\def\gX{{\mathcal{X}}}

\usepackage{pifont}

\newcommand{\abs}[1]{\left| {#1} \right|}

\def\\X{{\mathcal{\X}}}
\def\A{{\mathcal{A}}}
\newcommand{\lp}{\left(}
\newcommand{\rp}{\right)}

\def\TV{\mathrm{TV}}

\usepackage{dsfont}
\usepackage{thm-restate}

\newcommand{\inp}[1]{\left\langle #1 \right\rangle}

\newcommand{\sumkK}{\sum_{k=1}^K}

\usepackage{mathtools} %
\mathtoolsset{showonlyrefs=true}

\newcommand{{\transpose}}{^\mathsf{\scriptscriptstyle T}}

\newcommand{\bbE}{\mathbb{E}}

\newcommand{\bbR}{\mathbb{R}}

\newcommand{\cA}{\mathcal{A}}

\newcommand{\cX}{\mathcal{X}}

\newcommand{\piout}{\pi_{\mathrm{out}}}

\newcommand{\br}[1]{\left({#1}\right)}  %

\def\argmin{\mathop{\mbox{ arg\,min}}}
\def\argmax{\mathop{\mbox{ arg\,max}}}

\renewcommand{\phi}{\varphi}

\DeclarePairedDelimiter{\spr}{(}{)}
\DeclarePairedDelimiter{\sbr}{[}{]}

 \newcommand{\cphimax}{\mathcal{C}_{\phi, \max}}

\usepackage{nicefrac}

\usepackage[accepted]{icml2026}

\theoremstyle{plain}
\newtheorem{theorem}{Theorem}[section]

\newtheorem{lemma}[theorem]{Lemma}

\theoremstyle{definition}
\newtheorem{definition}[theorem]{Definition}
\newtheorem{assumption}[theorem]{Assumption}
\theoremstyle{remark}
\newtheorem{remark}[theorem]{Remark}
\newtheorem{example}[theorem]{Example}

\usepackage[textsize=tiny]{todonotes}
\usepackage[most]{tcolorbox}
\icmltitlerunning{Multi-Agent Imitation in Linear Markov Games and Beyond}

\begin{document}

\twocolumn[
  \icmltitle{Multi-agent imitation learning with function approximation: \\ Linear Markov games and beyond}

  \icmlsetsymbol{equal}{*}

  \begin{icmlauthorlist}
    \icmlauthor{Luca Viano}{equal,epfl}
    \icmlauthor{Till Freihaut}{equal,comp}
    \icmlauthor{Emanuele Nevali}{epfl}
    \icmlauthor{Volkan Cevher}{epfl}
    \icmlauthor{Matthieu Geist}{sch}
    \icmlauthor{Giorgia Ramponi}{comp}
  \end{icmlauthorlist}

  \icmlaffiliation{epfl}{EPFL}
  \icmlaffiliation{comp}{University of Zurich}
  \icmlaffiliation{sch}{Earth Species Project}

  \icmlcorrespondingauthor{Luca Viano}{luca.viano@epfl.ch}
  \icmlcorrespondingauthor{Till Freihaut}{freihaut@ifi.uzh.ch}

  \icmlkeywords{Machine Learning, ICML}

  \vskip 0.3in
]

\printAffiliationsAndNotice{\icmlEqualContribution}
\begin{abstract}
    In this work, we present the first theoretical analysis of multi-agent imitation learning (MAIL) in linear Markov games where both the transition dynamics and each agent's reward function are linear in some given features. We demonstrate that by leveraging this structure, it is possible to replace the state-action level \emph{all policy deviation concentrability coefficient} \citep{freihaut2025rate} %
    with a concentrability coefficient defined at the feature level which can be much smaller than the state-action analog when the features are informative about \emph{states' similarity}. Furthermore, to circumvent the need for any concentrability coefficient, we turn to the interactive setting. We provide the first, computationally efficient, interactive MAIL algorithm for linear Markov games and show that its sample complexity depends only on the dimension of the feature map $d$. %
    Building on these theoretical findings, we propose a deep MAIL interactive algorithm which clearly outperforms BC on games such as Tic-Tac-Toe and Connect4.
    \looseness=-1
\end{abstract}

\section{Introduction}
\label{sec:introduction}

Recently, there has been growing interest in multi-agent imitation learning (MAIL) \citep{ramponi2023imitationmeanfieldgames,tang2024multiagentimitationlearningvalue,freihaut2025learningequilibriadataprovably, freihaut2025rate}. To ensure non-exploitable learning, these works focus on equilibrium solutions rather than value-based gaps. The consensus across this literature is that achieving approximate equilibrium behavior by learning from expert data is significantly harder than merely recovering the experts' value. For instance, \citet{tang2024multiagentimitationlearningvalue} provides the first hardness result and an error propagation analysis for $N$-player Markov games, showing that non-interactive MAIL can fail to minimize the exploitability of agents even when recovering the expert value is successful. This hardness has been further quantified by \citet{freihaut2025learningequilibriadataprovably, freihaut2025rate}, who provide the first sample complexity analysis for this setting. They establish the necessity of an \emph{all policy deviation concentrability coefficient} $\gC_{\max}$ for all non-interactive MAIL algorithms and subsequently introduce computationally and statistically efficient algorithms for interactive MAIL that bypass the dependence on $\gC_{\max}$. However, these contributions are restricted to tabular Markov games and are therefore applicable only to small state and action spaces. In this work, we take a significant step forward in studying function approximation in the context of MAIL, rendering our algorithms applicable to settings requiring neural network function approximation.

To facilitate the transfer to (linear) function approximation settings, it is crucial to understand the limitations of MAIL in the tabular case, particularly regarding the hardness quantity $\gC_{\max}.$ Intuitively, $\gC_{\max}$ measures the distributional shift between the states visited by a best response to an arbitrary policy and those covered by the observed Nash equilibrium. While the necessity of $\gC_{\max}$ was derived by \citet{freihaut2025rate}, their lower bounds rely on examples where a state outside the support of the observed Nash equilibrium cannot be connected to a similar state within the support of the expert data. Consequently, it remains an open question whether an agent that intelligently abstracts the state space can transfer the knowledge learned in observed states to similar but unseen ones, thereby bypassing the dependency on $\gC_{\max} $. Furthermore, the rate-optimal learning algorithm for interactive MAIL proposed by \citet{freihaut2025rate} relies on a reward-free warm-up phase, and it is unclear if this approach remains compatible with the function approximation setting. A detailed discussion of related work is provided in Appendix \ref{sec:relatedworks}.

To address these open questions and extend MAIL to the (linear) function approximation setting, we present the following contributions.
\textbf{(1)} In the setting of linear Markov games, we demonstrate the emergence of a new concentrability coefficient $\cphimax$ in the behavior cloning upper bound. This coefficient depends on the feature map $\phi$ and allows an agent to generalize across states given well-crafted features. Most importantly, it is upper bounded by and potentially much smaller than $\gC_{\max}$.
\textbf{(2)}  We present the first interactive MAIL algorithm for the linear function approximation setting. The key ingredient is a novel reward-free warm-up phase that scales effectively to function approximation settings and extends to the infinite horizon setting with minor adjustments.
\textbf{(3)}  Inspired by the linear function approximation results, we propose a deep MAIL algorithm for competitive environments and provide experimental evaluations for both the linear and deep MAIL settings.

\section{Preliminaries}
\label{sec:preliminaries}

This work focuses on the setting of Markov games. Notably, our results apply to $N$ player general-sum games, however for clarity of presentation in the main text, we focus on stationary two-player zero-sum Markov games (MGs) introduced next.

\paragraph{Finite Horizon two player zero-sum Markov games.} In general, a Markov game is defined as the tuple $\gG = (\cX, \cA, H, r, P, \initial)$, where $\cX$ is the state space, $\gA = \gA^1 \times \gA^2$ is the joint action space, which is the product space of the individual action spaces $\gA^1,\gA^2$, $H$ is the horizon, and  $P:\X\times\A^1\times\A^2\rightarrow \Delta_{\X}$ is a transition kernel. In other words, $P(x' \mid x,a^1,a^2)$ denotes the probability of landing in $x'$ from $x$ under the action pair $a^1,a^2$ and a reward $r^n: \X\times\A^1\times\A^2 \rightarrow \mathbb{R}$  for $n = 1,2 $ with $r^n(x,a^1,a^2) \in [-1,1].$ Additionally, we denote the initial state distribution $\initial \in \Delta_{\cX}.$ 
Moreover, we denote the  space of (possibly non-stationary) policies for each player as $\Pi^n: \X \times [H] \rightarrow \Delta_{\A^n}$ for $n=1,2$. 
In particular, a non stationary policy is a collection of $H$ mappings from states to distributions over $\A$, i.e. $\pi^n = \bcc{\pi^n_1, \dots, \pi^n_H}$ where $\pi^n_h(a|x)$ is the probability that player $n$ chooses action $a$ in state $x$ when this state is visited after $h$ steps. 
Since we assume that the game is \emph{zero-sum}, we have that $r^1(x,a^1,a^2) = - r^2(x,a^1,a^2) $ for all $x,a^1,a^2 \in \X\times\A^1\times\A^2$. We will also find convenient to use the game theoretic $-n$ notation to denote all the players but the one indexed by $n$. Clearly, in the two players setting $\pi^{-1}=\pi^2$ and conversely. The maximal cardinality of the action spaces is denoted by $A_{\max}:=\max_n\abs{\gA^n}.$

\paragraph{Policies, occupancy measures and value functions.}
Then, for any policy pair $(\pi^1,\pi^2)$ let us call a \emph{trajectory} the stochastic process $(X_1, A^1_1, A^2_1, \dots, X_H, A^1_H, A^2_H)$ where $X_1 \sim \initial$, $A^1_h \sim \pi_h^1(\cdot|X_h)$, $A^2_h \sim \pi_h^2(\cdot|X_h)$ and $X_{h+1} \sim P(\cdot|X_h, A^1_h, A^2_h)$ for each $h \in [H]$.
Furthermore, let us introduce the state visitation distribution at stage $h \in [H]$ and state $x \in \X$ induced by a policy pair $\pi^1,\pi^2$ as
\looseness=-1
\[\nu^{\pi^1,\pi^2}_h(x) := \mathbb{P}_{\pi^1,\pi^2}[X_h = x].\]
Similarly, we define the state actions occupancy measure as 
\[\mu^{\pi^1,\pi^2}_h(x,a^1,a^2) := \mathbb{P}_{\pi^1,\pi^2}[X_h = x, A^1_h = a^1, A^2_h = a^2].\]
Moreover, we define the state value function of the policy pair $\pi^1,\pi^2$ for the player indexed by $n\in \bcc{1,2}$ as
\[
V^{\pi^1,\pi^2}_{n,h}(x) = \mathbb{E}_{\pi^1,\pi^2}\bs{\sum^H_{h'=h} r^n(X_{h'}, \bcc{A^n_{h'}}^2_{n=1}) \mid X_h = x}.
\]
Moreover, we define the state action value function for the $n^{\mathrm{th}}$ player denoted as $Q^{\pi^1,\pi^2}_{n,h}(x, a^n)$ as 
\[
 \mathbb{E}_{\pi^1,\pi^2}\bs{\sum^H_{h'=h} r^n(X_{h'},\bcc{A^{n'}_{h'}}^2_{n'=1}) \mid X_h = x, A^n_{h} = a^n }.
\]

Additionally, we adopt the convention that when $h=1$ we drop the stage subscript. In particular, we have that $V^{\pi^1,\pi^2}_{n} := V^{\pi^1,\pi^2}_{n,1}$ and $Q^{\pi^1,\pi^2}_{n} := Q^{\pi^1,\pi^2}_{n,1}$ for all $n \in \bcc{1,2}$.
Next, we can introduce our solution concept which is the one of Nash Equilibria.
\paragraph{Nash Equilibria.}
A policy profile $\pi^1_{\mathrm{NE}},\pi^2_{\mathrm{NE}}$ is called a Nash equilibrium if no agent has interest in deviating from those strategies.
In particular, we have that $\pi_{\mathrm{NE}}:= \pi^1_{\mathrm{NE}},\pi^2_{\mathrm{NE}}$ is a $\varepsilon$-approximate Nash equilibrium if 
\[
\mathrm{NG}(\pi_{\mathrm{NE}}) := \max_{n \in \bcc{1,2}} \max_{\pi^{n} \in \Pi^{n}} \innerprod{\initial}{V^{\pi^n, \pi^{-n}_{\mathrm{NE}}}_n - V^{\pi_{\mathrm{NE}}}_n} \leq \varepsilon.
\]
where $\mathrm{NG}$ stands for Nash gap.
If $\varepsilon=0$, then $\pi_{\mathrm{NE}}$ is an (exact) Nash equilibrium. Following standard terminology, we will often refer to the left hand side above as the exploitability or Nash gap. Additionally, we introduce a best response set to a given policy $\pi^n$ for $n \in \{1,2\}$  by
\[
\mathrm{br}(\pi^n) := \argmax_{\pi^{-n} \in \Pi^{-n}} \langle \initial, V_n^{\pi^n, \pi^{-n}}\rangle.
\]
Note that in a Nash equilibrium both policies are best responses to each other.

\subsection{Linear Markov games}
In order to design computationally and statistically efficient algorithms under this setting we make the following structural assumption on the reward and the transitions of the game \citep{cui2023breakingcursemultiagentslarge, zhong2022pessimisticminimaxvalueiteration,wang2023breaking}. Our assumption is, however, significantly weaker than the ones in prior works because we require linearity of  transitions and rewards averaged by the expert policy. In contrast, prior works required linearity when the averaging policy can be chosen as any Markov policy \citep{cui2023breakingcursemultiagentslarge} or any    linear Markov policy \citep{wang2023breaking}.
\begin{assumption}[Stationary linear MGs]
\label{assumption:linear}
    For each player $n \in \bcc{1,2}$, there exists a known $d$-dimensional feature mapping $\phi: \cX \times \gA^n\to \bbR,$ such that for any state $x \in \cX$ and action $a \in \gA^n,$
    
    \resizebox{\linewidth}{!}{$
\begin{aligned}
&\sum_{a^{-n} \in \gA^{-n}} \pi_E^{-n}(a^{-n} \mid x)P(x' \mid x,a,a^{-n}) 
= \phi_n(x,a)^{\top}M_{-n}(x') \\
&\sum_{a^{-n} \in \gA^{-n}} \pi_E^{-n}(a^{-n} \mid s) r^n(x,a,a^{-n}) 
= \phi_n(x,a)^{\top} w_{-n}
\end{aligned}
$}

    where $M_{-n}: \cX \to \bbR^d$ and $w_{-n} \in \bbR^d$ are unknown to the players. Following the convention of \citet{jin2019provably, luo2021policy}, we assume $\norm{\phi(x,a)}2\leq 1\quad  \forall x \in \cX, a \in \gA^n, n \in \bcc{1,2}$ and that $\max_{n \in \bcc{1,2}}\max(\norm{M_{-n}}_2, \norm{w_{-n}}_2) \leq B$ for some $B \geq 1$. Last, as we are in a zero-sum setting, we have $\phi_1(x,a^1)^{\top} w_{2} = r^1(x,a^1,a^2) = - r^2(x,a^1,a^2) = - \phi_2(x,a^2)^{\top} w_{1}.$
\end{assumption}

To develop some intuition about the setting, we introduce an important particular case: Tabular Markov games, which have been considered in the previous closest related work \cite{freihaut2025learningequilibriadataprovably,freihaut2025rate}. 
\begin{example}\textbf{Tabular MGs are Linear MGs:} \label{ex:tabular}
    Let $d_1 = \abs{\gX}\abs{\gA^1}$, $d_2= \abs{\gX}\abs{\gA^2}$ and $\phi_1(x,a^1) = e_{(x,a^1)}$, $\phi_2(x,a^2) = e_{(x,a^2)}$ be the canonical basis in $\bbR^{d_1}$ and $\bbR^{d_2},$ then we recover tabular Markov games. For the detailed argument, we point the reader to  \citet[Example 1]{cui2023breakingcursemultiagentslarge}. \looseness=-1
\end{example}

Note that Assumption~\ref{assumption:linear} is also known as the \emph{independent linear Markov game} setup which has previously been studied by \citet{cui2023breakingcursemultiagentslarge}. While we consider only two players in the main text, our results transfer to the $N$-player setting\footnote{We will provide the proofs of our main results directly in the $N$-player setting (see Appendix~\ref{app:NplayersBC})}. Additionally, this assumption comes natural for Imitation Learning as each player is learning their policy individually, seeing the other agents as a part of the environment. The other potential assumption is a \emph{global} function approximation setting, where the feature map depends on all agents $\phi(x,a^1,a^2),$ see e.g. \citep{xie2020learning, huang2021generalfunctionapproximationzerosum}. However, translating this setting to the tabular case results in algorithms that scale exponentially in the number of players, which is known as the \emph{curse of multi-agents}.

\paragraph{Linearity of state action value functions.}
An important consequence of Assumption~\ref{assumption:linear} is that the state action value functions of all players are linear. This means that for policies $\pi^1, \pi^2$ and each player index $n \in \bcc{1,2}$ and stage $h \in [H]$, there exists an unknown vector $\theta^{\pi^1, \pi^2}_{n,h}$ such that\footnote{Without loss of generality we assume that the features vector are independent of the player index. One can imaging of getting such vector in general concatenating the features vector of each individual player.} 
\begin{align}
    &Q_{n,h}^{\pi^1, \pi^2}(x,a) = \phi(x,a)^{\top}\theta^{\pi^1, \pi^2}_{n,h},
\end{align}
and $\theta^{\pi^1,\pi^2}_{n,h}\in \Theta$ with $ \Theta = \bcc{ \theta \in \mathbb{R}^d ~~ | ~~\norm{\theta} \leq B_{\theta}}$ for some scalar $B_{\theta}$.
We will make crucial use of this fact in designing the decision space of our algorithms.
\subsection{Imitation Learning in MGs.} In Imitation Learning in MGs the learning algorithm is tasked with learning an approximate Nash equilibrium without knowing the reward function, the transition dynamics and the initial distribution of the underlying Markov game. The only way that the learner can access information about an \emph{expert} Nash equilibrium denoted via $\pi_E =(\pi^1_{\expert}, \pi^2_{\expert})$ is via a state action dataset $\mathcal{D}_{\expert} = \bcc{\mathcal{D}^1_{\expert},  \mathcal{D}^2_{\expert}}$. We distinguish two different settings for the generation of $\mathcal{D}_{\expert}$.

\textbf{Non-interactive:} The learning algorithm receives a dataset $\gD_E = \{ X^i_{E,h}, \{A^{n,i}_{E,h}\}^2_{n=1}\}^{\tauE}_{i=1}$ where each triplet is sampled from the expert occupancy measure $X^i_{E,h}, \{A^{n,i}_{E,h}\}^2_{n=1} \sim \mu^{\pi^1_{\expert}, \pi^2_{\expert}}_h$ for all $i \in [\tauE]$. Notation wise, we denote the datasets $\mathcal{D}^1_{\expert}=\{ X^i_{E,h}, A^{1,i}_{E,h} \}^{\tauE}_{i=1}$ and $\mathcal{D}^2_{\expert}=\{ X^i_{E,h}, A^{2,i}_{E,h}\}^{\tauE}_{i=1}$ which contain the joint states and only the first and second player actions respectively.

\textbf{Interactive MAIL:} The learning algorithm can query the expert Nash profile at any state encountered while interacting in the game.

The interactive setting is a much more powerful access model to the expert policy as it does not require that the states in which the expert is queried belong to the support of the expert state occupancy measure $\nu^{\pi^1_{\expert}, \pi^2_{\expert}}$.
\section{Main result for non-interactive setting}
\label{sec:non_interactive}

In this section we present our algorithms and main results for the non-interactive setting.
To ensure our algorithms can be implemented efficiently, we assume the expert dataset $\mathcal{D}_{\expert}$ is generated by a Nash equilibrium which can be recovered as the limit of the Nash equilibrium in a regularized game. Specifically, this equilibrium is obtained by augmenting each player's payoff with a strongly concave function (e.g., entropy) weighted by a regularization parameter $1/\eta$\footnote{
We consider this to be quite a weak assumption which includes several interesting equilibria but, for completeness, we provide an example of a NE breaking the assumption in Appendix~\ref{app:NplayersBC}.} (to be introduced shortly). 
In order to formally state the assumption we introduce, for $n \in \bcc{1,2}$, the policy class $\Pi^n_{\mathrm{softlin},h}$ defined as \begin{align*} 
\bigg\{ \exists \theta_{n,h} \in \Theta~\mid~
\pi^n_h(a \mid x)
= 
e^{\eta\, \phi(x,a)^\top \theta_{n,h} - \zeta_{n,h}(x)}
\bigg\}
\end{align*}
where $\zeta_{n,h} (x) = \log\sum_{a' \in \mathcal{A}^n}
\exp\!\lp\eta\, \phi(x,a')^\top \theta_{n,h}\rp$ ensure proper normalization of the policies in the class. Moreover, let us introduce the notations
\begin{align*}
    &\Pi^n_{\mathrm{softlin}} = \mathop{\times}^H_{h=1} \Pi^n_{\mathrm{softlin},h}, \Pi_{\mathrm{softlin}} = \mathop{\times}^N_{n=1} \Pi^n_{\mathrm{softlin}}, \\
    & \Pi^{-n}_{\mathrm{softlin}} = \mathop{\times}^N_{n'=1, n' \neq n} \Pi^{n'}_{\mathrm{softlin}}.
    \end{align*}
For the two player case, we set $N=2$ in the previous definitions. We are now ready to present our assumption on the expert NE.
\looseness=-1

\begin{assumption}
    \label{assumption:limitnash}
We assume that the expert policy collecting the data $(\pi^1_{\expert},\pi^2_{\expert})$ is in the set of limit points of the set $\Pi_{\mathrm{softlin}}$. That is, $\pi^n_{\expert,h} \in \lim_{\eta \rightarrow \infty} \Pi^n_{\mathrm{softlin}}$ for all players, i.e. for $n \in \bcc{1,2}$.
\end{assumption}

The above assumption says that the features of the linear MG should be expressive enough to approximately realize
the expert policy. However, our bounds for the non-interactive setting will depend also on  the concentrability coefficient (see Definition~\ref{def:feature_concentrability}), that requires the features to be \emph{not too expressive}, in the sense that they should 
not carry information about the state action pair which are irrelevant for describing the state action value function.
To make the claim formal, we define the concentrability coefficient as follows.  \looseness=-1

\begin{definition}
\label{def:feature_concentrability}
\textbf{Features concentrability coefficient.}
    For each player $n \in \bcc{1,2}$, we define the features expectation vector, %
    \[
    \phi^{\pi^{n}_{\star}, \pi^{-n}_E}_{h} = \sum_{x',a^n}  \phi(x', a^n) \pi^n_{\star,h}(a^n|x') \nu_{h}^{\pi^n_{\star},\pi^{-n}_{\expert}}(x'),
    \] and the $n^{\mathrm{th}}$-player features level concentrability coefficient, %
    $$ \cphimax^n= \max_{h \in [H]} \max_{\pi^{-n} \in \Pi^{-n}_\mathrm{softlin}} \max_{\pi_{\star}^{n} \in \mathrm{br}(\pi^{-n}) } \norm{\phi^{\pi^{n}_{\star}, \pi^{-n}_E}_h}_{\brr{\Lambda^{n}_{E,h}}^{-1}},$$
    where the expert features covariance matrix $\Lambda^{n}_{E,h}$ for the $n^\text{th}$ player is defined as
    $$\Lambda^{n}_{E,h} = \mathbb{E}\bs{ \phi(X_{E,h},A^n_{E,h})\phi(X_{E,h}, A^n_{E,h})^{\top} } + \lambda I,$$
    for $X_{E,h} \sim \nu^{\pi_{\expert}}_h$, $A^n_{E,h} \sim \pi^n_{\expert,h}(\cdot|X_{E,h})$ and $\lambda\geq0$.
    Finally, we define the features concentrability coefficient as the maximum between the concentrability of each player. Formally, we get
    $\cphimax := \max \{ \cphimax^{\pi^1}, \cphimax^{\pi^2}\}.$
\end{definition}

Next, we can present behavioral cloning and a new sample complexity analysis which will incur only a dependence on $\cphimax$ instead of the (always equal or larger) state action concentrability coefficient $\gC_{\max}$ defined by \citet{freihaut2025learningequilibriadataprovably} as 
\looseness=-1
\[
\gC_{\max} := \max_{n\in\bcc{1,2}}\max_{h \in [H]} \max_{\pi^{-n} \in \Pi^{-n}_\mathrm{softlin}} \max_{\pi_{\star}^{n} \in \mathrm{br}(\pi^{-n}) } \norm{\frac{\nu_h^{\pi^n_\star,\pi^n_{\expert}}}{\nu_h^{\pi_{\expert}}}}_{\infty} .
\]
\looseness=-1

\paragraph{Behavioral Cloning.}
For each player, Behavioral Cloning (BC) outputs the policy which maximizes the likelihood of the observed actions for that player.
That is, $\pi^n_{\mathrm{out}} = \argmax_{\pi \in \Pi^n_{\mathrm{softlin}}} 
\sum^{\tauE}_{i=1} \sum^H_{h=1} \log \pi_h ( A^{i,n}_{E,h}| X^n_{E,h})$ for $n = \bcc{1,2}$, and enjoys the following guarantees.
\begin{theorem} \textbf{Main result: non-interactive case.}
    \label{thm:featureBC}
        Let the feature concentrability coefficient and the the expert features covariances be given as defined in Defintion~\ref{def:feature_concentrability} with $\lambda = \tauE^{-1},$ where $\tauE$ is the total number of trajectories. Furthermore, assume that the Nash equilibrium that collected the dataset $\mathcal{D}_{\expert}$ satisfies Assumption~\ref{assumption:limitnash}. 
        Then, the output of BC, ran over the class $\Pi_{\mathrm{softlin}}$ with $\eta = \nicefrac{\log \tauE}{H}$, is an $\varepsilon$-approximate Nash equilibrium with probability of at least $1-\delta$ if the expert dataset $\mathcal{D}_{\expert}$ contains $\tau_{\expert} = \tilde{\mathcal{O}}(\frac{H^5 \gC^2_{\max,\phi} d B^2 \log(A_{\max}B_{\theta}\delta^{-1})}{\varepsilon^{2}})$ many trajectories.
    \end{theorem}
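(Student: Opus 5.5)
We decompose the Nash gap of $\piout=(\piout^1,\piout^2)$ into a per-stage policy mismatch against the expert, weighted by the best responder's feature occupancy. We then control that mismatch through the maximum-likelihood guarantee of BC, measured in the expert covariance norm.

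\textbf{Step 1 (reduction to best responses against $\piout^{-n}$).} Fix a player $n$ and let $\pi^n_\star\in\mathrm{br}(\piout^{-n})$; note $\piout^{-n}\in\Pi^{-n}_{\mathrm{softlin}}$, so it is admissible in Definition~\ref{def:feature_concentrability}. We write
\[
\innerprod{\initial}{V^{\pi^n_\star,\piout^{-n}}_n-V^{\piout}_n}
\]
as the sum of three differences: the value of $(\pi^n_\star,\piout^{-n})$ minus that of $(\pi^n_\star,\pi^{-n}_E)$; the value of $(\pi^n_\star,\pi^{-n}_E)$ minus that of $\pi_E$, which is $\le 0$ because $\pi_E$ is a NE; and the value of $\pi_E$ minus that of $\piout$. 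For the first difference, the performance difference lemma with player $n$'s policy fixed gives
\[
\sum_h \mathbb{E}_{X\sim\nu_h^{\pi^n_\star,\piout^{-n}}}\big[\langle Q,\pi^{-n}_{\mathrm{out},h}-\pi^{-n}_{E,h}\rangle\big].
\]
We prefer expectations under $\nu_h^{\pi^n_\star,\pi^{-n}_E}$, so we instead telescope with roll-in by $\pi_E^{-n}$ and roll-out by $\piout^{-n}$. Each stage term then becomes $\mathbb{E}_{X\sim\nu_h^{\pi^n_\star,\pi^{-n}_E},\,a\sim\pi^n_\star}$ of a bounded function of $(X,a)$. By Assumption~\ref{assumption:linear}, this function is linear in $\phi(X,a)$: it is the difference of the $\pi_E^{-n}$-averaged backup and the $\piout^{-n}$-averaged backup, and the former is linear by assumption. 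It is therefore equal to $\langle\phi^{\pi^n_\star,\pi^{-n}_E}_h,\theta_h\rangle$ for some $\theta_h$. Cauchy--Schwarz in the $\Lambda^n_{E,h}$ geometry bounds each stage by $\cphimax\,\|\theta_h\|_{\Lambda^n_{E,h}}$. The third difference is handled by the standard BC bound, since it lives on expert-visited states.

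\textbf{Step 2 (the covariance norm is an expert-state error).} Next we show that $\|\theta_h\|^2_{\Lambda^n_{E,h}}\lesssim \mathbb{E}_{\nu^{\pi_E}_h}\big[(\phi^\top\theta_h)^2\big]+\lambda\|\theta_h\|^2$. Because $\phi^\top\theta_h$ is bounded by $H$ times the pointwise policy discrepancy of player $-n$, this gives
\[
\|\theta_h\|^2_{\Lambda^n_{E,h}}\lesssim H^2\,\mathbb{E}_{\nu^{\pi_E}_h}\big[\TV(\piout^{-n}_h,\pi^{-n}_{E,h})^2\big]+\lambda H^2B^2.
\]
This is the main obstacle. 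The roll-out uses $\piout^{-n}$, so linearity of the backed-up quantity relies only on the expert-averaged linearity from Assumption~\ref{assumption:linear}. I would try to define $\theta_h$ through the expert-averaged transition, $M_{-n}$ applied to the value under $\piout^{-n}$, plus a policy-difference term. I would verify that the policy-difference term is the only non-linear part, and bound it pointwise on expert states, where it is controlled directly by the TV term rather than through $\phi$.

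\textbf{Step 3 (MLE rate and the choice of $\eta$).} By Assumption~\ref{assumption:limitnash}, some member of $\Pi^n_{\mathrm{softlin}}$ with $\eta=\log\tauE/H$ approximates $\pi_E$ up to an $O(1/\tauE)$-type misspecification. The standard MLE generalization bound applies with a covering of $\Theta$, whose log-size is $d\log(B_\theta\eta\tauE)$. It yields, with probability $1-\delta$,
\[
\sum_h\mathbb{E}_{\nu^{\pi_E}_h}\big[\mathrm{Hellinger}^2(\piout_h,\pi_{E,h})\big]\lesssim \frac{d\log(A_{\max}B_\theta\tauE/\delta)}{\tauE},
\]
and this controls the squared TV. Finally we combine the steps: sum over $h$ with Cauchy--Schwarz (costing a $\sqrt H$), include the factor $H$ from value ranges and the $H$-fold stage count, take $\lambda=1/\tauE$, and apply a union bound over players. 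This gives a Nash gap of order
\[
H^{5/2}\cphimax B\sqrt{d\log(\cdot)/\tauE},
\]
and inverting it yields the stated $\tauE$.
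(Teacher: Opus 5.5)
\textbf{Gap in Steps 1--2.} Your three-term decomposition matches the paper's, but the linearization in Step 1 does not hold under Assumption~\ref{assumption:linear}, and Step 2 does not repair it. After telescoping with roll-in $\pi^{-n}_E$ and roll-out $\piout^{-n}$, the stage-$h$ integrand is
\[
g_h(x,a^n)=\sum_{a^{-n}}\bigl(\pi^{-n}_{\mathrm{out},h}-\pi^{-n}_{E,h}\bigr)(a^{-n}\mid x)\,Q_h^{\pi^n_\star,\piout^{-n}}(x,a^n,a^{-n}).
\]
Only the $\pi^{-n}_E$-averaged half of this is linear in $\phi(x,a^n)$. The assumption says nothing about $\piout^{-n}$-averaged rewards or transitions.

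Your Step 2 split writes the $\piout^{-n}$-averaged backup as the expert-averaged backup ($w_{-n}$ plus $M_{-n}$ applied to the roll-out value) plus a policy-difference term. Under that split the linear pieces cancel, and $g_h$ is exactly the policy-difference term, so nothing linear is left. In general, then, no $\theta_h$ satisfies $\phi(x,a)^\top\theta_h=g_h(x,a)$, and the identity $\|\theta_h\|^2_{\Lambda^n_{E,h}}=\mathbb{E}[(\phi^\top\theta_h)^2]+\lambda\|\theta_h\|^2$ cannot be turned into an expert-state TV bound.

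Your fallback, bounding the nonlinear term pointwise on expert states, also fails. That term is integrated against $\nu_h^{\pi^n_\star,\pi^{-n}_E}$, the deviator's occupancy, which is exactly where a change of measure is needed. Your same-step argument would go through if the game were linear under arbitrary opponent policies, as in the appendix's $N$-player assumption, but the main-text assumption is deliberately weaker.

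\textbf{Missing idea: change measure on states, one step earlier.} Do not linearize the stage integrand; change measure on the state distribution with a one-step lag.
\begin{itemize}
\item First bound $|g_h(x,a^n)|\le H\,\TV(\pi^{-n}_{\mathrm{out},h},\pi^{-n}_{E,h})(x)=:W_h(x)$ pointwise, which is valid under any distribution.
\item Linearity of the expert-averaged kernel gives $\nu_h^{\pi^n_\star,\pi^{-n}_E}(x)=\langle\phi_{h-1}^{\pi^n_\star,\pi^{-n}_E},M_{-n}(x)\rangle$. Hence $\mathbb{E}_{\nu_h^{\pi^n_\star,\pi^{-n}_E}}[W_h]=\langle\phi_{h-1}^{\pi^n_\star,\pi^{-n}_E},\theta\rangle$ with $\theta=\sum_x M_{-n}(x)W_h(x)$.
\item The function $\phi(x',a)^\top\theta=\mathbb{E}_{x\sim P(\cdot\mid x',a,\pi^{-n}_E)}[W_h(x)]$ is linear for every state function $W_h$, including one built from $\piout$.
\item Jensen, together with the fact that pushing the expert's step-$(h-1)$ state-action distribution through the expert-averaged kernel yields $\nu_h^{\pi_E}$, gives $\|\theta\|^2_{\Lambda^n_{E,h-1}}\le\mathbb{E}_{\nu_h^{\pi_E}}[W_h^2]+\lambda B^2\max_x W_h(x)^2$.
\end{itemize}
This is Lemma~\ref{lemma:change_of_measure}. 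The shift to $h-1$ is absorbed by the maximum over $h$ in $\cphimax$, and $h=1$ needs no change of measure.

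With this in place, your Step 3 closes the argument, provided you use a misspecification-robust MLE bound rather than the well-specified one. That bound carries the density-ratio factor $\log B_{\mathrm{ratio}}\lesssim\log A_{\max}+\eta H$, and $\eta=\log\tauE/H$ balances it against the misspecification error.
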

At a technical level, the most important observation is to exploit the linearity of the state occupancy measures in  $M_{-n}$ to perform a change of measure argument at features level rather than at the state action level. The formal statement of this step is in Lemma~\ref{lemma:change_of_measure}. 
We make some important remarks in the following.

\paragraph{Rate optimality and necessity of $\cphimax$.}
Despite its simplicity, BC is known to be minimax optimal with respect to $\varepsilon$ and that the polynomial dependence in $\cphimax$ is necessary for each non-interactive algorithm. This fact has been shown for BC in tabular Markov games by \citet{freihaut2025rate}
and such result continues to hold for Linear MGs under the light of Example~\ref{ex:tabular}. Indeed, under one hot features, the choice of $\lambda=0$ and deterministic 
occupancy measure $\mu^{\pi^{n}_{\star}, \pi^{-n}_E}_h$ for all $n \in \bcc{1,2}$\footnote{Notice that the lower bound construction in \cite{freihaut2025rate} satisfies these requirements.} we have that $\cphimax$ reduces to 
 $\gC_{\max}$ whose necessity is implied by  \citet[Theorem 3.1]{freihaut2025rate}.

\paragraph{Which are good features for BC?}
On the positive side, we notice that for some features, beyond the tabular case, $\cphimax$ can be much smaller than $\gC_{\max}$. 
In order to understand why this is the case, let us recall that in the tabular case, the concentrability coefficient $\gC_{\max}$ 
is infinite when there exist a deviation $\pi^{n}_{\star}$ such that when playing against the Nash policy for the other player
induces an occupancy measure $\nu^{\pi^{n}_{\star}, \pi^{-n}_E}_h$ which has support 
on states which have zero mass under $\nu_h^{\pi^n_E, \pi^{-n}_E}$. Under tabular features, there is no information about how to act in those states.
However, under the linear MG structure we can observe generalization across states when the features of two states are not orthogonal.
In this way, the algorithm can have information about how to act even in states which do not appear 
in the expert dataset if they have features which are similar enough to expert states. Such similarity is formally measured by 
the feature level concentrability $\cphimax$.

Let us illustrate the concept above with a simple case of maximally similar features, i.e. $\phi(x,a^n) = c \in (0,1]$ for all $x\in \X$, $n \in \bcc{1,2}$ and $a^n \in \A^n$.
For simplicity, let us set $\lambda=0$ since $c>0$ ensures that the inverse covariance matrix exists and equals $(\Lambda^n_{E,h})^{-1} = c^{-2}$. Therefore,  $\cphimax = \max_{n \in \bcc{1,2}
}\max_{x, a^n} \|\phi(x,a^n)\|_{(\Lambda^n_{E,h})^{-1}} = \sqrt{c c^{-2} c} =1$.
Therefore, in this example, $\cphimax$ attains its minimum possible value while at the same time $\gC_{\max}$ is possibly infinite.

\emph{However, are the constant features good features ? }The answer is clearly no. Indeed, these features are not 
\emph{expressive enough} to satisfy the Linear MG and expert realizability assumption (i.e. Assumption~\ref{assumption:limitnash} and Assumption~\ref{assumption:linear})
unless the next state is always sampled from the same distribution, i.e., ignoring the conditioning on current state action. All in all, we make the following observation regarding the best choice for the features.
 
\begin{tcolorbox}[
  colback=blue!10!white,
  colframe=blue!80!black,
  width=\linewidth,
  before skip=10pt, 
  after skip=10pt   
]
\textbf{\textcolor{blue}{Take Away:}} The best features for BC are the ones that satisfy Assumption~\ref{assumption:linear} with lowest values of $\gC_{\phi,\max}$, i.e. 
the ones that allow the best generalization across states, while being expressive enough to realize the Linear MG assumption.
\end{tcolorbox}
Moreover, the best features are usually not one-hot encoding. Indeed, we show in our experiments (see Section~\ref{sec:experiments}) that it is usually possible to find features 
which allows BC to improve upon its own performance with tabular features. 

\textbf{Is the interactive setting still needed in Linear MGs ? } Although, smaller or equal than $\mathcal{C}_{\max}$, also $\mathcal{C}_{\phi,\max}$ can be very large or unbounded if there exists a best response which can generate an expected feature vector pointing in a direction poorly covered by the expert dataset. In such situations, it is highly desirable to deploy algorithms with concentrability-free guarantees. In light of the lower bound of \citet{freihaut2025rate}, such algorithms can exist only in the interactive setting, which we introduce next.

\section{Interactive Linear MAIL}
\label{sec:interactive}

In the last section, we have seen that BC can effectively minimize the Nash gap in linear Markov games if the features are well designed so that (i) the value of the concentrability coefficient is small and (ii) the features are expressive enough to satisfy Linear MGs and expert realizability. 
Coming up with good features is highly environment-dependent and far from trivial in general.

This motivates interactive imitation learning, under which we can design algorithms which effectively minimize the Nash gap even if the features level concentrability coefficient is unbounded. 
\subsection{The algorithm: LSVI-UCB-ZERO-BC}
Our algorithm is built upon the scheme of combining reward-free RL and imitation learning, first introduced by \citet{freihaut2025rate} in the tabular case. In particular, the algorithm can be composed into two main phases:
\begin{itemize}
    \item \textbf{Exploration phase:} During this phase, we construct an \emph{exploratory} dataset for each player $\mathcal{D}^{1}$ and $\mathcal{D}^{2}$. The goal is to collect expert actions from the perspective of player $n$ under any possible deviation  of the opponent, i.e. player $-n$. Interestingly, we will present an efficient implementation of this idea via an appropriate instantiation of a no-regret learner rather than looping explicitly over all the possible deviations,  as done by \citet{tang2024multiagentimitationlearningvalue}, which is computationally inefficient. \looseness=-1
    \item \textbf{Imitation Phase:} After collecting $\mathcal{D}^n$ for $n \in \bcc{1,2}$, we simply apply behavioral cloning on this newly created datasets.
\end{itemize}

Before moving to a more detailed description of each phase, we state the sample complexity guarantees for the resulting algorithm given in Algorithm~\ref{alg:interactive_mail}.
\begin{theorem}
    \label{thm:interactive_main} \textbf{Main result: interactive case.}
    Let $\piout = (\piout^1, \piout^2)$ be the output of Algorithm~\ref{alg:interactive_mail} ran setting $K = \tilde{\mathcal{O}}\brr{\frac{H^6 d^4 B^4 \log( A_{\max}  B_\theta \delta^{-1})}{\varepsilon^2}}$, then $\piout$ is an $\varepsilon$-approximate Nash equilibrium with probability at least $1-\delta$.
    The total number of expert queries is $KH = \tilde{\mathcal{O}}\brr{\frac{H^7 d^4 B^4 \log(B_\theta A_{\max} \delta^{-1})}{\varepsilon^2}}.$ 
\end{theorem}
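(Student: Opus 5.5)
The proof splits the Nash gap into two pieces: a \emph{coverage} term, controlled by the exploration phase, and an \emph{estimation} term, controlled by behavioral cloning on the collected datasets.

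\textbf{Step 1: per-player reduction.} Fix player $n$ and a best response $\pi^n_\star \in \mathrm{br}(\piout^{-n})$. Let $\Delta_h(x) := \norm{\piout^{n'}_h(\cdot|x) - \pi^{n'}_{\expert,h}(\cdot|x)}_1$ for each player $n'$. Adding and subtracting $V^{\pi^n_\star,\pi^{-n}_{\expert}}_n \le V^{\pi_{\expert}}_n$ (which holds because $\pi_{\expert}$ is a NE), I bound
\[
\innerprod{\initial}{V^{\pi^n_\star,\piout^{-n}}_n - V^{\piout}_n}
\]
by two simulation-lemma differences. These are expected policy mismatches of $\piout^{-n}$ versus $\pi^{-n}_{\expert}$ along $\nu^{\pi^n_\star,\pi^{-n}_{\expert}}_h$, and of $\piout^{n}$ versus $\pi^n_{\expert}$ along $\nu^{\piout}_h$. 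Each term is at most $H\sum_h \mathbb{E}_{x\sim \nu_h}[\Delta_h(x)]$ for the appropriate state distribution. So it suffices to control the BC error of each player under state distributions generated by some policy of player $n$ playing against the expert of $-n$, or by $\piout$ itself.

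\textbf{Step 2: exploration via reward-free LSVI-UCB.} In the exploration phase, for player $n$ we run an optimistic LSVI-UCB learner over the linear MDP induced by fixing $\pi^{-n}_{\expert}$. By Assumption~\ref{assumption:linear} this MDP is linear in $\phi$. The learner uses the bonus $\beta\norm{\phi(x,a)}_{(\Lambda^k_h)^{-1}}$ as its reward, and at each visited state it queries the expert actions. Using the standard elliptical potential lemma together with the self-normalized concentration from \citet{jin2019provably}, with $\beta=\tilde{O}(dHB)$, I show the following: after $K$ episodes, for every policy $\pi^n$,
\[
\sum_h \mathbb{E}_{\pi^n,\pi^{-n}_{\expert}}\bs{\norm{\phi(X_h,A_h)}_{(\Lambda^K_h)^{-1}}} \lesssim \tilde{O}\brr{H^2 d^{3/2} B/\sqrt K}.
\]
This follows because the optimistic value bounds the maximal bonus-value over all policies, and the average of these optimistic values is bounded by the sum of realized bonuses. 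This step is the main obstacle. The bonus-reward problem changes across episodes, so the concentration argument needs a uniform covering over the class of value functions. In addition, the deviations must range over all of $\Pi^n$ (or at least over best responses), and they also need to include the distribution generated by $\piout$. To handle this, I would take $\piout^n$ in the soft-linear class and use the fact that the bound above holds for every policy.

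\textbf{Step 3: BC on the exploratory data.} Run BC over $\Pi_{\mathrm{softlin}}$ with $\eta=\log K/H$ on $\mathcal{D}^n$, exactly as in Theorem~\ref{thm:featureBC}. The MLE generalization bound (log-covering of $\Theta$ of order $d\log(B_\theta K)$, plus the approximation bias from Assumption~\ref{assumption:limitnash}) gives a squared Hellinger or TV error of order $\tilde{O}(d\log(A_{\max}B_\theta/\delta)/K)$ on the empirical design. I then repeat the feature-level change of measure of Lemma~\ref{lemma:change_of_measure}. By linearity of $\nu_h$ in $M_{-n}$, the error under any $\nu^{\pi^n,\pi^{-n}_{\expert}}_h$ is bounded by Cauchy–Schwarz, giving the empirical $\Lambda^K_h$-weighted error times the expected feature norm $\norm{\phi}_{(\Lambda^K_h)^{-1}}$. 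The second factor is controlled by Step 2 rather than by $\cphimax$. Combining the three steps yields a Nash gap of order $\tilde{O}(H^3 d^2 B^2\sqrt{\log(\cdot)/K})$. Setting this to $\varepsilon$ gives the stated $K$, and there are $KH$ expert queries, one per step.
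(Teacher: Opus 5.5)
Your overall architecture is the paper's (Lemmas~\ref{lemma:bound_error}, \ref{lemma:features_norm_go_to_zero_main} and~\ref{lemma:MLE}). However, Step~1, together with the way you close it at the end of Step~2, contains a step that fails.

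You bound $\innerprod{\initial}{V^{\pi_{\expert}}_n - V^{\piout}_n}$ by a policy mismatch taken along $\nu^{\piout}_h$. You then claim the exploration guarantee covers this distribution because $\piout^n\in\Pi^n_{\mathrm{softlin}}$ and the Step~2 bound ``holds for every policy''. It does not. That bound, and the feature-level change of measure that turns it into a bound on BC error, only concern state distributions in which the frozen player follows the expert. This is where linearity comes from: Assumption~\ref{assumption:linear} only makes the kernel averaged over $\pi^{-n}_{\expert}$ equal to $\phi(x,a)^\top M_{-n}(x')$. It is also the distribution under which the frozen player's BC data were collected.

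Concretely, the BC error of player $-n$ is controlled only along $\nu^{\pi^n,\pi^{-n}_{\expert}}_h$, and that of player $n$ only along $\nu^{\pi^n_{\expert},\pi^{-n}}_h$, for arbitrary $\pi^n$, resp.\ $\pi^{-n}$. Under $\nu^{\piout}_h$ neither player follows the expert, so neither error is covered. Whether $\piout^n$ lies in $\Pi^n_{\mathrm{softlin}}$ plays no role here. Note also that $V^{\pi_{\expert}}_n - V^{\piout}_n$ involves the mismatch of both players, not only of $\piout^n$. You could reduce $\nu^{\piout}_h$ to a covered distribution by error propagation, but this costs an extra factor $H$ and would not reproduce the stated $K$.

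The repair is simple, and there are two ways to do it.
\begin{itemize}
\item Apply the simulation lemma to $V^{\pi_{\expert}}_n - V^{\piout}_n$ in the other direction, i.e.\ with $Q^{\piout}$ and expectation under $\nu^{\pi_{\expert}}_h=\nu^{\pi^n_{\expert},\pi^{-n}_{\expert}}_h$. Both exploration runs cover this distribution; this is how $T_2$ is treated in the proof of Theorem~\ref{thm:featureBC_n}.
\item As in the paper's proof of Lemma~\ref{lemma:bound_error}, bound the Nash gap by the duality gap $\innerprod{\initial}{V^{\pi^1_\star,\piout^2} - V^{\piout^1,\pi^2_\star}}$. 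Insert $V^{\pi_{\expert}}$ and use the equilibrium property on both sides, so that only $\nu^{\pi^1_\star,\pi^2_{\expert}}_h$ and $\nu^{\pi^1_{\expert},\pi^2_\star}_h$ appear.
\end{itemize}

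After this fix, your Steps~2 and~3 follow the paper and give the same $\widetilde{\mathcal{O}}(H^3d^2B^2/\sqrt{K})$ rate. In Step~2 you also need the monotonicity $\norm{\phi}_{(\Lambda^K_h)^{-1}}\le\norm{\phi}_{(\Lambda^k_h)^{-1}}$ to pass from the episode average to the final matrix.

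One point in Step~3 should be made precise. The change of measure produces $\sum_k\sum_x P^{\pi^n_{\expert}}(x\mid X^{-n,k}_{h-1},A^{-n,k}_{h-1})\,W^2(x)$, with the data-dependent $W=\TV(\pi^n_{\mathrm{out},h},\pi^n_{\expert,h})$. This is not an error at the realized states. The paper links it to the MLE guarantee on $\sum_k\mathbb{E}_{\nu^{\pi^n_{\expert},\pi^{-n,k}}_h}[W^2]$ via Lemma~\ref{lemma:fast_concentration} and a cover of $\mathcal{W}_h$ (Lemma~\ref{lemma:covering}). Alternatively, you can run the martingale MLE bound with the filtration that stops at $(X^{-n,k}_{h-1},A^{-n,k}_{h-1})$, which controls this quantity directly.
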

The main consequence of Theorem~\ref{thm:interactive_main} is explained in the following take away.
\begin{tcolorbox}[
  colback=blue!10!white,
  colframe=blue!80!black,
  width=\linewidth
]
\textbf{\textcolor{blue}{Take Away:}} In the interactive setting, Algorithm~\ref{alg:interactive_mail} avoids the dependence on $\cphimax$. Moreover, the sample complexity scales only with the features dimension $d$ and not with the number of states of the game. %
\end{tcolorbox}
A remaining concern can be that in long horizon problems learning non-stationary policies as in Algorithm~\ref{alg:interactive_mail} can create memory issues. For such situations, modeling the game with the discounted infinite horizon framework as done by \citet{freihaut2025learningequilibriadataprovably} is more attractive. 

\begin{algorithm}[tb]
   \caption{LSVI-UCB-ZERO-BC}
\label{alg:interactive_mail}
\begin{algorithmic}
   \STATE \textbf{Input:} Horizon $H$, Number of episodes $K$.
   \STATE \textbf{Output:} Learned policy pair $ \piout =(\piout^1, \piout^2)$.
   \STATE \textcolor{blue}{\textbf{\% Loop over players:}}
   \FOR{$n \in \bcc{1,2}$}
   \STATE \textcolor{blue}{\textbf{\% Exploration Phase:}}
   \STATE Define the MDP $\mathcal{M}_n$ where the policy of player $n$ is fixed to $\pi^n_E$.
   \STATE Create the dataset for player $n$, updating the policy of player $-n$: $$\mathcal{D}^{n} \leftarrow \text{LSVI-UCB-ZERO}(\mathcal{M}_n, \texttt{active}=-n ).$$
   \STATE \textcolor{blue}{\textbf{\% Imitation Phase:}}
   \STATE Run BC on dataset $\mathcal{D}^n$: 
   \[ \piout^n \leftarrow \arg\min_{\pi^n \in \Pi_{\mathrm{softlin}}^n} \sum_{(X, A) \in \mathcal{D}^n} -\log \pi^n(A|X) \]
   \ENDFOR
\end{algorithmic}
\end{algorithm}

\begin{algorithm}[tb]
   \caption{LSVI-UCB-ZERO $(\mathcal{M}_n, \texttt{active}=-n )$}
   \label{alg:explore-lsvi-ucb}
\begin{algorithmic}
\STATE Set $\beta = \widetilde{\mathcal{O}}(d H \log (1/\delta) )$
   \FOR{$k=1, \dots, K$}
    \FOR{$h=1, \dots, H$}
   \STATE $X^{-n,k}_{h} \sim \nu^{\pi^{-n,k} \pi^n_{E}}_h$, $A^{-n,k}_{h} \sim \pi^{-n,k}_{h} (X^{-n,k}_{h})$
   \STATE $ A^{n,k}_{E,h} \sim \pi^n_{E,h} $.
\ENDFOR
       \STATE $V_{H+1}^k(\cdot) \leftarrow 0$, $\Lambda_{-n,h}^0 = I$.
       \FOR{$h=H, \dots, 1$}
       \STATE $ \phi^{-n,k}_{h} = \phi(X^{-n,k}_{h}, A^{-n,k}_{h})$.
           \STATE $\Lambda^{-n,k}_h = \Lambda^{-n,k-1}_{h} + \phi^{-n,k}_{h} (\phi^{-n,k}_{h})^T$.
           \STATE $w_h^k = (\Lambda_h^k)^{-1} \sum_{\tau=1}^{k-1} \phi^\tau_{-n,h} \cdot V^k_h(X_{h}^{-n,\tau})$.
           \vspace{1mm}
           \STATE \textbf{ \textcolor{blue}{\% Update with} \textcolor{orange}{zero reward} \textcolor{blue}{and} \textcolor{purple}{larger bonus}}\begin{align*} Q_h^k(x,a) = &\min\{ H, \textcolor{orange}{0} + (w_h^k)^\top\phi(x,a) \; \\&  + \textcolor{purple}{(\beta + 1) \norm{\phi(x,a)}_{(\Lambda^{-n,k}_h)^{-1}}} \}\end{align*}
           \STATE $V_h^k(s) \leftarrow \max_{a \in \mathcal{A}^{-n}} Q_h^k(x,a)$.
           \STATE $\pi^{-n, k}_{h}(x) \leftarrow \arg\max_{a \in \mathcal{A}^{-n}} Q_h^k(x,a)$.
       \ENDFOR
   \ENDFOR
   \STATE \textbf{Return:} expert dataset $\mathcal{D}^n =  \bcc{X^{-n,k}_{h}, A^{n,k}_{E,h}}^K_{k=1}$.
\end{algorithmic}
\end{algorithm}
\paragraph{Extension to the infinite horizon setting.}
Interestingly, the design philosophy of LSVI-UCB-ZERO-BC extends to the infinite horizon setting. Indeed, we can just replace LSVI-UCB with an algorithm that achieves optimal regret bounds in discounted linear MDPs and run it with zero reward and slightly larger bonuses. In particular, we use RMAX-RAVI-LSVI-UCB \cite{moulin2025optimistically}. This enables us to answer affirmatively the open question of \citet{freihaut2025rate} of achieving rate optimal bounds in the discounted setting.  For this extension, we need to assume that $\langle\phi, \mathbf{1} \rangle = 1,$ which can be achieved by features normalization. The formal results are presented in Appendix~\ref{app:infinite}. \looseness=-1

We now describe each phase in detail.

\paragraph{The exploration phase.}
In prior MAIL works \citep{freihaut2025rate}, the exploration phase is handled instantiating $X H$ different RL problems each of which is responsible to learn how to reach a particular state at a specific stage, if possible. While this approach can be generalized to the linear setting  \cite{pmlr-v162-wagenmaker22b},  we opt for a simpler alternative explained next.
In particular, recall that in this phase the agent $-n$ acts in a linear MDP induced by fixing the policy of the opponent, i.e. of the $n^{\mathrm{th}}$ player, to a Nash $\pi^n_E$. Then, we can use a no-regret algorithm for linear MDPs instantiated for the reward sequence $\{r^{-n,k}_{h}(x,a^{-n})\}^K_{k=1}$ such that $ r^{-n,k}_{h}(x,a^{-n}) = \|\phi(x,a^{-n})\|_{(\Lambda^{-n,k}_{h})^{-1}}$ where $\Lambda^{-n,k}_{h}$ is the covariance matrix of the data collected so far (see Algorithm~\ref{alg:explore-lsvi-ucb} for the detailed definition). We emphasize that this reward sequence is synthetic and used only for exploration, independent of the reward of the underlying MG. %
This idea is inspired by the first reward-free exploration algorithm for linear MDPs \cite{DBLP:journals/corr/abs-2006-11274}.
We dubbed this procedure LSVI-UCB-ZERO because running LSVI-UCB with the rewards described above can be seen as running LSVI-UCB with zero everywhere reward and (slightly) larger bonus. Indeed, as highlighted in the update for $Q^k_h$ in Algorithm~\ref{alg:explore-lsvi-ucb} the weighted feature norm that serves as bonus is multiplied by $(\beta + 1)$ rather than $\beta$ as in the standard analysis of LSVI-UCB \cite{jin2019provably}.
This name takes inspiration from UCB-ZERO \cite{zhang2020task}, which made the same observation in the tabular setting in the context of reward-agnostic RL. Algorithm~\ref{alg:explore-lsvi-ucb} gives the pseudocode. \looseness=-1

Notice that while running this phase, we need interactive access to the expert $\pi^n_E$. Indeed, let us assume that player $1$ is learning while the second player's policy is fixed to $\pi^2_E$. Then, at each round $k$ and stage $h$ from the current state\footnote{The superscript $1$ hereafter denotes the fact that $X^{1,k}_h$ is the state reached at stage $h$ of episode $k$ when the player $1$ is updating the policies and $2$ is frozen and follows the expert policy. } $X^{1,k}_h$ we sample an action from the no regret learner, i.e. $A^{1,k}_{h}$ and one action from the expert opponent policy denoted via $A^{2,k}_{E, h} \sim \pi^2_{E,h}(\cdot|X^{1,k}_{h})$. Finally, the next state is sampled as $X^{1,k}_{h+1} \sim P(\cdot|X^{1,k}_{h}, A^{1,k}_{h}, A^{2,k}_{E, h} )$. 

Before leaving this phase we generate the dataset to be imitated by the second player at stage $h$ as $\mathcal{D}^2_h = \{X^{1,k}_h, A^{2,k}_{E, h} \}^K_{k=1}$. This dataset is passed to the next phase while the action sequences $\{A^{1,k}_{h}\}_{k,h}$ is discarded. The analogous procedure with inverted roles is run for the other player.  \looseness=-1

\paragraph{The imitation phase.}
In the imitation phase, the player with index $n$, maximizes the likelihood of the dataset $\mathcal{D}^n = \cup^H_{h=1}\mathcal{D}^n_h$, 
\[
\piout^n = \argmax_{\pi \in \Pi^n_{\mathrm{softlin}}} 
\sum^{K}_{k=1} \sum^H_{h=1} \log \pi_h ( A^{n,k}_{E,h}| X^{-n,k}_{h}).
\]
The procedure exactly matches the BC procedure but with the dataset generated by the exploration phase. The full two-phase algorithm is outlined in Algorithm~\ref{alg:interactive_mail}.

\begin{remark}
Notice that Algorithms~\ref{alg:interactive_mail},\ref{alg:explore-lsvi-ucb} are presented here for two players games, therefore the notation $-n$ stands for a single player in this case. For games with more than two players those algorithms are not directly applicable but the same conceptual ideas apply. We present the algorithm for the $N$ players case in Algorithm~\ref{alg:interactive} in Appendix~\ref{appendix:proof_N_interactive}.\end{remark}

Having described the algorithm, we present a sketch of the analysis in the next section.

\subsection{Analysis Sketch}
While the algorithmic components are similar to the ones of the tabular case, the required analysis differs significantly. In the tabular case \cite{freihaut2025rate}, the exploration dataset covered all significant states, which enabled a change of measure at states level and then enabled an application of classical BC results for the second case. This time we have to perform the change of measure at features level in order to avoid dependence on the number of states. In particular, for each player $n \in \bcc{1,2}$ the change of measures will depend on the covariance feature matrices $\Lambda^{-n,K}_{h}$ for $n=1,2$ constructed after $K$ iterations of Algorithm~\ref{alg:explore-lsvi-ucb}. 

\begin{restatable}{lemma}{lembounderror}
\label{lemma:bound_error}
    Let $\Lambda^{-n,K}_{h}$ be the output covariance matrix and $\bcc{\pi_k^{-n}}^K_{k=1}$ be the sequence of policies generated by Algorithm~\ref{alg:explore-lsvi-ucb} invoked with $\mathrm{active}=-n$, i.e. $\Lambda^{-n,K}_{h} = \sum^K_{k=1} \phi(X^{-n,k}_{h},A^{-n,k}_{h}) \phi(X^{-n,k}_{h},A^{-n,k}_{h})^\top +  I$.
    Then, the Nash gap for $\piout$, the output policy of Algorithm~\ref{alg:interactive_mail}, can be upper bounded, with probability at least $1-\delta$, as 
\begin{align}
    \mathrm{NG}(\piout) & \leq H  \sum^H_{h=1} \max_{\substack{ n \in \bcc{1,2}\\ \pi^{-n} \in \Pi^{-n}}} \norm{\phi_{h}^{\pi^{n}_E,\pi^{-n}}}_{(\Lambda^{-n,K}_{h})^{-1}} \\
     & \phantom{=}\cdot\mathcal{O}\brr{\sqrt{\Delta\TV^2_{n,h}(\piout) + B^2 d \log(K/\delta)}}
\end{align}
where $\Delta\TV^2_{n,h}(\piout)$ is defined as follows $$ \sum_{k = 1}^K\sum_{x \in \gX} \nu^{\pi^n_E, \pi^{-n,k}}_h(x) \TV^2(\pi^n_{\mathrm{out},h}, \pi^n_{E,h})(x), $$ and $\TV$ is the total variation (for a definition see Table~\ref{appendix:notation}).
\end{restatable}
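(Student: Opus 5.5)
We bound each player's deviation gain separately and take the maximum over $n$. Fix $n$ and a deviation $\pi^{-n}$ of the opponent. The usual decomposition against the equilibrium $\pi_E$ splits the exploitability of $\piout$ into terms that compare $\piout^n$ with $\pi^n_E$ along trajectories generated by $(\pi^n_E,\pi^{-n})$. Concretely, since $\pi_E$ is a Nash equilibrium, the gain from deviating against $\piout^n$ exceeds the gain against $\pi^n_E$ (which is at most zero) only through the mismatch $\piout^n\neq\pi^n_E$. The performance difference lemma, with $\pi^{-n}$ fixed, writes $V^{\piout^n,\pi^{-n}}-V^{\pi^n_E,\pi^{-n}}$ as $\sum_h \mathbb{E}_{x\sim\nu_h^{\pi^n_E,\pi^{-n}}}\bigl[\langle \piout^n_h-\pi^n_{E,h}, Q_{h}(x,\cdot)\rangle\bigr]$. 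Here $Q$ is bounded by $H$, so each stage contributes at most $H\cdot \mathbb{E}_{x\sim \nu_h^{\pi^n_E,\pi^{-n}}}[\TV(\piout^n_h,\pi^n_{E,h})(x)]$. This accounts for the leading factor $H$ and the sum over $h$.

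The key step is a change of measure at the feature level, which mirrors Lemma~\ref{lemma:change_of_measure}. The function $g_h(x):=\TV(\piout^n_h,\pi^n_{E,h})(x)$ (or its square) must be linearized through the transition. Since player $n$ plays $\pi^n_E$, Assumption~\ref{assumption:linear} for the active player $-n$ gives
\[
\nu_h^{\pi^n_E,\pi^{-n}}(x)=\bigl(\phi_{h-1}^{\pi^n_E,\pi^{-n}}\bigr)^\top M_{n}(x).
\]
Hence $\mathbb{E}_{\nu_h}[g_h]=\langle \phi_{h-1}^{\pi^n_E,\pi^{-n}}, \theta_g\rangle$ with $\theta_g:=\sum_x M_n(x)g_h(x)$, and $\norm{\theta_g}\le B$ because $g_h\in[0,1]$. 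The stage indexing is absorbed into the statement's $\phi_h$. Cauchy--Schwarz in the $\Lambda^{-n,K}_h$ geometry then gives $\norm{\phi^{\pi^n_E,\pi^{-n}}}_{(\Lambda^{-n,K})^{-1}}\cdot\norm{\theta_g}_{\Lambda^{-n,K}}$. Next we expand $\norm{\theta_g}^2_{\Lambda^{-n,K}}=\norm{\theta_g}^2+\sum_{k}\langle\phi^{-n,k}_h,\theta_g\rangle^2$. The regularizer contributes at most $B^2$.

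The main obstacle is the sum over $k$. Each summand $\langle\phi^{-n,k},\theta_g\rangle^2$ is the square of a conditional expectation of $g$ at the next stage under the expert's action and the data. Jensen gives $\langle\phi^{-n,k},\theta_g\rangle^2\le \langle\phi^{-n,k},\theta_{g^2}\rangle$. We then replace the empirical features by their conditional expectations under $\pi^{-n,k}$, which turns the sum into $\sum_k \mathbb{E}_{\nu^{\pi^n_E,\pi^{-n,k}}}[g^2]=\Delta\TV^2_{n,h}(\piout)$. The difference is a martingale, but $\theta_g$ depends on $\piout$ and hence on the data, so concentration must hold uniformly. We would handle this with a covering argument over the class $\Pi^n_{\mathrm{softlin}}$, or equivalently over vectors $\theta\in\mathbb{R}^d$ with $\norm{\theta}\le B$. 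Combined with Freedman's inequality, this gives a deviation of order $B^2 d\log(K/\delta)$ plus a term dominated by $\Delta\TV^2$, with the log-cover size entering through $d$. Taking square roots, maximizing over $n$ and $\pi^{-n}$, and summing over $h$ then yields the claimed bound under a union bound with probability $1-\delta$.
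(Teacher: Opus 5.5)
Your proposal is correct and follows the paper's proof essentially step for step. The shared steps are: the zero-sum duality-gap plus performance-difference decomposition into opponent-deviation terms under $\nu_h^{\pi^n_E,\pi^{-n}}$; the feature-level change of measure with Cauchy--Schwarz in the exploration covariance; Jensen to pass to $\langle\phi^{-n,k},\theta_{g^2}\rangle$; and a covering argument over $\mathcal{W}_h$ induced by $\Pi^n_{\mathrm{softlin}}$, combined with the multiplicative bound $\sum_k X_k\le 2\sum_k\mathbb{E}[X_k\mid\mathcal{F}_k]+\log(1/\delta)$, which the paper proves directly as Lemma~\ref{lemma:fast_concentration} instead of deriving it from Freedman.

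Only minor points need tightening. Cauchy--Schwarz must use the stage-$(h-1)$ covariance $\Lambda^{-n,K}_{h-1}$, so that conditioning produces $\nu_h^{\pi^n_E,\pi^{-n,k}}$; the paper is equally loose here. With the paper's unnormalized $\TV\in[0,2]$, the regularizer term is $4B^2$ rather than $B^2$.
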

We notice that Lemma~\ref{lemma:bound_error} avoids completely the dependence on $\cphimax$ replacing it by $\max_{\pi^{-n} \in \Pi^{-n}} \norm{\phi_h^{\pi^{n}_E,\pi^{-n}, }}_{(\Lambda^{-n,K}_{h})^{-1}}$. 
The main difference between the two quantities is that in $\Lambda^{-n,K}_{h}$ the features are evaluated at the states and actions sampled while running LSVI-UCB-ZERO and not via Nash vs Nash dynamics. %
Notably, we can control the latter quantity by proving that when the matrix $\Lambda^{-n,K}_{h}$ is constructed via LSVI-UCB-ZERO, the $\Lambda^{-n,K}_{h}$-weighted norm of any expected feature vector decreases at a $\mathcal{O}(K^{-1/2})$ rate. 
\begin{restatable}{lemma}{feature}
\label{lemma:features_norm_go_to_zero_main}
For any $n,h \in \bcc{1,2}\times [H]$, let $\Lambda^{-n,K}_{h}$ be the output covariance matrix after $K$ iterations of Algorithm~\ref{alg:explore-lsvi-ucb} with $\mathrm{active}=-n$. Then with probability $1-\delta$ it holds that
\[
\sum^H_{h=1} \max_{ \pi^{-n} \in \Pi^{-n}}\norm{\phi_h^{\pi^{n}_E,\pi^{-n} }}_{(\Lambda^{-n,K}_{h})^{-1}} \leq \widetilde{\mathcal{O}}\brr{\sqrt{\frac{d^3 H^4 B^2 \iota }{K}}}.
\]
where $\iota = \log(dKH/\delta)$.
\end{restatable}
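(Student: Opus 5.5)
We reduce the statement to a regret bound for LSVI-UCB-ZERO in the linear MDP $\mathcal{M}_n$, where player $n$ is frozen to $\pi^n_E$. By Assumption~\ref{assumption:linear}, $\mathcal{M}_n$ is a linear MDP for the active player $-n$ with features $\phi$ and kernel $M$. The plan has three steps.

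\textbf{Step 1: from expected features to a value function.} Fix $h$ and any $\pi^{-n}$. By Jensen, $\norm{\phi_h^{\pi^n_E,\pi^{-n}}}_{(\Lambda^{-n,K}_h)^{-1}} \le \mathbb{E}_{\pi^n_E,\pi^{-n}}\bs{\norm{\phi(X_h,A^{-n}_h)}_{(\Lambda^{-n,K}_h)^{-1}}}$. Since $\Lambda^{-n,k}_h \preceq \Lambda^{-n,K}_h$ for $k\le K$, this is at most $\frac1K\sum_k \mathbb{E}_{\pi^{-n}}\bs{\norm{\phi(X_h,A_h)}_{(\Lambda^{-n,k}_h)^{-1}}}$. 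Summing over $h$, the left-hand side of the lemma is bounded by $\frac1K\sum_{k}\max_{\pi}V^{\pi}_1(r^k)$. Here $r^k_h(x,a)=\norm{\phi(x,a)}_{(\Lambda^{-n,k}_h)^{-1}}$ is the synthetic exploration reward, and we maximize over a single policy: the sum over $h$ of per-$h$ maxima is at most $H$ times a single optimal value. To avoid this $H$ factor, we instead keep the reward supported on one stage $h$ at a time, or simply accept the extra factor, since it is absorbed into $H^4$. One subtlety is that $\Lambda^{-n,k}_h$ must be the matrix available at episode $k$, so that the rewards are $\mathcal{F}_{k-1}$-measurable.

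\textbf{Step 2: optimism.} We show $V^k_1(x)\ge \max_\pi V^\pi_1(r^k)(x)$, where $V^k$ is the value function computed by Algorithm~\ref{alg:explore-lsvi-ucb}. This follows from the standard LSVI-UCB argument of \citet{jin2019provably}. With zero environment reward, the regression target is only $V^k_{h+1}$. The self-normalized concentration over the covering of the function class $\{\min(H, w^\top\phi+\beta'\norm{\phi}_{\Lambda^{-1}})\}$ gives $|(w^k_h)^\top\phi - PV^k_{h+1}|\le\beta\norm{\phi}_{(\Lambda^{k}_h)^{-1}}$ with $\beta=\tilde O(dH)$. The extra $+1$ in the bonus exactly dominates the synthetic reward $r^k_h$. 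Backward induction then gives optimism with probability $1-\delta$.

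\textbf{Step 3: bound the sum of optimistic values.} Next we bound $\sum_k V^k_1(X_1^k)$. The standard decomposition writes $V^k_h(X^k_h)$ minus the next-step value as at most $(2\beta+1)\norm{\phi^k_h}_{(\Lambda^k_h)^{-1}}$ plus a martingale difference. The true value of the played policy under zero reward is $0$, so there is no "true value" term to subtract. Summing gives $\sum_k V^k_1 \le (2\beta+1)\sum_{h,k}\min(1,\norm{\phi^k_h}_{(\Lambda^{k}_h)^{-1}}) + O(H\sqrt{HK\iota})$. The elliptical potential lemma bounds the first sum by $H\sqrt{dK\log K}$, so the total is $\tilde O(dH\cdot H\sqrt{dK}) = \tilde O(\sqrt{d^3H^4K\iota})$. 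Dividing by $K$ yields $\sqrt{d^3H^4\iota/K}$. The factor $B^2$ enters through the covering and weight bound $\norm{w^k_h}\le H\sqrt{dk}$ inside $\beta$ when $\norm{M}\le B$. A union bound over $h$ and the two players finishes the argument.

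\textbf{Main obstacle.} The delicate part is the uniform concentration in Step 2. The value functions $V^k_{h+1}$ now contain the enlarged bonus $(\beta+1)\norm{\cdot}$, and $\Lambda^{-n,k}_h$ appears both in the reward and in the bonus. We must check that the covering-number argument still yields $\beta=\tilde O(dHB)$ and not something circular. Following the reward-free LSVI analysis of \citet{DBLP:journals/corr/abs-2006-11274}, the bonus class is parameterized by a PSD matrix and a scalar, which adds only $d^2$ to the log-covering number. This is what fixes the final $d^3$.
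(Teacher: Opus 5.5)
Your Steps 1--3 are sound, but they follow a different route from the paper. The paper treats LSVI-UCB-ZERO as a black-box no-regret learner for the revealed synthetic rewards $r^k_h=\norm{\phi}_{(\Lambda^k_h)^{-1}}$. It proceeds in three pieces:
\begin{itemize}
\item it invokes a regret bound against an arbitrary comparator $\pi_\star$ (Theorem~6 of \citet{viano2024imitation}, with a footnoted adaptation to quadratic rewards);
\item it separately bounds the learner's own collected synthetic reward $\sum_{k,h}\langle\mu^{\pi^k}_h,r^k_h\rangle$ by Freedman plus the elliptical potential lemma;
\item only at the end does it use $r^K\le r^k$ and Jensen.
\end{itemize}
You apply Jensen and monotonicity first and then open up LSVI-UCB. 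Reward and bonus merge into $(\beta+1)\norm{\phi}_{(\Lambda^k_h)^{-1}}$, so the value class, and hence the covering and $\beta$, is exactly that of \citet{jin2019provably}. Optimism then gives $V^k_1\ge\max_\pi V^\pi_1(r^k)$. Since the played policy has true value $0$ under zero reward, all of $\sum_k V^k_1$ is charged to bonuses along the played trajectories. This fuses the paper's two pieces into one sum-of-bonuses bound, and it makes explicit the ``minimal variation'' the paper only footnotes. In exchange, the paper's modular version lets any no-regret learner be plugged in, which is how it reuses the argument with RMAX-RAVI-LSVI-UCB in the discounted setting. Both routes give $\max_{\pi^{-n}}\sum_h\norm{\phi_h^{\pi^n_E,\pi^{-n}}}_{(\Lambda^{-n,K}_h)^{-1}}=\widetilde{\mathcal{O}}(\sqrt{d^3H^4B^2\iota/K})$.

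Where the proposal goes wrong is the treatment of $\sum_h\max_{\pi^{-n}}$ versus $\max_{\pi^{-n}}\sum_h$; neither of your two fixes works.
\begin{itemize}
\item \textbf{Restricting the reward to one stage does not help.} The iterates $V^k$ of Algorithm~\ref{alg:explore-lsvi-ucb} are optimistic only for the full-horizon reward $\sum_h r^k_h$. An optimistic value for a stage-$h$-only reward is attained by a different greedy policy, whose bonuses are never accumulated along the trajectories actually played. Step~3 therefore does not control it. The best you can extract is $\max_\pi\mathbb{E}_\pi[r^k_h(X_h,A_h)]\le\langle\nu_1,V^k_1\rangle$ for each $h$ separately, and summing over $h$ brings back the factor $H$.
\item \textbf{The extra factor is not absorbed.} Step~3 already produces $(dH)\cdot H\sqrt{dK}$, i.e.\ $H^4$ under the square root. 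Multiplying by $H$ gives $\sqrt{d^3H^6B^2\iota/K}$, not $H^4$.
\end{itemize}
The paper's proof has the same limitation. It also selects a single maximizing $\pi_\star$, and its appendix version (Lemma~\ref{lemma:features_norm_go_to_zero}) is stated with $\max_{\pi}\sum_h$. That ordering is also what is actually needed downstream: in the proof of Lemma~\ref{lemma:bound_error}, the deviation maximum sits outside the sum over $h$ produced by the performance-difference step. So state and prove the $\max_{\pi^{-n}}\sum_h$ form. For the literal $\sum_h\max_{\pi^{-n}}$ ordering, either argument only yields the $H^6$ rate.
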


At this point, the proof is concluded by invoking the guarantees for the conditional maximum likelihood estimator (MLE) under mispecification to prove the following high probability bound on $\Delta\TV^2_{n,h}(\piout)$.
\begin{restatable}{lemma}{lemMLE}
\label{lemma:MLE}
    For any failure probability $\delta \in (0,1]$, let us consider $\eta = \log (K)/H$ in the definition of the policy class $\Pi_{\mathrm{softlin}}$, then for any $n,h \in \bcc{1,2} \times [H]$ it holds that with probability at least $1-\delta$, we have
    $\Delta\TV^2_{n,h}(\piout) \leq \widetilde{\mathcal{O}}\brr{d \log \brr{K A_{\max} B_\theta \delta^{-1}}}$.
\end{restatable}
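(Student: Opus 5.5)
The proof follows the standard conditional MLE analysis with a finite cover of the parameter set and a realizability (bias) term. The softlin class makes this term small when $\eta = \log(K)/H$.

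\textbf{Step 1: approximate realizability.} By Assumption~\ref{assumption:limitnash}, $\pi^n_{E,h}$ is a limit point of $\Pi^n_{\mathrm{softlin},h}$ as $\eta\to\infty$. I will show that at finite $\eta=\log(K)/H$ there is $\bar\pi^n_h\in\Pi^n_{\mathrm{softlin},h}$, defined by the same direction $\theta$, whose log-likelihood on the data is within a small additive constant of the expert's. Concretely, I aim for either
\[
\max_x \TV(\bar\pi^n_h,\pi^n_{E,h})(x) \lesssim A_{\max} e^{-\eta\,\mathrm{gap}},
\]
or, more robustly, $\pi^n_{E,h}(a|x)\le c\,\bar\pi^n_h(a|x)$ up to an $O(1/K)$ mass. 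Either form controls the misspecification term of the MLE, which then contributes at most $O(H)$ or $O(\log K)$ to the cumulative bound. I expect this to be the main obstacle. A limit point of softmax policies with a fixed temperature schedule need not have a uniform margin. The natural fix is to exploit the freedom in $\|\theta\|\le B_\theta$: choose $\theta$ so that $\eta\theta$ approximates the limiting logits. One then works with a one-sided density ratio, i.e., Hellinger-type MLE bounds under misspecification in the style of Foster et al., so that only the log-loss excess of the comparator matters. This cost is absorbed into the $\log(KA_{\max}B_\theta)$ factor.

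\textbf{Step 2: MLE generalization on adaptively collected data.} The pairs $(X^{-n,k}_h, A^{n,k}_{E,h})$ are generated sequentially. The state $X^{-n,k}_h$ depends on past data through $\pi^{-n,k}$, but conditionally on the history and the state, the action is drawn from $\pi^n_{E,h}(\cdot\mid X^{-n,k}_h)$. I will apply the sequential conditional-density MLE bound: with probability $1-\delta$, the minimizer of the log loss satisfies
\[
\sum_k \mathbb{E}_{k-1}\!\left[H^2(\piout^n_h,\pi^n_{E,h})(X^{-n,k}_h)\right] \lesssim \log(N/\delta) + \text{misspec},
\]
where $N$ is the size of an $\epsilon$-cover of $\Pi^n_{\mathrm{softlin},h}$. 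This follows from a martingale exponential (Freedman/Chernoff) argument on log-likelihood ratios, with a union bound over the cover. The left-hand side is exactly an average under $\nu^{\pi^n_E,\pi^{-n,k}}_h$, which matches $\Delta\TV^2_{n,h}$ after using $\TV^2\le 2H^2$ (squared Hellinger).

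\textbf{Step 3: covering number.} The map $\theta\mapsto\log\pi_h(a|x)$ is $2\eta$-Lipschitz in $\theta$ in $\ell_2$, since $\|\phi\|\le 1$. An $\epsilon/(2\eta)$-net of the ball $\Theta$ therefore yields an $\epsilon$-cover in sup log-density. This gives $\log N\le d\log(1+4\eta B_\theta/\epsilon)$. Choosing $\epsilon=1/K$, the discretization error summed over $K$ rounds is $O(1)$, and $\log N=\tilde O(d\log(KB_\theta))$. Combining Steps 1--3 and adding a $\log A_{\max}$ from normalization/misspecification yields $\Delta\TV^2_{n,h}(\piout)\le\tilde{\mathcal O}(d\log(KA_{\max}B_\theta\delta^{-1}))$. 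A union bound over $h$ and $n$ affects only the logarithmic factor.
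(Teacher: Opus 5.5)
The gap is in your Step~1. Your Steps~2 and~3 are the paper's argument, and your cover is slightly cleaner than the paper's.

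\textbf{Steps 2 and 3.} Lemma~\ref{eq:martingale_mispecified_bc} is exactly a sequential MLE bound. It is proved by a PAC-Bayes/Chernoff step over a cover of $\log\Pi_{\mathrm{softlin}}$, with a tangent sequence for the adaptively collected states. Misspecification is handled through a truncated log-ratio, Freedman's inequality and the lemmas of Rohatgi et al., which is the one-sided route you describe, and $\epsilon=1/K$ as you choose. Your cover via the $2\eta$-Lipschitzness of $\theta\mapsto\log\pi_h(a|x)$ avoids the $\pi_{\min}^{-1}$ factor that Lemma~\ref{lemma:covering_log} pays by passing through total variation; this only changes logarithmic factors.

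\textbf{Step 1.} You never produce the quantitative misspecification bound, and that bound is exactly what the choice $\eta=\log(K)/H$ is for. The paper obtains it from the dynamic range of the softmax class. It treats the logits $\eta\,\phi(x,a)^\top\theta$ as ranging over $[0,\eta H]$, the action-value range. Then every action gets probability at least $\pi_{\min}=\brr{1+(A_{\max}-1)e^{\eta H}}^{-1}$, and a single action gets at most $1-(A_{\max}-1)\pi_{\min}$. For the worst-case (deterministic) expert this gives two bounds:
\begin{itemize}
\item a per-state bias bound $\min_{\underline\pi}\TV(\pi^n_{E,h},\underline\pi_h)(x)\le 2(A_{\max}-1)\pi_{\min}\le 2e^{-\eta H}$;
\item a ratio bound $\log B_{\mathrm{ratio}}\le\log\pi_{\min}^{-1}\le 2\log A_{\max}+2\eta H$.
\end{itemize}
With $e^{\eta H}=K$, the bias summed over $K$ rounds is $O(1)$. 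The Freedman conversion inflates it only by $O(\log A_{\max}+\log K)$, which gives the term $144(\log A_{\max}+\eta H)K/e^{\eta H}$.

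\textbf{Why your sketch does not deliver this.}
\begin{itemize}
\item At fixed $\eta$, the constraint $\norm{\theta}\le B_\theta$ caps the logit range at $2\eta B_\theta$. So rescaling $\theta$ to approximate the limiting logits cannot emulate $\eta\to\infty$.
\item Your candidate rate $A_{\max}e^{-\eta\,\mathrm{gap}}$, even with gap $H$, leaves a polynomial $A_{\max}$ factor after summing over $K$ rounds. The statement only allows $\log A_{\max}$.
\item The ``$O(H)$'' alternative you mention would not fit the $H$-free statement.
\end{itemize}

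\textbf{Your margin concern is valid.} The paper does not derive the $2(A_{\max}-1)\pi_{\min}$ bound from Assumption~\ref{assumption:limitnash}; it asserts it. It implicitly assumes that at every state the class contains a policy with logit gap $\eta H$ in favor of the expert's action. To close Step~1 rigorously, state such a finite-$\eta$ realizability-with-margin condition explicitly and plug it into the computation above. Trying to extract it from the limit-point assumption alone will not work.
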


The final result given in Theorem~\ref{thm:interactive_main} follows from combining the change of measure in Lemma~\ref{lemma:bound_error}, the bound on the weighted norm of any feature expectation vector in Lemma~\ref{lemma:features_norm_go_to_zero_main} and finally the MLE guarantees in Lemma~\ref{lemma:MLE}.

Having concluded the proof we illustrate in the next section how the analysis under linear function approximation can guide the design of a practical algorithm based on neural network function approximation.

\subsection{Beyond Linar MGs: extension to Deep MAIL}
\label{sec:deep}
\begin{figure*}[tb]
    \centering
    \includegraphics[width=1.\linewidth, height=3.5cm]{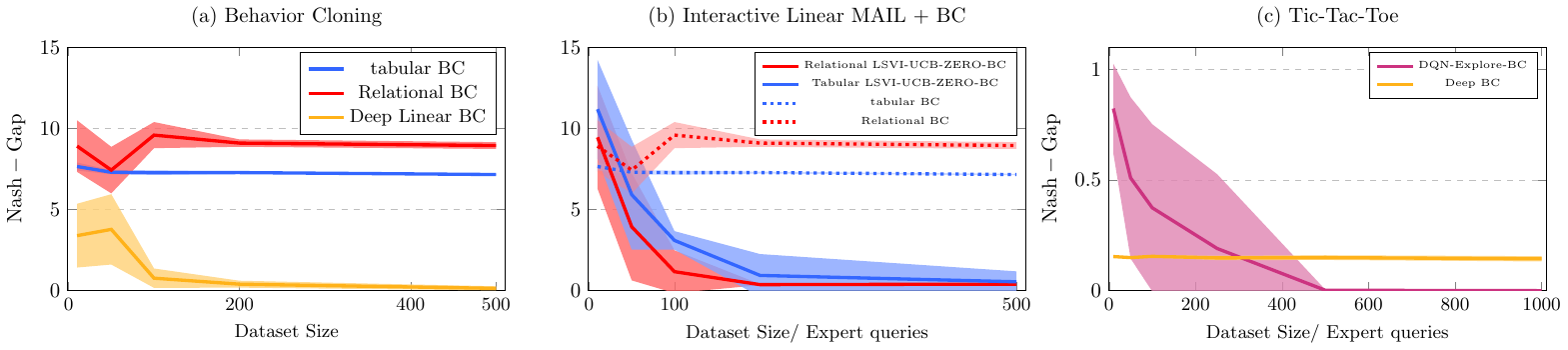}
    \caption{Experimental results in the linear Gridworld ((a), (b)) and deep Tic-Tac-Toe case ((c)).}
    \label{fig:linear_experiments}
\end{figure*}
Motivated by the developed theory for the linear function approximation setting, we provide a practical extension to the general function approximation setting. In particular, we provide a deep multi-agent imitation learning algorithm for competitive games with the goal to minimize the Nash gap. 
For the extension to the deep learning setting, we note that in Algorithm\ref{alg:interactive_mail} the exploration bonus is defined using features that realize the action value functions of any policy profile. We carry this idea over to the function approximation setting by replacing such features with the last layer of the neural network trained to predict the cumulative future exploratory reward, i.e. a \emph{critic} network in Deep RL jargon.  %
 More precisely, the critic network is updated by running DQN \cite{mnih2013playingatarideepreinforcement} on the replay buffer that contains the historical data. 
 Importantly, this implies that we now have to deal with features that depend on historical data as the neural network gets updated over time.
 The full pseudo-code, dubbed DQN-Explore-BC, is given in Algorithm~\ref{alg:interactive_deep} in Appendix~\ref{appendix:deep}, which mirrors at a conceptual level Algorithm~\ref{alg:interactive_mail} with the important difference being that the exploration bonus is created by reevaluating the historical data each time the features mapping changes.

\begin{table}[t]
  \caption{Win rate comparison against opponents of varying optimality. DQN-Explore-BC outperforms BC across all levels in Connect4.}
  \label{table:win_rates}
  \begin{center}
    \begin{small}
      \begin{sc}
        \begin{tabular}{lcc}
          \toprule
          Opponent & DQN-Explore-BC & BC \\
          \midrule
          Approx. BR & $\mathbf{0.21 \pm 0.04}$ &$ 0.00 \pm 0.00$ \\
          Solver noise 1        & $\mathbf{0.32 \pm 0.04}$ & $0.00 \pm 0.00$ \\
          Solver noise 2        & $\mathbf{0.60 \pm 0.04}$ & $0.05 \pm 0.01$ \\
          Solver noise 3        & $\mathbf{0.81 \pm 0.01}$ & $0.21 \pm 0.02$ \\
          Solver noise 4        & $\mathbf{0.92 \pm 0.00}$ & $0.40 \pm 0.04 $\\
          Solver noise 5        & $\mathbf{0.97 \pm 0.01}$ & $0.54 \pm 0.05 $\\
          \bottomrule
        \end{tabular}
      \end{sc}
    \end{small}
  \end{center}
  \vskip -0.1in
\end{table}

\section{Experiments}
\label{sec:experiments}

In this section, we present our experimental results. The full code is publicly available \href{https://github.com/tfreihaut/Multi-agent-imitation-learning-with-function-approximation-linear-Markov-games-and-beyond}{here}. We first validate our theory using linear function approximation (Sections \ref{sec:non_interactive} and \ref{sec:interactive}) before evaluating the deep extension of our algorithm (Algorithm \ref{alg:interactive_deep}).

\paragraph{Experiments for linear function approximation.}
We compare BC against our interactive MAIL algorithm (Algorithm \ref{alg:interactive_mail}) in the zero-sum Gridworld environment used by \citet{freihaut2025rate}. The $3 \times 3$ grid contains a goal state in the top right corner; the first agent to reach it receives a reward of $+1$, while the other receives $-1$. The state space consists of $72$ states (representing the joint positions of two agents, excluding overlaps) and $4$ directional actions. We compute a Nash equilibrium using zero-sum value iteration \citep{pmlr-v37-perolat15}.

We evaluate three feature maps: tabular features ($d = 288$) to reproduce results from \citet{freihaut2025rate}, relational features ($d=80$) based on relative state similarities and a linear neural network ($d=262$) (see Appendix \ref{app:experiments} for details). Figure~\ref{fig:linear_experiments} (a) shows the results for BC. We observe that, although fitting perfectly the training set, BC fails to minimize the Nash gap with both tabular and relational features. This indicates that for both feature maps, $\cphimax$ is large and even unbounded in the tabular case.  %
Conversely, the Nash gap converges to $0$ with the deep linear features, suggesting successful generalization, i.e. a small value for $\mathcal{C}_{\phi,\max}$. While finding a feature map with finite $\cphimax$ enables BC to succeed, identifying such maps is difficult, motivating the interactive setting. As shown in Figure~\ref{fig:linear_experiments} (b), our interactive algorithm, LSVI-UCB-ZERO-BC, effectively minimizes the Nash gap for both tabular and relational features. As predicted by our theory, convergence is faster with the lower-dimensional relational map. \looseness=-1

\paragraph{Deep experiments.}
We evaluate DQN-Explore-BC (Algorithm \ref{alg:interactive_deep}) against Deep BC on Tic-Tac-Toe and Connect4. Both games are strongly solved \citep{Kalra_2022,böck2025stronglysolving7times}. Tic-Tac-Toe ends in a draw under optimal play, while Player 1 can force a win in Connect4. We use optimal solvers as expert queries. Figure \ref{fig:linear_experiments} (c) demonstrates that while Deep BC fails to minimize the Nash gap in Tic-Tac-Toe, DQN-Explore-BC succeeds. For Connect4, where exact exploitability calculation is intractable, we evaluate the win rate of the learned Player 1 policy against opponents of varying skill levels. As shown in Table \ref{table:win_rates}, DQN-Explore-BC significantly outperforms the pure deep BC. We provide further details in Appendix \ref{app:experiments}.

\section{Conclusion}
\label{sec:conclusion}
In this work, we have provided the first theoretical sample complexity analysis for linear Markov games in both the non-interactive (Theorem \ref{thm:featureBC}) and interactive (Theorem \ref{thm:interactive_main}) MAIL settings. We demonstrated that by leveraging the structure of the underlying game, it is possible to relax the negative results associated with the non-interactive setting of tabular Markov games \citep{freihaut2025rate}. Furthermore, we provided the first interactive MAIL algorithm for linear Markov games that extends to the infinite horizon setting, addressing an open question by \citet{freihaut2025rate}. Finally, inspired by our theoretical findings, we developed a deep multi-agent imitation learning algorithm (Algorithm \ref{alg:interactive_deep}) that scales to complex environments.

Our work opens a number of interesting perspectives, discussed further in Appendix~\ref{sec:open}. First, as common in other works even in the single agent case, we rely on an expert policy realizability assumption (Assumption \ref{assumption:limitnash}), and it would be interesting to weaken this by relying only on structural properties of the game. We have shown that $\cphimax$ is a central quantity to assess the ``hardness'' of a game. Estimating $\cphimax$ would therefore provide a proxy to know if BC can be safely applied or not. Related to this, our work could be extended to warm-starting from an existing dataset, calling for an adaptive amount of interactions.
\looseness=-1

\section*{Acknowledgements}
This project has received funding from the SNSF Starting Grant No PS00-2\_234751.
Luca Viano was supported by Swiss Data Science Center under fellowship number P22$\_$03.
Research was sponsored by the Army Research Office and was accomplished under Grant Number W911NF-24-1-0048.
This work was funded by the Swiss National Science Foundation (SNSF) under grant number 2000-1-240094 
\section*{Impact Statement}
This paper presents work whose goal is to advance the field
of Machine Learning. There are many potential societal
consequences of our work, none which we feel must be
specifically highlighted here.
\bibliographystyle{icml2026}
\bibliography{ref}

@article{mckelvey1995quantal,
  title={Quantal response equilibria for normal form games},
  author={McKelvey, Richard D and Palfrey, Thomas R},
  journal={Games and economic behavior},
  volume={10},
  number={1},
  pages={6--38},
  year={1995},
  publisher={Elsevier}
}

@article{zhang2020task,
  title={Task-agnostic exploration in reinforcement learning},
  author={Zhang, Xuezhou and Ma, Yuzhe and Singla, Adish},
  journal={Advances in Neural Information Processing Systems},
  volume={33},
  pages={11734--11743},
  year={2020}
}

@article{wang2020reward,
  title={On reward-free reinforcement learning with linear function approximation},
  author={Wang, Ruosong and Du, Simon S and Yang, Lin and Salakhutdinov, Russ R},
  journal={Advances in neural information processing systems},
  volume={33},
  pages={17816--17826},
  year={2020}
}

@inproceedings{jin2020reward,
  title={Reward-free exploration for reinforcement learning},
  author={Jin, Chi and Krishnamurthy, Akshay and Simchowitz, Max and Yu, Tiancheng},
  booktitle={International Conference on Machine Learning},
  pages={4870--4879},
  year={2020},
  organization={PMLR}
}

@article{eibelshauser2019markov,
  title={Markov quantal response equilibrium and a homotopy method for computing and selecting markov perfect equilibria of dynamic stochastic games},
  author={Eibelsh{\"a}user, Steffen and Poensgen, David},
  journal={Available at SSRN 3314404},
  year={2019}
}

@InProceedings{pmlr-v37-perolat15,
  title = 	 {Approximate Dynamic Programming for Two-Player Zero-Sum Markov Games},
  author = 	 {Perolat, Julien and Scherrer, Bruno and Piot, Bilal and Pietquin, Olivier},
  booktitle = 	 {Proceedings of the 32nd International Conference on Machine Learning},
  pages = 	 {1321--1329},
  year = 	 {2015},
  editor = 	 {Bach, Francis and Blei, David},
  volume = 	 {37},
  series = 	 {Proceedings of Machine Learning Research},
  address = 	 {Lille, France},
  month = 	 {07--09 Jul},
  publisher =    {PMLR},
  url = 	 {https://proceedings.mlr.press/v37/perolat15.html}
}

@inproceedings{tang2024multiagentimitationlearningvalue,
 author = {Tang, Jingwu and Swamy, Gokul and Fang, Fei and Wu, Zhiwei Steven},
 booktitle = {Advances in Neural Information Processing Systems},
 editor = {A. Globerson and L. Mackey and D. Belgrave and A. Fan and U. Paquet and J. Tomczak and C. Zhang},
 pages = {27790--27816},
 publisher = {Curran Associates, Inc.},
 title = {Multi-Agent Imitation Learning: Value is Easy, Regret is Hard},
 volume = {37},
 year = {2024}
}

@inproceedings{inproceedings,
author = {Waugh, Kevin and Ziebart, Brian and Bagnell, Drew},
year = {2011},
month = {03},
pages = {1169-1176},
title = {Computational Rationalization: The Inverse Equilibrium Problem},
}

@inproceedings{Kakade2002ApproximatelyOA,
  title={Approximately Optimal Approximate Reinforcement Learning},
  author={Sham M. Kakade and John Langford},
  booktitle={International Conference on Machine Learning},
  year={2002},
  url={https://api.semanticscholar.org/CorpusID:31442909}
}

@InProceedings{pmlr-v97-yu19e,
  title = 	 {Multi-Agent Adversarial Inverse Reinforcement Learning},
  author =       {Yu, Lantao and Song, Jiaming and Ermon, Stefano},
  booktitle = 	 {Proceedings of the 36th International Conference on Machine Learning},
  pages = 	 {7194--7201},
  year = 	 {2019},
  editor = 	 {Chaudhuri, Kamalika and Salakhutdinov, Ruslan},
  volume = 	 {97},
  series = 	 {Proceedings of Machine Learning Research},
  month = 	 {09--15 Jun},
  publisher =    {PMLR},
  url = 	 {https://proceedings.mlr.press/v97/yu19e.html}
}

@inproceedings{ramponi2023imitationmeanfieldgames,
 author = {Ramponi, Giorgia and Kolev, Pavel and Pietquin, Olivier and He, Niao and Lauriere, Mathieu and Geist, Matthieu},
 booktitle = {Advances in Neural Information Processing Systems},
 editor = {A. Oh and T. Naumann and A. Globerson and K. Saenko and M. Hardt and S. Levine},
 pages = {40426--40437},
 publisher = {Curran Associates, Inc.},
 title = {On Imitation in Mean-field Games},
 volume = {36},
 year = {2023}
}

@InProceedings{zhong2022pessimisticminimaxvalueiteration,
  title = 	 {Pessimistic Minimax Value Iteration: Provably Efficient Equilibrium Learning from Offline Datasets},
  author =       {Zhong, Han and Xiong, Wei and Tan, Jiyuan and Wang, Liwei and Zhang, Tong and Wang, Zhaoran and Yang, Zhuoran},
  booktitle = 	 {Proceedings of the 39th International Conference on Machine Learning},
  pages = 	 {27117--27142},
  year = 	 {2022},
  editor = 	 {Chaudhuri, Kamalika and Jegelka, Stefanie and Song, Le and Szepesvari, Csaba and Niu, Gang and Sabato, Sivan},
  volume = 	 {162},
  series = 	 {Proceedings of Machine Learning Research},
  month = 	 {17--23 Jul},
  publisher =    {PMLR},
  url = 	 {https://proceedings.mlr.press/v162/zhong22b.html}
}

@inproceedings{Abbasi-Yadkori:2011,
 author = {Abbasi-Yadkori, Yasin and P\'{a}l, D\'{a}vid and Szepesv\'{a}ri, Csaba},
 title = {Improved Algorithms for Linear Stochastic Bandits},
 booktitle = {Advances in Neural Information Processing Systems (NeurIPS)},
 year = {2011},
}

@misc{cohen2019learning,
      title={Learning Linear-Quadratic Regulators Efficiently with only $\sqrt{T}$ Regret}, 
      author={Alon Cohen and Tomer Koren and Yishay Mansour},
      year={2019},
      eprint={1902.06223},
      archivePrefix={arXiv},
      primaryClass={cs.LG},
      url={https://arxiv.org/abs/1902.06223}, 
}

@inproceedings{
viano2024imitation,
title={Imitation Learning in Discounted Linear {MDP}s without exploration assumptions},
author={Luca Viano and Stratis Skoulakis and Volkan Cevher},
booktitle={Forty-first International Conference on Machine Learning},
year={2024},
url={https://openreview.net/forum?id=DChQpB4AJy}
}

@article{viano2022proximal,
  title={Proximal point imitation learning},
  author={Viano, Luca and Kamoutsi, Angeliki and Neu, Gergely and Krawczuk, Igor and Cevher, Volkan},
  journal={Advances in Neural Information Processing Systems},
  volume={35},
  pages={24309--24326},
  year={2022}
}

@inproceedings{swamy2023inverse,
  title={Inverse Reinforcement Learning without Reinforcement Learning},
  author={Swamy, Gokul and Wu, David and Choudhury, Sanjiban and Bagnell, Drew and Wu, Steven},
  booktitle={International Conference on Machine Learning},
  pages={33299--33318},
  year={2023},
  organization={PMLR}
}

@inproceedings{NEURIPS2024_2fbeed1d,
 author = {Bui, The Viet and Mai, Tien and Nguyen, Thanh Hong},
 booktitle = {Advances in Neural Information Processing Systems},
 editor = {A. Globerson and L. Mackey and D. Belgrave and A. Fan and U. Paquet and J. Tomczak and C. Zhang},
 pages = {27178--27206},
 publisher = {Curran Associates, Inc.},
 title = {Inverse Factorized Soft Q-Learning for Cooperative Multi-agent Imitation Learning},
 volume = {37},
 year = {2024}
}

@inproceedings{NEURIPS2018_240c945b,
 author = {Song, Jiaming and Ren, Hongyu and Sadigh, Dorsa and Ermon, Stefano},
 booktitle = {Advances in Neural Information Processing Systems},
 editor = {S. Bengio and H. Wallach and H. Larochelle and K. Grauman and N. Cesa-Bianchi and R. Garnett},
 pages = {},
 publisher = {Curran Associates, Inc.},
 title = {Multi-Agent Generative Adversarial Imitation Learning},
 url = {https://proceedings.neurips.cc/paper_files/paper/2018/file/240c945bb72980130446fc2b40fbb8e0-Paper.pdf},
 volume = {31},
 year = {2018}
}

@article{viel2025soar,
  title={IL-SOAR: Imitation Learning with Soft Optimistic Actor cRitic},
  author={Viel, Stefano and Viano, Luca and Cevher, Volkan},
  journal={arXiv preprint arXiv:2502.19859},
  year={2025}
}

@article{Agarwal:2020b,
  title={Flambe: Structural complexity and representation learning of low rank {MDP}s},
  author={Agarwal, Alekh and Kakade, Sham and Krishnamurthy, Akshay and Sun, Wen},
  journal={Advances in neural information processing systems (NeurIPS)},
 year={2020}
}

@inproceedings{xie2020learning,
  title={Learning zero-sum simultaneous-move markov games using function approximation and correlated equilibrium},
  author={Xie, Qiaomin and Chen, Yudong and Wang, Zhaoran and Yang, Zhuoran},
  booktitle={Conference on learning theory},
  pages={3674--3682},
  year={2020},
  organization={PMLR}
}

@inproceedings{Garg:2021,
title={{IQ}-Learn: Inverse soft-{Q} Learning for Imitation},
author={Garg, Divyansh and Chakraborty, Shuvam and Cundy, Chris and Song, Jiaming and Ermon, Stefano},
booktitle={Advances in Neural Information Processing Systems (NeuRIPS)},
year={2021},
}

@inproceedings{Ho:2016b,
  author    = {Ho, J. and Ermon, S.},
  title     = {Generative Adversarial Imitation Learning},
  booktitle = {Advances in Neural Information Processing Systems (NeurIPS)},
  year      = {2016},
}

@ARTICLE{Pomerleau:1991,
  author={D. A. {Pomerleau}},
  journal={Neural Computation}, 
  title={Efficient Training of Artificial Neural Networks for Autonomous Navigation}, 
  year={1991},
  volume={3},
  number={1},
  pages={88-97},
}

@inproceedings{swamy2021moments,
  title={Of moments and matching: A game-theoretic framework for closing the imitation gap},
  author={Swamy, Gokul and Choudhury, Sanjiban and Bagnell, J Andrew and Wu, Steven},
  booktitle={International Conference on Machine Learning},
  pages={10022--10032},
  year={2021},
  organization={PMLR}
}

@article{kidambi2021mobile,
  title={Mobile: Model-based imitation learning from observation alone},
  author={Kidambi, Rahul and Chang, Jonathan and Sun, Wen},
  journal={Advances in Neural Information Processing Systems},
  volume={34},
  pages={28598--28611},
  year={2021}
}

@inproceedings{sun2019provably,
  title={Provably efficient imitation learning from observation alone},
  author={Sun, Wen and Vemula, Anirudh and Boots, Byron and Bagnell, Drew},
  booktitle={International conference on machine learning},
  pages={6036--6045},
  year={2019},
  organization={PMLR}
}

@article{foster2024behavior,
  title={Is behavior cloning all you need? understanding horizon in imitation learning},
  author={Foster, Dylan J and Block, Adam and Misra, Dipendra},
  journal={arXiv preprint arXiv:2407.15007},
  year={2024}
}

@InProceedings{Ross:2011,
  author    = {Ross, S. and Gordon, G. and Bagnell, D.},
  title     = {A reduction of imitation learning and structured prediction to no-regret online learning},
  booktitle = {International Conference on Artificial Intelligence and Statistics (AISTATS)},
  year      = {2011},
}

@article{moulin2025optimistically,
  title={Optimistically Optimistic Exploration for Provably Efficient Infinite-Horizon Reinforcement and Imitation Learning},
  author={Moulin, Antoine and Neu, Gergely and Viano, Luca},
  journal={arXiv preprint arXiv:2502.13900},
  year={2025}
}

@misc{menard2020fastactivelearningpure,
      title={Fast active learning for pure exploration in reinforcement learning}, 
      author={Pierre Ménard and Omar Darwiche Domingues and Anders Jonsson and Emilie Kaufmann and Edouard Leurent and Michal Valko},
      year={2020},
      eprint={2007.13442},
      archivePrefix={arXiv},
      primaryClass={cs.LG},
      url={https://arxiv.org/abs/2007.13442}, 
}

@InProceedings{pmlr-v132-kaufmann21a,
  title = 	 {Adaptive Reward-Free Exploration},
  author =       {Kaufmann, Emilie and M{\'e}nard, Pierre and Darwiche Domingues, Omar and Jonsson, Anders and Leurent, Edouard and Valko, Michal},
  booktitle = 	 {Proceedings of the 32nd International Conference on Algorithmic Learning Theory},
  pages = 	 {865--891},
  year = 	 {2021},
  editor = 	 {Feldman, Vitaly and Ligett, Katrina and Sabato, Sivan},
  volume = 	 {132},
  series = 	 {Proceedings of Machine Learning Research},
  month = 	 {16--19 Mar},
  publisher =    {PMLR},
  url = 	 {https://proceedings.mlr.press/v132/kaufmann21a.html}
}

@InProceedings{pmlr-v162-wagenmaker22b,
  title = 	 {Reward-Free {RL} is No Harder Than Reward-Aware {RL} in Linear {M}arkov Decision Processes},
  author =       {Wagenmaker, Andrew J and Chen, Yifang and Simchowitz, Max and Du, Simon and Jamieson, Kevin},
  booktitle = 	 {Proceedings of the 39th International Conference on Machine Learning},
  pages = 	 {22430--22456},
  year = 	 {2022},
  editor = 	 {Chaudhuri, Kamalika and Jegelka, Stefanie and Song, Le and Szepesvari, Csaba and Niu, Gang and Sabato, Sivan},
  volume = 	 {162},
  series = 	 {Proceedings of Machine Learning Research},
  month = 	 {17--23 Jul},
  publisher =    {PMLR},
  url = 	 {https://proceedings.mlr.press/v162/wagenmaker22b.html}
}

@article{bui2025misodice,
  title={MisoDICE: Multi-Agent Imitation from Unlabeled Mixed-Quality Demonstrations},
  author={Bui, The Viet and Mai, Tien and Nguyen, Hong Thanh},
  journal={arXiv preprint arXiv:2505.18595},
  year={2025}
}

@inproceedings{
li2025multiagent,
title={Multi-Agent Imitation by Learning and Sampling from Factorized Soft Q-Function},
author={Yi-Chen Li and Zhongxiang Ling and Tao Jiang and Fuxiang Zhang and Pengyuan Wang and Lei Yuan and Zongzhang Zhang and Yang Yu},
booktitle={The Thirty-ninth Annual Conference on Neural Information Processing Systems},
year={2025},
url={https://openreview.net/forum?id=RbkHARGCcH}
}

@misc{terry2021pettingzoogymmultiagentreinforcement,
      title={PettingZoo: Gym for Multi-Agent Reinforcement Learning}, 
      author={J. K. Terry and Benjamin Black and Nathaniel Grammel and Mario Jayakumar and Ananth Hari and Ryan Sullivan and Luis Santos and Rodrigo Perez and Caroline Horsch and Clemens Dieffendahl and Niall L. Williams and Yashas Lokesh and Praveen Ravi},
      year={2021},
      eprint={2009.14471},
      archivePrefix={arXiv},
      primaryClass={cs.LG},
      url={https://arxiv.org/abs/2009.14471}, 
}

@misc{hu2019squeezeandexcitationnetworks,
      title={Squeeze-and-Excitation Networks}, 
      author={Jie Hu and Li Shen and Samuel Albanie and Gang Sun and Enhua Wu},
      year={2019},
      eprint={1709.01507},
      archivePrefix={arXiv},
      primaryClass={cs.CV},
      url={https://arxiv.org/abs/1709.01507}, 
}

@article{JMLR:v22:20-1364,
  author  = {Antonin Raffin and Ashley Hill and Adam Gleave and Anssi Kanervisto and Maximilian Ernestus and Noah Dormann},
  title   = {Stable-Baselines3: Reliable Reinforcement Learning Implementations},
  journal = {Journal of Machine Learning Research},
  year    = {2021},
  volume  = {22},
  number  = {268},
  pages   = {1--8},
  url     = {http://jmlr.org/papers/v22/20-1364.html}
}

@article{doi:10.1137/1031049,
author = {Hager, William W.},
title = {Updating the Inverse of a Matrix},
journal = {SIAM Review},
volume = {31},
number = {2},
pages = {221-239},
year = {1989},
doi = {10.1137/1031049},

URL = { 
    
        https://doi.org/10.1137/1031049
    
    

},
eprint = { 
    
        https://doi.org/10.1137/1031049
    
    

}
}

@misc{mnih2013playingatarideepreinforcement,
      title={Playing Atari with Deep Reinforcement Learning}, 
      author={Volodymyr Mnih and Koray Kavukcuoglu and David Silver and Alex Graves and Ioannis Antonoglou and Daan Wierstra and Martin Riedmiller},
      year={2013},
      eprint={1312.5602},
      archivePrefix={arXiv},
      primaryClass={cs.LG},
      url={https://arxiv.org/abs/1312.5602}, 
}

@misc{silver2017masteringchessshogiselfplay,
      title={Mastering Chess and Shogi by Self-Play with a General Reinforcement Learning Algorithm}, 
      author={David Silver and Thomas Hubert and Julian Schrittwieser and Ioannis Antonoglou and Matthew Lai and Arthur Guez and Marc Lanctot and Laurent Sifre and Dharshan Kumaran and Thore Graepel and Timothy Lillicrap and Karen Simonyan and Demis Hassabis},
      year={2017},
      eprint={1712.01815},
      archivePrefix={arXiv},
      primaryClass={cs.AI},
      url={https://arxiv.org/abs/1712.01815}, 
}

@inproceedings{Kalra_2022,
   title={Generalized agent for solving higher board states of tic tac toe using Reinforcement Learning},
   url={http://dx.doi.org/10.1109/PDGC56933.2022.10053317},
   DOI={10.1109/pdgc56933.2022.10053317},
   booktitle={2022 Seventh International Conference on Parallel, Distributed and Grid Computing (PDGC)},
   publisher={IEEE},
   author={Kalra, Bhavuk},
   year={2022},
   month=nov, pages={715–720} }

@misc{böck2025stronglysolving7times,
      title={Strongly Solving $7 \times 6$ Connect-Four on Consumer Grade Hardware}, 
      author={Markus Böck},
      year={2025},
      eprint={2507.05267},
      archivePrefix={arXiv},
      primaryClass={cs.AI},
      url={https://arxiv.org/abs/2507.05267}, 
}

@article{freihaut2025rate,
  title={Rate optimal learning of equilibria from data},
  author={Freihaut, Till and Viano, Luca and Nevali, Emanuele and Cevher, Volkan and Geist, Matthieu and Ramponi, Giorgia},
  journal={arXiv preprint arXiv:2510.09325},
  year={2025}
}

@misc{freihaut2025learningequilibriadataprovably,
      title={Learning Equilibria from Data: Provably Efficient Multi-Agent Imitation Learning}, 
      author={Till Freihaut and Luca Viano and Volkan Cevher and Matthieu Geist and Giorgia Ramponi},
      year={2025},
      eprint={2505.17610},
      archivePrefix={arXiv},
      primaryClass={cs.LG},
      url={https://arxiv.org/abs/2505.17610}, 
}

@misc{cui2023breakingcursemultiagentslarge,
      title={Breaking the Curse of Multiagents in a Large State Space: RL in Markov Games with Independent Linear Function Approximation}, 
      author={Qiwen Cui and Kaiqing Zhang and Simon S. Du},
      year={2023},
      eprint={2302.03673},
      archivePrefix={arXiv},
      primaryClass={cs.LG},
      url={https://arxiv.org/abs/2302.03673}, 
}

@misc{huang2021generalfunctionapproximationzerosum,
      title={Towards General Function Approximation in Zero-Sum Markov Games}, 
      author={Baihe Huang and Jason D. Lee and Zhaoran Wang and Zhuoran Yang},
      year={2021},
      eprint={2107.14702},
      archivePrefix={arXiv},
      primaryClass={cs.GT},
      url={https://arxiv.org/abs/2107.14702}, 
}

@misc{zanette2019tighterproblemdependentregretbounds,
      title={Tighter Problem-Dependent Regret Bounds in Reinforcement Learning without Domain Knowledge using Value Function Bounds}, 
      author={Andrea Zanette and Emma Brunskill},
      year={2019},
      eprint={1901.00210},
      archivePrefix={arXiv},
      primaryClass={cs.LG},
      url={https://arxiv.org/abs/1901.00210}, 
}

@misc{jin2019provably,
      title={Provably Efficient Reinforcement Learning with Linear Function Approximation}, 
      author={Chi Jin and Zhuoran Yang and Zhaoran Wang and Michael I. Jordan},
      year={2019},
      eprint={1907.05388},
      archivePrefix={arXiv},
      primaryClass={cs.LG}
}

@article{luo2021policy,
  title={Policy optimization in adversarial mdps: Improved exploration via dilated bonuses},
  author={Luo, Haipeng and Wei, Chen-Yu and Lee, Chung-Wei},
  journal={Advances in Neural Information Processing Systems},
  volume={34},
  pages={22931--22942},
  year={2021}
}

@article{rohatgi2025computational,
  title={Computational-Statistical Tradeoffs at the Next-Token Prediction Barrier: Autoregressive and Imitation Learning under Misspecification},
  author={Rohatgi, Dhruv and Block, Adam and Huang, Audrey and Krishnamurthy, Akshay and Foster, Dylan J},
  journal={arXiv preprint arXiv:2502.12465},
  year={2025}
}

@article{li2022efficient,
  title={On Efficient Online Imitation Learning via Classification},
  author={Li, Yichen and Zhang, Chicheng},
  journal={Advances in Neural Information Processing Systems},
  volume={35},
  pages={32383--32397},
  year={2022}
}

@inproceedings{wang2023breaking,
  title={Breaking the Curse of Multiagency: Provably Efficient Decentralized Multi-Agent RL with Function Approximation},
  author={Yuanhao Wang and Qinghua Liu and Yunru Bai and Chi Jin},
  booktitle={Annual Conference Computational Learning Theory},
  year={2023},
  url={https://api.semanticscholar.org/CorpusID:256826968}
}

@misc{moulin2025inverseqlearningrightoffline,
      title={Inverse Q-Learning Done Right: Offline Imitation Learning in $Q^\pi$-Realizable MDPs}, 
      author={Antoine Moulin and Gergely Neu and Luca Viano},
      year={2025},
      eprint={2505.19946},
      archivePrefix={arXiv},
      primaryClass={cs.LG},
      url={https://arxiv.org/abs/2505.19946}, 
}

@article{DBLP:journals/corr/abs-2006-11274,
  author       = {Ruosong Wang and
                  Simon S. Du and
                  Lin F. Yang and
                  Ruslan Salakhutdinov},
  title        = {On Reward-Free Reinforcement Learning with Linear Function Approximation},
  journal      = {CoRR},
  volume       = {abs/2006.11274},
  year         = {2020},
  url          = {https://arxiv.org/abs/2006.11274},
  eprinttype    = {arXiv},
  eprint       = {2006.11274},
  bibsource    = {dblp computer science bibliography, https://dblp.org}
}

\newpage
\onecolumn
\appendix

\section*{Contents of Appendix}
This appendix provides supplementary material to support the main findings of the paper.

\begin{itemize}
    \item Appx.~\ref{appendix:notation} contains a summary table of all the notation used throughout this work.
    \item Appx.~\ref{sec:relatedworks} provides an extensive discussion on related works, in particular it includes \emph{theoretical multi-agent imitation learning}, \emph{deep multi-agent imitation learning}, \emph{reward-free reinforcement learning} and \emph{single-agent imitation learning}.
    \item Appx.~\ref{sec:open} lays out interesting directions of future work.
    \item Appx.~\ref{app:NplayersBC} provides the proofs for the non-interactive setting. The proofs are directly presented for the $N$ player general-sum setting.
    \item Appx.~\ref{appendix:proofs_interactive} contains the proofs for the interactive setting for the two player case.
    \item Appx.~\ref{appendix:proof_N_interactive} extends the proofs from the two player zero-sum setting to the $N$ player general-sum setting as well as the extension to the infinite horizon setting.
    \item Appx.~\ref{appendix:deep} states the algorithm for the Deep multi-agent imitation learning setting and provides some further details.
    \item Appx.~\ref{app:experiments} contains all necessary details to reproduce the experiments.
    \item Appx.~\ref{app:technical} lists important technical Lemmas used throughout the work and provides their proofs.
\end{itemize}

\newpage
\section{Notation}
\label{appendix:notation}
\begin{table}[h!]
\centering
\resizebox{\textwidth}{!}{\begin{tabular}{|l|l|}
\hline
\textbf{Notation} & \textbf{Description} \\
\hline
$[X]$ & Set of $X$ elements: $[X]:= \{1,...X\},$ where$ X\in \mathbb{N}$ \\
$\mathrm{TV}(\cdot, \cdot)$ & Total Variation distance of two distributions $\pi^1, \pi^2$; $\TV(\pi^1, \pi^2):= \sum_{a} \abs{\pi^1(a) - \pi^2(a)}.$ \\
$\mathcal{G}$ &Markov game \\
$\mathcal{X}$ & joint State space \\
$\mathcal{A}^n$ & Action space of player $n \in [N]$ \\
$A_{\max}$ & Max Action space size $\max_{n\in [N]}|\mathcal{A}^n|$ \\
$P$ & Transition function \\
$r^n$ & Reward function of player $n$\\
$H$ & finite time horizon \\
$\gamma$ & Discount factor \\
$\initial$ & Initial state distribution \\
$\Delta_{\mathcal{X}}$ & Probability simplex over state space $\mathcal{X}$ \\
$\pi^n$ & $\gX \to \Delta_{\gA^n}$, Policy of player $n \in [N]$ \\
$a^n$ & action of player $n \in [N]$ \\
$a^{-n}$ & action of all players but player $n \in [N]$ \\
$V_{n,h}^{\pi^n, \pi^{-n}}(x)$ & State value function of player $n$ at state $x$ and step $h$ \\
$Q_{n,h}^{\pi^n, \pi^{-n}}(x, a^n,a^{-n})$ & State-action value of player $n$ at state $x$ and step $h$\\
$\nu_h^{\pi^n,\pi^{-n}}(x)$ & State visitation distribution of joint policy $(\pi^n,\pi^{-n})$ \\
$\mu^{\pi^1,\pi^2}_h(x,a^1,a^2)$ & State action distribution \\
$\mathrm{NG}(\pi)$ & Nash gap of joint policy $\pi$ \\
$\mathrm{br}(\pi^n)$ & Best response to policy $\pi^{n}$\\
$\theta_{n,h}^{\pi^n, \pi^{-n}}$ & parameter of player $n$ at step $h$ with policy $\pi^n,$ facing $\pi^{-n}$\\
$B_{\theta}$ & scalar value bound on parameter norm \\
$B$ & scalar bound of $\max_{n \in [N]}(\norm{M_{-n}}, \norm{w_{-n}})$ \\
$\phi(x,a^n)$ & feature map of player $n$ at state action $(x,a^n)$ \\
$\gD_{\expert}$ & Nash equilibrium expert dataset \\
$\gD^n_{\expert}$ & Nash equilibrium expert dataset of player $n$\\
$X^i_{\expert,h}$ & state of $i$-th trajectory with $ i \in [\tau_{\expert}]$ at state $h$ under expert occupancy measure\\ 
$A^{n,i}_{\expert,h}$ & action of player $n$ of $i$-th trajectory with $ i \in [\tau_{\expert}]$ at step $h$ under expert occupancy measure\\ 
$\cphimax$ & Concentrability coefficient in linear Markov games\\
$A^{n,k}_{h}$ & action at iteration $k \in [K]$ and step $h$ in Algorithm~\ref{alg:explore-lsvi-ucb} invoked with $\mathrm{active}=n$.\\ 
$X^{n,k}_{h}$ & state at iteration $k \in [K]$ and step $h$ in Algorithm~\ref{alg:explore-lsvi-ucb} invoked with $\mathrm{active}=n$.\\
\hline
\end{tabular}}
\label{tab:notations}
\end{table}

\section{Related Works}
\label{sec:relatedworks}
The related works can be split into four areas: \emph{theoretical multi-agent Imitation learning}, \emph{deep multi-agent imitation learning}, \emph{reward-free reinforcement learning} and \emph{single-agent imitation Learning}. We will provide a detailed discussion for all related areas next.
\paragraph{Theoretical multi-agent imitation learning.}
While interest in theoretical guarantees for multi-agent imitation learning has increased in recent years, the area remains relatively underexplored. The first authors to adopt the Nash gap as the optimality objective in imitation learning for normal form games are \citet{inproceedings}. More recently, the Nash gap has also been studied in the context of imitation learning for mean field games \citep{ramponi2023imitationmeanfieldgames}. The authors analyze behavioral cloning and adversarial imitation learning and derive upper bounds that scale exponentially with the horizon. Motivated by this hardness result for directly minimizing the Nash gap, they introduce a proxy objective based on a mean field control formulation, which yields an upper bound that scales quadratically with the horizon. In contrast, our work focuses on algorithms that directly minimize the Nash gap in the $N$ player setting, rather than designing surrogate metrics whose minimization implies a small Nash gap through error propagation arguments specific to the mean field regime.

The first work to explicitly consider the Nash gap as the learning objective in the $N$ player multi-agent imitation learning setting is the seminal work of \citet{tang2024multiagentimitationlearningvalue}. The authors show that recovering the value of a Nash equilibrium in a multi-agent setting exhibits strong parallels with the single agent case. However, when the learner aims to minimize the Nash gap, they establish a hardness result demonstrating the existence of a Markov game in which any non-interactive learner incurs regret that scales linearly with the horizon. To address this challenge, they propose interactive imitation learning algorithms, namely MALICE and BLADES, which are capable of minimizing the Nash gap. The accompanying analysis for behavioral cloning, MALICE, and BLADES is based on error propagation arguments, and does not provide formal sample complexity guarantees. Moreover, a closer inspection reveals that the computational complexity of these algorithms is exponential in the number of states.

Building on these results, \citet{freihaut2025learningequilibriadataprovably} provide the first sample complexity analysis for behavioral cloning when minimizing the Nash gap. They show that the resulting upper bound depends on the all policy deviation concentrability coefficient $\gC_{\max}$. In addition, they establish a hardness result showing that if a related quantity $\gC(\pi_E^1, \pi_E^2)$ is infinite, then the Nash gap necessarily scales linearly with the horizon. The authors then move to the interactive setting and introduce MURMAIL, the first algorithm that is both computationally and statistically efficient for minimizing the Nash gap. However, the sample complexity of MURMAIL scales as $\varepsilon^{-8}$, which is suboptimal compared to the $\varepsilon^{-2}$ rate achieved by behavioral cloning.
\looseness=-1

This gap was recently closed by \citet{freihaut2025rate}. In that work, the authors reconcile the upper bound dependence on $\gC_{\max}$ with the hardness result involving $\gC(\pi_E^1, \pi_E^2)$ by showing that $\gC_{\max}$ is indeed the fundamental quantity governing non-interactive multi-agent imitation learning. They further prove a statistical lower bound of order $\Omega(\gC_{\max}\varepsilon^{-2})$, establishing that behavioral cloning is rate optimal in the non-interactive setting. By introducing a new framework that combines reward free reinforcement learning and imitation learning, they also propose MAIL WARM, an interactive algorithm with sample complexity scaling as $\varepsilon^{-2}$, which is rate optimal. Nevertheless, all of these results are restricted to tabular settings with small state and action spaces. In addition, MAIL WARM builds on EULER \cite{zanette2019tighterproblemdependentregretbounds} and therefore does not extend to the discounted infinite horizon setting. In contrast, we provide the first analysis of multi-agent imitation learning with linear function approximation, applicable to both finite horizon and discounted infinite horizon settings, thereby addressing the open questions highlighted by \citet{freihaut2025rate}.

\paragraph{Deep multi-agent imitation learning.}
Most empirical work on multi-agent imitation learning rely on strong assumptions or focus on cooperative settings. To the best of our knowledge, the only work that explicitly considers demonstrations generated by Nash equilibrium policies is that of \citet{NEURIPS2018_240c945b}, which can be viewed as a natural extension of GAIL \cite{Ho:2016b} from the single agent to the multi-agent setting. Their analysis relies on strong assumptions, such as the existence of a unique Nash equilibrium, and their empirical evaluation focuses on recovering policies with high value rather than minimizing the Nash gap. Therefore, this work can be seen orthogonal to ours. Another non-cooperative line of work studies inverse reinforcement learning in multi-agent settings \citep{pmlr-v97-yu19e}, where the authors propose an optimality objective tailored to an adversarial reward learning framework. This objective is closely related to QRE in Markov Games and enables the recovery of highly correlated reward functions under function approximation. This direction is orthogonal to our work, as we focus on directly learning expert behavior and target Nash equilibrium solutions.

In the cooperative setting, all agents share a common reward function and aim to optimize a joint expected return. Consequently, existing works adopt value-based performance metrics, which differ fundamentally from the Nash gap objective considered here. While these approaches are therefore orthogonal to ours, we briefly summarize recent developments for completeness. \citet{NEURIPS2024_2fbeed1d} extend IQ Learn \cite{Garg:2021} to the multi-agent setting by training local IQ Learn agents and combining them through a mixing network, enabling centralized training with decentralized execution. \citet{li2025multiagent} extend this approach to continuous control tasks via an alternative characterization. In a related direction, \citet{bui2025misodice} consider settings in which demonstrations may originate from suboptimal experts and propose a two stage approach that first classifies trajectories as expert or non expert before learning a robust policy from the labeled data.

\paragraph{Reward-free reinforcement learning.}
Reward free reinforcement learning was first introduced in the seminal work of \citet{jin2020reward}. The authors study tabular Markov decision processes and formalize reward free reinforcement learning as a two phase procedure consisting of an exploration phase followed by a planning phase. During exploration the learner does not observe rewards and instead aims to visit all states that are significant in the sense that there exists a policy such that they are reachable with high probability. The resulting data collection guarantees that once a reward function is revealed in the second phase, the learner can efficiently compute a near optimal policy. Subsequent works refined the sample complexity bounds in the tabular finite horizon setting \citep{menard2020fastactivelearningpure,pmlr-v132-kaufmann21a}. An extension beyond the tabular setting to linear Markov decision processes was proposed by \citet{wang2020reward}, and later improvements showed that in linear Markov decision processes reward free reinforcement learning is, from an information theory perspective, not harder than standard reward-aware reinforcement learning \citep{pmlr-v162-wagenmaker22b}. These works build on linear analogues of the procedure introduced by \citet{jin2020reward}. In our setting, the framework of \citet{wang2020reward} is particularly relevant as it enables a translation to the discounted infinite horizon case through the use of no regret learning and recent advances in this line of work \citep{moulin2025optimistically}. We emphasize that our goal is not to advance the theory of reward free reinforcement learning itself, but rather to leverage existing results as a component in our study of multi-agent imitation learning, and we include this discussion for completeness.
\paragraph{Single agent imitation learning.}
There has been substantial recent progresses in the theoretical understanding of single agent imitation learning. In the fully offline setting, behavior cloning (BC) \citep{Pomerleau:1991} has been revisited by \citet{foster2024behavior}, who analyze BC through the lens of supervised learning with the log loss between the learner and expert policies. They establish expert sample complexity bounds that are independent of the horizon under deterministic stationary expert policies and sparse reward functions. Horizon dependence only arises when rewards are dense or when the class containing the expert policy is non-stationary. Furthermore, they show that without additional structural assumptions, interactive imitation learning algorithms cannot improve upon BC in a worst case sense.

This conclusion contrasts with classical results based on error propagation analyses, which demonstrate that interactive imitation learning methods such as DAgger \citep{Ross:2011}, Logger \citep{li2022efficient}, and reward or action value function moments matching \citep{swamy2021moments} can outperform BC when performance is measured using the $0/1$ loss or total variation distance. Improved error propagation guarantees are also possible when the learner is allowed to reset to states sampled from the expert state occupancy measure \citep{swamy2023inverse}.

Beyond action level supervision, imitation learning in the single agent setting can succeed without observing expert actions when the learner has access to trajectories from the environment. For instance, several works show that imitation is possible using only expert state observations \citep{sun2019provably,kidambi2021mobile,viel2025soar}, or by leveraging reward features in linear Markov decision processes \citep{moulin2025optimistically,viano2024imitation,viano2022proximal}. Taken together, these results indicate that in the single agent case, interaction with the expert does not fundamentally improve worst case performance, whereas access to the environment dynamics or sampling capability enables either improved horizon dependence in tabular settings or imitation without explicit action supervision. 

In recent works \citep{freihaut2025learningequilibriadataprovably, freihaut2025rate} sharp separation between single agent and multi-agent imitation learning in the tabular setting has been highlighted, driven by the necessity of the all policy deviation concentrability coefficient $\gC_{\max}$. Although tabular Markov games are a special case of linear Markov games, we show that in the presence of function approximation, behavior cloning can potentially achieve stronger guarantees even when $\gC_{\max} = \infty$, by replacing it with the weaker quantity $\cphimax$. Additionally, allowing interaction in multi-agent imitation learning is primarily used to get algorithms independent of $\gC_{\max}$ instead of sharpening horizon dependencies. 

\section{Future Directions}
\label{sec:open}
We highlight here a couple of interesting open questions left open by our work.

\paragraph{Avoiding the expert policy realizability assumption.}

A limitation of our analysis is relying on assumption \ref{assumption:limitnash} which requires to approximately realize the expert policy in the softmax linear policy class $\Pi_{\mathrm{softlin}}$. Albeit weak and quite common, even in the single agent setting \cite{foster2024behavior,rohatgi2025computational} it does exclude some Nash Equilibria  that might be observed in the dataset. In these situations, our algorithms could still be applied but the theoretical guarantees would not hold. 
It would be therefore interesting to drop completely Assumption~\ref{assumption:limitnash} and develop a method that relies only on  structural properties of the game. A possible solution would be building on the recent work \cite{moulin2025inverseqlearningrightoffline} that, in the single agent setting, successfully imitates an expert without imposing any realizability assumption while staying completely offline. 

\paragraph{Estimation of $\mathcal{C}_{\phi,\max}$.}
Our theory highlights that in games where $\mathcal{C}_{\phi,\max}$ is small BC can be applied without issues. However it remains open how $\mathcal{C}_{\phi,\max}$ can be computed a priori in order to predict the performance of a non interactive algorithm before running it. This might be practically very important when interaction with the expert is costly and algorithm failures can not be tolerated for safety reasons.
Our first attempts failed and highlighted that the problem might be hard even in presence of a best response oracle. Nevertheless, we think this problem deserve future study.

\paragraph{Adaptive amount of interaction.}
Estimation of $\mathcal{C}_{\phi,\max}$ would also allow to decide the amount of expert interaction needed in case prior data are available. Indeed, in this work, we considered running LSVI-UCB-ZERO-BC starting with an empty dataset but we could imagine the scenario in which we already have some precollected data. In this case, it would be interesting to develop an algorithm which only explores in regions which are not well represented by the given dataset and that hopefully requires no interactions at all if the initial dataset is informative enough, which could be the case if $\mathcal{C}_{\phi,\max}$ is a small constant and the dataset is already of size $\mathcal{O}(\varepsilon^{-2})$. 

\section{Proofs for Section~\ref{sec:non_interactive}}
\label{app:NplayersBC}
Since the analysis for BC in the $N$-players case has just a minimal complication overhead compared to the two players setting, we present the proof of Theorem~\ref{thm:featureBC} in the following more general form, which allows for a general number of players $N$ and does not leverage at all the zero-sum structure.
The main result for $N$ players is given in Theorem~\ref{thm:featureBC_n}.
Similarly to the two-players case, the goal is to find a policy profile $\pi$ with low Nash Gap which in $N$-players game is defined as 
\[
\mathrm{NG}(\piout) = \max_{n\in[N]}\max_{\pi^n_{\star}\in\Pi^n}  \innerprod{\initial}{V_n^{\pi^n_{\star},\piout^{-n}} - V_n^{\piout}}
\]

We will consider a dataset collected by a potentially nonstationary expert policy $\pi_{\expert}$ profile. In particular, let $\pi^n_{\expert,h}$ be the  policy of agent $n$ at step $h$. 
Moreover, let $\nu^{\pi}_h$ be the state occupancy measure for a policy $\pi$  after $h$ steps which is 
\[
\nu^{\pi}_h(x) = \mathbb{P}_{\pi}\bs{X_h = x}
\]
where $X_1, \dots, X_H$ is generated according to the following interaction protocol:
\begin{itemize}
    \item Sample $X_0 \sim \initial$,
    \item For $h \in [H]$:
    \begin{itemize}
        \item Sample $A^n_h \sim \pi^n_{h} $ for each $n \in [N]$.
        \item Sample $X_{h+1} \sim P(\cdot| X_h, \bcc{A^n_h}^N_{n=1} )$
    \end{itemize}
\end{itemize}
In particular, we sample the expert dataset $\mathcal{D}^{\expert}_n = \bcc{\Xeinh, \Aeinh}$ as follows: $\Xeinh \sim \nu^{\pi_{\expert}}_h$, $\Aeinh \sim \pi^n_{\expert,h}(\cdot|\Xeinh )$.
\
At this point, before moving to the results for this setting, let us recall that $N$-players Linear Markov games are defined as follows.
\begin{assumption} (Stationary $N$-players Linear Markov game)
For any agent $n \in [N]$, there exists a known $d$-dimensional feature mapping $\phi: \X \times \A^n \to \mathbb{R}^d$, such that for any state $x \in \X$, action $a^n \in \A^n$, and any Markov policy $\pi$,
\begin{align*}
    \sum_{a^{-n} \in \mathcal{A}^{-n}} \pi^{-n}(a^{-n}|x) P(x'|x, a_n, a^{-n})  &= \phi(x, a^n)^\top M_{\pi^{-n}}(x'), \\
    \sum_{a^{-n} \in \mathcal{A}^{-n}} \pi^{-n}(a^{-n}|x) r(x, a^n, a^{-n}) &= \phi(x, a^n)^\top w_{\pi^{-n}},
\end{align*}
where $M_{\pi^{-n}}: \Pi^{-n} \times \X \to \mathbb{R}^d$ and $w_{\pi^{-n}}: \Pi^{-n}  \to \mathbb{R}^d$ are both unknown to the agent. Following the convention (Jin et al., 2020; Luo et al., 2021), we assume $\|\phi(x, a^n)\|_2 \le 1$ for all $x \in \X, a^n \in \A^n, n \in [N]$ and that $\max(\|M_{\pi^{-n}}\|_2, \|w_{\pi^{-n}}\|_2) \le B$, $\pi^{-n} \in \Pi^{-n}$, and $n \in [N]$. Finally, we also assume that $\innerprod{\phi}{\mathbf{1}} = 1$.\footnote{This fact is needed only in the infinite horizon setting. It can be enforced simply by feature normalization.}
\end{assumption}
This assumption also implies the state-action value functions for all the agents are linear, i.e., for each policy $\pi$ there exists some unknown $d$-dimensional feature mapping $\theta^{\pi}_{h,i} \in \mathbb{R}^d$ with $\|\theta^\pi_{h,i}\|_2 \le B_\theta$ such that
$$
Q^{\pi}_{n,h}(x, a^n) = \phi(x, a^n)^\top \theta^{\pi}_{n,h}, \quad \forall x \in \X, a^n \in \A^n, n \in [N].
$$

Therefore, there exists a weight $\theta^{\expert}_{n,h}$ such that we have that the state action value functions at the equilibrium can be written as
\looseness=-1
\[
Q^{\expert}_{n,h}(x,a^n) = \phi(x,a^n)^\top\theta^{\expert}_{n,h}
\]
and a Nash equilibrium profile can be recovered satisfying these constraints
\[
Q^{\expert}_{n,h}(x,a^n) \leq Q^{\expert}_{n,h}(x,\pi^n_{\expert}) \quad \forall n \in [N] \quad \forall a^n \in \A^n.
\]
Solving this problem can be PPAD-hard, however this fact shows that one equilibrium that requires $n \abs{\X} A_{\max} H$ memory in a tabular setting can now be stored with only $N d H$ parameters.
Therefore, we can apply behavior cloning over a restricted hypothesis class for the expert policy denote by $\Pi_\mathrm{lin} = \Pi^1_\mathrm{lin} \times \dots \times \Pi^N_\mathrm{lin} $ defined as follows
\[
\Pi^n_\mathrm{lin} = \bcc{ \pi^n | \exists \theta_n = \brr{\theta_{n,1}, \dots, \theta_{n,h} }  \quad \text{s.t.}\quad \theta_{n,h}^\top \phi(x,a^n) \leq \theta_{n,h}^\top \phi(x,\pi^n) }
\]
Unfortunately,
this policy class is not convex and not Lipschitz continuous in the parameters $\theta_n$.
Therefore, we consider the following class which avoids the above issues 
\[
\Pi^n_\mathrm{softlin} = \bcc{ \pi^n | \exists \theta_n = \brr{\theta_{n,1}, \dots, \theta_{n,H} }  \quad \text{s.t.} ~~~\pi_h^n(a^n|x) = \frac{\exp\brr{\eta \phi(x,a^n)^\top\theta_{n,h} }}{\sum_{\tilde{a}^n\in \A^n} \exp\brr{\eta \phi(x,\tilde{a}^n)^\top\theta_{n,h} }}}.
\]
and we impose the following assumption which allow the Nash equilibrium of the original game to be almost realized for $\eta$ large enough.

\begin{assumption}
    We require that the expert policy collecting the data $\pi_{\expert,h}$ is in the set of limit points of the set $\Pi_{\mathrm{softlin}}$. That is, $\pi^n_{\expert} \in \lim_{\eta \rightarrow \infty} \Pi^n_{\mathrm{softlin}}$.
\end{assumption}

\paragraph{Discussion of the assumption.}
To gain intuition about the above assumption we can consider the following example in which the expert chooses an action in the Nash support with exponentially high probability rather than with probability $1$. %
which leverages the concept of Quantile Response Equilibria in a normal form game is defined as follows. 
\begin{definition}\textbf{Quantile Response Equilibria.} Let $Q$ be a collection of payoff matrices $\bcc{Q_1, \dots, Q_N}$. We define the regularize payoff matrix for the player $n\in[N]$ 
\[
\tilde{Q}(a^1, \dots, a^N) = Q(a^1, \dots, a^N) - \frac{1}{\eta}\sum_{a^n\in\A^n} \pi(a^n) \log \pi(a^n) 
\]
Let us define the value function for each $n \in [N]$, the state value function $$\tilde{V}^{\pi^n, \bar{\pi}^{-n}} := \sum_{a^n \in \A^n}\sum_{a^{-n} \in \A^{-n}} \pi^n(a^n)\bar{\pi}^{-n}(a^{-n}) Q(a^1, \dots, a^N).$$
Then, a policy $\pi_{\mathrm{QRE}}$ is a Quantile Response Equilibrium iff
\[
\max_{n \in \bcc{1,2}} \max_{\pi^{n} \in \Pi^{n}} \tilde{V}^{\pi^n, \pi^{-n}_{\mathrm{QRE}}}_n - \tilde{V}^{\pi_{\mathrm{QRE}}}_n = 0
\]
\end{definition}
The following lemma justifies shows that the class $\Pi^n_{\mathrm{softlin}}$ can realize the quantile response equilibria of the game instantiated with the action value functions of the observed Nash equilibrium.
\begin{lemma}
Let us consider a collection of $\abs{\X}$ normal form games each with payoff matrices given by the set $\bcc{Q_1^{\pi_{\expert}}(x, \cdot), \dots, Q_N^{\pi_{\expert}}(x, \cdot) }$ which are the action value functions of the expert policy in the linear markov games at hand. Then, the quantile response equilibria of this game belongs to $\Pi_{\mathrm{softlin}}$.
\end{lemma}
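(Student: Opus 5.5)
The plan is to characterize the QRE of each state-wise normal form game through its first-order optimality conditions, and then read off the softmax-linear form. Fix a state $x$ and a stage $h$, and consider the regularized normal form game with payoffs $Q^{\pi_{\expert}}_{n,h}(x,\cdot)$. For player $n$, the regularized value against a fixed $\bar\pi^{-n}$ is
\[
\sum_{a^n}\pi^n(a^n)\,\bar Q_n(a^n) - \frac{1}{\eta}\sum_{a^n}\pi^n(a^n)\log\pi^n(a^n),
\]
where $\bar Q_n(a^n)$ denotes the payoff averaged over the opponents' actions under $\bar\pi^{-n}$. This objective is strictly concave on the simplex, so the regularized best response is unique. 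The Gibbs variational principle (equivalently, the KKT conditions with the entropy acting as a barrier that keeps the solution in the interior of the simplex) shows that the best response is $\pi^n(a^n)\propto\exp(\eta\,\bar Q_n(a^n))$. At a QRE every player plays such a best response to the others, so the QRE takes the form $\pi^n_{\mathrm{QRE}}(a^n|x)\propto\exp(\eta\,\bar Q_n(x,a^n))$.

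The key step is to show that $\bar Q_n(x,\cdot)$ is linear in $\phi(x,\cdot)$. In the lemma's setting the payoff to player $n$ in state $x$ is the expert action-value function $Q^{\pi_{\expert}}_{n,h}(x,a^n)$. By the linearity of action-value functions (the consequence of the linear MG assumption stated just before, giving $Q^{\pi}_{n,h}(x,a^n)=\phi(x,a^n)^\top\theta^{\pi}_{n,h}$), this payoff is already an expectation over the expert opponents and equals $\phi(x,a^n)^\top\theta^{\expert}_{n,h}$. Taking the normal-form payoff to be this player-$n$ marginal, the average against any $\bar\pi^{-n}$ is unchanged. Hence $\bar Q_n(x,a^n)=\phi(x,a^n)^\top\theta^{\expert}_{n,h}$.

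If one prefers to read the payoff matrix as depending on the joint actions, I would use the linear MG assumption in its form for an arbitrary Markov $\pi^{-n}$. Averaging reward plus next-step value over $\pi^{-n}_{\mathrm{QRE}}$ gives $\phi(x,a^n)^\top(w_{\pi^{-n}}+\sum_{x'}M_{\pi^{-n}}(x')V(x'))$, which is again linear in $\phi(x,a^n)$ with some parameter $\theta_{n,h}$.

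Substituting this into the Gibbs form gives $\pi^n_{\mathrm{QRE},h}(a^n|x)=\exp(\eta\phi(x,a^n)^\top\theta_{n,h}-\zeta_{n,h}(x))$ with $\zeta_{n,h}(x)$ the log-partition function. The parameter $\theta_{n,h}$ does not depend on $x$, so gluing these policies across states and stages yields an element of $\Pi^n_{\mathrm{softlin}}$. It remains to check the norm constraint $\theta_{n,h}\in\Theta$, which holds with $\|\theta_{n,h}\|\le B_\theta$ by the stated bound on the parameters of the value functions.

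The main obstacle is making sure that the parameter is shared across states and independent of the QRE itself. With the expert-$Q$ interpretation this is immediate. With the joint-payoff interpretation, the opponent's QRE policy enters through $M_{\pi^{-n}}$, and one has to rely on the assumption that linearity holds for every Markov $\pi^{-n}$ rather than only for $\pi_{\expert}$. I would therefore state the lemma with the expert-$Q$ interpretation, and note the alternative only briefly.
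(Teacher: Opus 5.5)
Your proposal is correct and follows the paper's own argument: the QRE at each state is the softmax of $\eta$ times the expert action-value function, and since $Q^{\pi_{\expert}}_{n,h}(x,a^n)=\phi(x,a^n)^\top\theta^{\expert}_{n,h}$ with a state-independent parameter, the QRE is a softmax of a linear form and so lies in $\Pi_{\mathrm{softlin}}$. You also supply details the paper's one-line proof leaves out. These are the Gibbs-form derivation of the regularized best response, the check $\|\theta_{n,h}\|\le B_\theta$, and an explicit statement of the marginal-payoff reading of the payoff matrices, which the paper's proof uses implicitly.
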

\begin{proof}
    The proof leverages the fact that at each state $x\in \X$, the quantile response equilibrium policy takes the form 
    $\pi^n_{\mathrm{QRE}}(a^n|x) = 
    \frac{\exp^{\eta Q^{\pi_{\expert}} (x,a^n)}}{\sum_{\tilde{a}^n\in\A^n}\exp^{\eta Q^{\pi_{\expert}} (x,\tilde{a}^n)}}$. Then, by linearity of the Markov game there must exist a vector $\theta^E$ such that $Q^{\pi_{\expert}} (x,\tilde{a}^n) = \phi(x,\tilde{a}^n)^T\theta^E$. Therefore, $\pi_{\mathrm{QRE}}$ is the softmax of a linear form and lies in $\Pi_{\mathrm{softlin}}$.
\end{proof}
Finally, it is known that in the limit of $\eta\rightarrow\infty$ it is known ( see \citet{mckelvey1995quantal}) that any limit point of a QRE is a Nash equilibrium of the normal form games. Therefore, for any $x\in \X$ , $\pi_{\mathrm{QRE}}(\cdot|x)$ tends to the Nash equilibrium of the normal form games with payoff matrices $\bcc{Q_1^{\pi_{\expert}}(x, \cdot), \dots, Q_N^{\pi_{\expert}}(x, \cdot) }$ which is clearly a Nash equilibrium of the original game.
Moreover, by continuity in $\eta$ it means that all non-isolated Nash equilibria are limit points of some QRE.
If we were, therefore, imposing that only the Nash value can be written in linear form, we would need to assume that the dataset is collected exactly by one of the Nash equilibrium which can be recovered as limit of a QRE\footnote{\citet[Theorem 5]{eibelshauser2019markov} showed existence of a unique limiting Nash equilibrium starting from the unique QRE that is obtained for $\eta=0$. Changing the initial value of $\eta$, different limits can be obtained but there are also some (non-isolated) Nash equilibria which can not be recovered as limit of a QRE, namely the isolated Nash equilibria. As a practical example of a Nash equilibrium we can recover consider the zero sum normal form games with payoff matrix $\begin{pmatrix}  1 & 0 \\
1 & 0 \\
\end{pmatrix}$. 
For the row player every policy is a Nash equilibrium and no matter what the column player does both actions are best responses. Therefore, the state action value functions against any policy of the column player will always have equal entries for both actions. As a consequence only the uniform distribution over row player's actions can be recovered as limit of a logit quantile response equilibria.}. Our assumption is less restrictive than this because we can realize even action value functions which are not attained by an equilibrium profile. Therefore, $\Pi^n_{\mathrm{softlin}}$ is richer and more likely to realize the observe expert behaviour for large $\eta$.

We now have all the elements to state the main result for non-interactive imitation learning in general $N$-players games.

\begin{theorem} \label{thm:featureBC_n}
\textbf{Main result for BC in $N$-players games}
Let the features concentrability coefficient be defined as $$ \cphimax : = \max_{n\in[N],h\in[H]}\max_{\pi^{-n} \in \Pi^{-n}_{\mathrm{softlin}}} \max_{\pi^n_{\star} \in \mathrm{br}(\pi^{-n} )} \norm{\phi_h^{\pi^n_{\star},\pi^{-n}_{\expert}} }_{\br{\Lambda^{n}_{\expert,h}}^{-1}}, $$
where , setting $\lambda = 1/\tauE$ , we define the expert features covariance matrix as $$\Lambda^n_{\expert,h} = \sum_{x',a^n}  \phi(x', a^n)\phi(x', a^n)^\top \pi^{n}_{\expert}(a^n|x') \nu_{h}^{\pi_{\expert}}(x') + \lambda I.$$
Then, the output of BC is an $\varepsilon$-approximate Nash equilibrium with probability $1-\delta$ if run taking as input a dataset containing %
$\tauE = \widetilde{\mathcal{O}}\brr{\frac{H^5 N^4 \mathcal{C}^2_{\max,\phi} dB^2 \log(A_{\max} B_\theta\delta^{-1})}{\varepsilon^2}}$ trajectories from a Nash equilibrium.
\end{theorem}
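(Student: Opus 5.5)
We bound the Nash gap player by player and reduce each deviation term to an expert-weighted imitation error, following the structure of the tabular BC analysis of \citet{freihaut2025rate}. The change of measure is performed at the level of features rather than states.

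Fix a player $n$ and a best response $\pi^n_\star\in\mathrm{br}(\piout^{-n})$. We decompose
\[
\innerprod{\initial}{V_n^{\pi^n_\star,\piout^{-n}} - V_n^{\piout}} = \innerprod{\initial}{V_n^{\pi^n_\star,\piout^{-n}} - V_n^{\pi^n_\star,\pi^{-n}_{\expert}}} + \innerprod{\initial}{V_n^{\pi^n_\star,\pi^{-n}_{\expert}} - V_n^{\pi_{\expert}}} + \innerprod{\initial}{V_n^{\pi_{\expert}} - V_n^{\piout}}.
\]
The middle term is $\le 0$ because $\pi_{\expert}$ is a Nash equilibrium.

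The last term is handled by a performance-difference or simulation lemma across all players. Each step where some player $n'$ switches from $\pi^{n'}_{\expert}$ to $\piout^{n'}$ costs at most $H\sum_h \mathbb{E}_{X\sim\nu^{\pi_{\expert}}_h}\TV(\piout^{n'}_h,\pi^{n'}_{\expert,h})(X)$, since this term is evaluated directly under expert occupancy. The first term is evaluated under the deviating occupancy $\nu^{\pi^n_\star,\pi^{-n}_{\expert}}_h$ and gives
\[
H\sum_h\sum_{n'\neq n}\mathbb{E}_{X\sim \nu_h^{\pi^n_\star,\pi^{-n}_{\expert}}}\TV(\piout^{n'}_h,\pi^{n'}_{\expert,h})(X).
\]
This sum over opponents, together with the later union bounds, produces the polynomial factors in $N$.

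The key step is to transfer the deviating occupancy to expert data through the features, which is the content of Lemma~\ref{lemma:change_of_measure}. By the linear MG assumption, the state distribution at step $h$ under $(\pi^n_\star,\pi^{-n}_{\expert})$ is
\[
\nu_h(x)=\sum_{x',a}\phi(x',a)^\top M_{\pi^{-n}_{\expert}}(x')\,\pi^n_\star(a|x')\,\nu_{h-1}(x') = (\phi^{\pi^n_\star,\pi^{-n}_{\expert}}_{h-1})^\top M(x).
\]
Hence for any bounded $f\ge 0$,
\[
\mathbb{E}_{\nu_h}f = \innerprod{\phi_{h-1}^{\star}}{\textstyle\sum_x M(x)f(x)}.
\]
Applying Cauchy--Schwarz in the $\Lambda^n_{\expert,h-1}$ geometry gives
\[
\mathbb{E}_{\nu_h} f\le \cphimax\,\big\|\textstyle\sum_x M(x)f(x)\big\|_{\Lambda^n_{\expert,h-1}}.
\]
Expanding the norm yields
\[
\mathbb{E}_{(x',a)\sim\text{expert}}\big[(\phi(x',a)^\top \textstyle\sum_x M(x)f(x))^2\big] + \lambda\,\big\|\textstyle\sum_x M f\big\|^2.
\]
The first piece equals $\mathbb{E}_{\text{expert}}\big[(\mathbb{E}[f(X_h)\mid x',a])^2\big]$. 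Since $f=\TV\le 2$, Jensen bounds it by $2\,\mathbb{E}_{\text{expert}} f(X_h)$-type quantities, after a one-step shift from $\pi^n_{\expert}$ to $\pi^n_{\star}$ handled inside the feature expectation. The second piece is at most $\lambda B^2\cdot O(1)=O(B^2/\tauE)$.

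At this point I expect a mismatch: the expert term involves the transition under $\pi^n_{\expert}$ at the previous state, whereas $\phi^\star$ uses $\pi^n_\star$'s action. I will handle this by taking the covariance over $(x',a^n)$ with the expert's own action, which is exactly how $\Lambda^n_{\expert,h}$ is defined. The result is a bound of the form $\mathbb{E}_{\nu_h^{\star}}\TV\lesssim \cphimax\sqrt{\mathbb{E}_{\nu_h^{\pi_{\expert}}}\TV^2 + B^2/\tauE}$. An index off-by-one between $h$ and $h-1$ is absorbed into the maximum over $h$.

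Finally, the expert-distribution squared TV, $\sum_h\mathbb{E}_{\nu^{\pi_{\expert}}_h}\TV^2(\piout^{n'}_h,\pi^{n'}_{\expert,h})$, is controlled by the conditional MLE guarantee under misspecification, the same ingredient as Lemma~\ref{lemma:MLE}. The misspecification comes from Assumption~\ref{assumption:limitnash}: with $\eta=\log\tauE/H$, the class $\Pi_{\mathrm{softlin}}$ contains a policy within $O(1/\tauE)$ of the expert. A covering of $\Theta$ at scale $1/\tauE$ gives a log-covering number of $d\log(\tauE A_{\max}B_\theta)$. This yields an error of $\widetilde{O}(d\log(A_{\max}B_\theta/\delta)/\tauE)$ per $(n,h)$, after a union bound over $N H$ pairs.

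Combining the pieces gives
\[
\mathrm{NG}\lesssim H\cdot H\cdot N\cdot \cphimax\sqrt{N^2 dB^2\log(\cdot)/\tauE}\cdot\mathrm{poly}.
\]
Squaring and inverting gives the stated $H^5N^4\cphimax^2dB^2/\varepsilon^2$, where the extra $\sqrt H$ comes from Cauchy--Schwarz over $h$. I expect the main obstacle to be the feature-level change-of-measure step, in particular the action mismatch between the best-response feature expectation and the expert covariance.
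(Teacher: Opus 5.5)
Your proposal is correct and follows the paper's proof essentially step by step. The equilibrium property reduces the gap to a deviation term under $\nu_h^{\pi^n_\star,\pi^{-n}_{\expert}}$ plus a value term under $\nu_h^{\pi_{\expert}}$. The deviation term is then moved onto expert data by the feature-level change of measure of Lemma~\ref{lemma:change_of_measure}: linearity of the occupancy in $\phi_{h-1}^{\pi^n_\star,\pi^{-n}_{\expert}}$, Cauchy--Schwarz in the $\Lambda^n_{\expert,h-1}$ geometry, then Jensen. The action mismatch you anticipate is absorbed into $\cphimax$ exactly as you describe. The resulting expert-weighted $\TV^2$ is controlled by the misspecified MLE bound with $\eta=\log\tauE/H$ and $\lambda=1/\tauE$.

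The differences are minor. You run the MLE analysis per step $h$ rather than through the trajectory-level Hellinger bound of \citet{rohatgi2025computational}, which is if anything slightly tighter in $H$. You fix $\pi^n_\star\in\mathrm{br}(\piout^{-n})$ up front, which is exactly what makes $\cphimax$ applicable and is left implicit in the paper. Finally, Jensen should give $\mathbb{E}_{\nu_h^{\pi_{\expert}}}[\TV^2]$ rather than $2\,\mathbb{E}[\TV]$, which is what your final display correctly uses.
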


\subsection{Proof of Theorem~\ref{thm:featureBC_n} (N-player general-sum version of Theorem~\ref{thm:featureBC})}
In this section, we present the proof of Theorem~\ref{thm:featureBC_n} which is the $N$ player general-sum version of Theorem~\ref{thm:featureBC}. The main idea of the proof is to split the sum into a exploitation gap and a value-based gap. Note that the value based gap only appears in the $N$-player general-sum setting and could be omitted in the two player zero-sum case. To bound the exploitation gap a change of measure is required. Therefore, we will make use of the following Lemma, which will be proven after the proof of our main result (see Lemma~\ref{lemma:proof_change}).

\begin{lemma} \label{lemma:change_of_measure}
Let $ W: \X \rightarrow \mathbb{R}^+$ be any state dependent function. Moreover, let the feature expectation at step $h\in[H]$ for a policy profile $\pi^n_\star, \pi^{-n}_{\expert}$ for an arbitrary $\pi^n_{\star}$ be defined as 
\[
\phi_{h-1}^{\pi^n_{\star},\pi^{-n}_{\expert}} = \sum_{x',a^n}  \phi(x', a^n) \pi^n_{\star}(a^n|x') \nu_{h-1}^{\pi^n_{\star},\pi^{-n}_{\expert}}(x')
\]
and let the expert features covariance matrix at stage $h\in[H]$ be defined as $$\Lambda^n_{\expert,h-1} = \sum_{x',a^n}  \phi(x', a^n)\phi(x', a^n)^\top \pi^{n}_{\expert}(a^n|x') \nu_{h-1}^{\pi_{\expert}}(x') + \lambda I.$$
Then, the following change of measure argument holds true
\[
\sum^H_{h=1} \mathbb{E}_{X \sim \nu_h^{\pi^n_{\star}, \pi^{-n}_{\expert}}}\bs{W(X)} \leq \sum^H_{h=1} \norm{\phi_{h-1}^{\pi^n_{\star},\pi^{-n}_{\expert}} }_{\br{\Lambda^{n}_{\expert,h-1}}^{-1}}  \sqrt{ \mathbb{E}_{X \sim \nu_h^{\pi_{\expert}}}\bs{W^2(X)}  + \lambda B^2 W^2_{\max}}.
\]
\end{lemma}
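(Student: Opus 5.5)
The plan is to prove the inequality stage by stage, for each fixed $h$, and then sum over $h\in[H]$. The key structural fact is Assumption~\ref{assumption:linear}. When the players other than $n$ follow $\pi^{-n}_{\expert}$, the one-step transition from $(x',a^n)$ is linear in the features: $\sum_{a^{-n}}\pi^{-n}_{\expert}(a^{-n}|x')P(x|x',a^n,a^{-n}) = \phi(x',a^n)^\top M_{-n}(x)$. This lets us write the next-state occupancy of \emph{any} deviation $\pi^n_\star$ of player $n$ as a linear functional of its feature expectation vector:
\[
\nu_h^{\pi^n_\star,\pi^{-n}_{\expert}}(x) = \sum_{x',a^n}\nu_{h-1}^{\pi^n_\star,\pi^{-n}_{\expert}}(x')\,\pi^n_\star(a^n|x')\,\phi(x',a^n)^\top M_{-n}(x) = \big(\phi_{h-1}^{\pi^n_\star,\pi^{-n}_{\expert}}\big)^\top M_{-n}(x).
\]
The same identity holds with $\pi^n_\star$ replaced by $\pi^n_{\expert}$, which gives $\nu_h^{\pi_{\expert}}$. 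For $h=1$ both occupancies equal $\initial$. I would treat this term with the convention that the stage-$0$ objects encode $\initial$; alternatively it is bounded directly, since the two measures coincide. That is a bookkeeping point only.

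Fix $h$ and set $u_h := \sum_{x} M_{-n}(x)W(x)\in\mathbb{R}^d$. Then
\[
\mathbb{E}_{X\sim\nu_h^{\pi^n_\star,\pi^{-n}_{\expert}}}[W(X)] = \big(\phi_{h-1}^{\pi^n_\star,\pi^{-n}_{\expert}}\big)^\top u_h.
\]
Inserting $(\Lambda^n_{\expert,h-1})^{-1/2}(\Lambda^n_{\expert,h-1})^{1/2}$ and applying Cauchy--Schwarz bounds this by
\[
\norm{\phi_{h-1}^{\pi^n_\star,\pi^{-n}_{\expert}}}_{(\Lambda^n_{\expert,h-1})^{-1}}\,\norm{u_h}_{\Lambda^n_{\expert,h-1}}.
\]
All the deviation-dependent information now sits in the first factor, which is exactly the quantity appearing in the lemma.

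Next, expand the second factor using the definition of the covariance:
\[
\norm{u_h}^2_{\Lambda^n_{\expert,h-1}} = \sum_{x',a^n}\nu_{h-1}^{\pi_{\expert}}(x')\,\pi^n_{\expert}(a^n|x')\,\big(\phi(x',a^n)^\top u_h\big)^2 + \lambda\norm{u_h}^2.
\]
By linearity again, $\phi(x',a^n)^\top u_h = \mathbb{E}[W(X_h)\mid X_{h-1}=x',A^n_{h-1}=a^n]$ under the dynamics with the opponents playing $\pi^{-n}_{\expert}$. Jensen's inequality bounds its square by $\mathbb{E}[W^2(X_h)\mid x',a^n]$. Averaging over $(x',a^n)$ drawn from the expert's stage-$(h-1)$ occupancy then yields exactly $\mathbb{E}_{X\sim\nu_h^{\pi_{\expert}}}[W^2(X)]$.

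For the regularization term, use the bound $\norm{M_{-n}}\le B$. I would read it in the operator sense, $\norm{\sum_x M_{-n}(x)g(x)}_2\le B\norm{g}_\infty$, which is the standard convention from \citet{jin2019provably}. This gives $\lambda\norm{u_h}^2\le\lambda B^2W_{\max}^2$. Taking square roots and summing over $h$ gives the claim.

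The main obstacle is making the norm convention on $M_{-n}$ precise enough to justify $\norm{u_h}\le BW_{\max}$. The rest is Cauchy--Schwarz plus Jensen, and the crucial point there is that the linearity in Assumption~\ref{assumption:linear} is taken with respect to the \emph{expert} opponent policy, which is exactly the one held fixed in both measures.
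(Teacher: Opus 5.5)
Your proposal is correct and follows essentially the same route as the paper's proof. Both write $\nu_h^{\pi^n_\star,\pi^{-n}_{\expert}}(x)$ as $\langle \phi_{h-1}^{\pi^n_\star,\pi^{-n}_{\expert}}, M_{-n}(x)\rangle$, apply Cauchy--Schwarz in the $\Lambda^n_{\expert,h-1}$-weighted norm, expand $\norm{\sum_x M_{-n}(x)W(x)}^2_{\Lambda^n_{\expert,h-1}}$, use Jensen to reach $\mathbb{E}_{X\sim\nu_h^{\pi_{\expert}}}[W^2(X)]$, and bound the regularization term by $\lambda B^2 W_{\max}^2$; your remarks on the $h=1$ term and on the norm convention for $M_{-n}$ make explicit two points the paper leaves implicit.
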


Having stated the lemma that enables us to do the change of measure, we proceed with the proof of Theorem~\ref{thm:featureBC_n}.
\begin{proof}[Proof of Theorem~\ref{thm:featureBC_n}]
Now, we can start with the analysis of behavioral cloning, which optimizes the following objective for each player $n \in [N]$,
\[
\pi^n_{\mathrm{out}} = \argmax_{\pi^n\in\Pi_{\mathrm{softlin}}} \sum^H_{h=1}\sum^{\tauE}_{i=1} \log \pi^n_h(\Aeinh|\Xeinh),
\]
where $\Xeinh, \Aeinh$ are the state-actions pairs in the dataset $\mathcal{D}^n_{\expert}$. We have the following decomposition.
\begin{align}
        & \max_{\pi^n_{\star}\in\Pi^n}  \innerprod{\initial}{V_n^{\pi^n_{\star},\piout^{-n}} - V_n^{\piout}} = \underbrace{\max_{\pi^n_{\star}\in\Pi^n}  \innerprod{\initial}{V_n^{\pi^n_{\star},\piout^{-n}} - V_n^{\pi_{\expert}}}}_{T_1} +  \underbrace{\innerprod{\initial}{V_n^{\pi_{\expert}} - V_n^{\piout}}}_{T_2}
    \label{eq:exploit+value_gap}
    \end{align}
    We will bound the two terms separately. For the first term ($T_1$), we have
        \begin{align*}
\max_{\pi^n_{\star}\in\Pi^n} &\innerprod{\initial}{V_n^{\pi^n_{\star},\piout^{-n}} - V_n^{\pi_{\expert}}}
        \overset{(i)}{\leq}   \max_{\pi^n_{\star}\in\Pi^n} \innerprod{\initial}{V_n^{\pi^n_{\star},\piout^{-n}} - V_n^{\pi^n_{\star}, \pi^{-n}_{\expert} }} \\
        &=\max_{\pi^n_{\star}\in\Pi^n} \sum^H_{h=1}\innerprod{r_h^n}{\mu_h^{\pi^n_{\star},\piout^{-n}} - \mu_h^{\pi^n_{\star}, \pi^{-n}_{\expert} }} \\
        &\overset{(ii)}{\leq}\max_{\pi^n_{\star}\in\Pi^n} \sum^H_{h=1}\norm{r^n_h}_{\infty}\norm{\mu_h^{\pi^n_{\star},\pi^{-n}_{\mathrm{out},h}} - \mu_h^{\pi^n_{\star}, \pi^{-n}_{\expert,h} }}_1  \\
        &\overset{(iii)}{\leq} \max_{\pi^n_{\star}\in\Pi^n} \sum^H_{h=1} \sum^h_{h'=1} \mathbb{E}_{X\sim \nu_{h'}^{\pi^n_{\star}, \pi^{-n}_{\expert}}}\TV(\pi^{-n}_{\mathrm{out}, h'},  \pi^{-n}_{\expert,h'})(X) \\
        &\leq H\max_{\pi^n_{\star}\in\Pi^n} \sum^H_{h=1} \mathbb{E}_{X\sim \nu_h^{\pi^n_{\star}, \pi^{-n}_{\expert}}}\TV(\pi^{-n}_{\mathrm{out}, h},  \pi^{-n}_{\expert,h})(X),
    \end{align*} 
    where in $(i)$ we used the fact that $\pi_{\expert}$ is a Nash equilibrium and therefore the policies are best responses to each other, in $(ii)$ we used Hölder's inequality with $p=\infty$ and $q=1$ and in $(iii)$ we used that the rewards are bounded as well as the performance difference lemma combined with Hölder's inequality (see e.g. \citet{Kakade2002ApproximatelyOA}).
    Therefore, summing over the players index $n \in [N]$, we obtain
    \begin{align*}
\sum^N_{n=1} \max_{\pi^n_{\star}\in\Pi^n}  \innerprod{\initial}{V_n^{\pi^n_{\star},\piout^{-n}} - V_n^{\pi_{\expert}}} &\leq \sum^N_{n=1}\max_{\pi^n_{\star}\in\Pi^n} H \sum^H_{h=1}\mathbb{E}_{X \sim \nu_h^{\pi^n_{\star}, \pi^{-n}_{\expert} }}\bs{ \sum^N_{n'\neq n}\mathrm{TV}(\pi^{n'}_{\mathrm{out},h},\pi^{n'}_{\expert,h})(X)} \\
& \leq \sum^N_{n=1}\max_{\pi^n_{\star}\in\Pi^n} H \sum^H_{h=1} \sum^N_{n'\neq n} \mathbb{E}_{X \sim \nu_h^{\pi^{n'}_\star, \pi^{-n'}_{\expert} }}\bs{ \mathrm{TV}(\pi^{n}_{\mathrm{out},h},\pi^{n}_{\expert,h})(X)}.
    \end{align*}
At this point, we need (Lemma~\ref{lemma:change_of_measure}), which enables a change of measure that does not introduce a dependence on $\mathcal{C}_{\max}$ but only on $\cphimax$. It is of key  importance to use the linearity of the transition dynamics which implies linearity of the state occupancy measure of any policy.

Invoking Lemma~\ref{lemma:change_of_measure} with $W(X) = \TV(\pi^{n}_{\mathrm{out},h},\pi^{n}_{\expert,h})(X)$, we obtain
\begin{align*}
\sum^N_{n=1} & \max_{\pi^n_{\star}\in\Pi^n}  \innerprod{\initial}{V_n^{\pi^n_{\star},\piout^{-n}} - V_n^{\pi_{\expert}}}  \leq \\&
H\sum^N_{n=1}\max_{\pi^n_{\star}\in\Pi^n} \sum^H_{h=1} \sum^N_{n'\neq n} \norm{\phi_{h-1}^{\pi^n_{\star},\pi^{-n}_{\expert}} }_{\br{\Lambda^{n}_{\expert,h-1}}^{-1}}  \sqrt{2 \mathbb{E}_{X \sim \nu_h^{\pi_{\expert}}}\bs{ \mathrm{TV}^2(\pi^{n}_{\mathrm{out},h},\pi^{n}_{\expert,h})(X)}  + 4 \lambda B^2 }
\end{align*}
For the second term ($T_2$) in \eqref{eq:exploit+value_gap}, no change of measure is required. Therefore, we simply obtain
\begin{align*}
\sum^N_{n=1}\innerprod{\initial}{V^{\pi_{\expert}}_n - V^{\piout}_n} %
&\leq  HN\sum^H_{h=1} \sum^N_{n=1} \mathbb{E}_{X \sim \nu_h^{\pi_{\expert} }}\bs{\TV \brr{\pi^n_{\mathrm{out},h},\pi^n_{\expert,h}}(X)}.
\end{align*}
Therefore, we obtain that the Nash gap is upper-bounded by
\begin{align*}
\sum^N_{n=1} &\max_{\pi^n_{\star}\in\Pi^n}  \innerprod{\initial}{V_n^{\pi^n_{\star},\piout^{-n}} - V_n^{\piout}} \leq HN\sum^N_{n=1} \sqrt{ H \sum^H_{h=1} \mathbb{E}_{X \sim \nu_h^{\pi_{\expert} }}\bs{\TV^2 \brr{\pi^n_{\mathrm{out},h},\pi^n_{\expert,h}}(X)}} \\&+ H\sum^N_{n=1}\max_{\pi^n_{\star}\in\Pi^n} \max_{h \in [H]} \sum^N_{n'\neq n} \norm{\phi_{h-1}^{\pi^n_{\star},\pi^{-n}_{\expert}} }_{\br{\Lambda^{n}_{\expert,h-1}}^{-1}}  \sqrt{2 H \sum^H_{h=1} \mathbb{E}_{X \sim \nu_h^{\pi_{\expert}}}\bs{ \mathrm{TV}^2(\pi^{n}_{\mathrm{out},h},\pi^{n}_{\expert,h})(X)}  + 4 \lambda B^2H^2 }. 
\end{align*}
At this point, we upper bound the total variation distance by the Hellinger divergence
\[
\mathbb{E}_{X \sim \nu_h^{\pi_{\expert}}}\bs{ \mathrm{TV}^2(\pi^{n}_{\mathrm{out},h},\pi^{n}_{\expert,h})(X)} \leq 4\mathbb{E}_{X \sim \nu_h^{\pi_{\expert}}}\bs{ D^2_{\mathrm{Hel}}(\pi^{n}_{\mathrm{out},h},\pi^{n}_{\expert,h})(X)}
\]
where $D_{\mathrm{Hel}}$ is the Hellinger divergence defined as 
\[
D^2_{\mathrm{Hel}}(p,q) = \sum_{z\in \mathcal{Z} } \br{\sqrt{p(z)} - \sqrt{q(z)}}^2
\] for some $p,q \in \Delta_{\mathcal{Z}}$ for some finite set $\mathcal{Z}$.
At this point, we can upper bound the sum of the expected local Hellinger divergences with the divergence between trajectories invoking \citet[Lemma H.31]{rohatgi2025computational},
\[
\sum^H_{h=1} \mathbb{E}_{X \sim \nu_h^{\pi_{\expert}}}\bs{ D_{\mathrm{Hel}}^2(\pi^{n}_{\mathrm{out},h},\pi^{n}_{\expert,h})(X)} \leq 4 H D_{\mathrm{Hel}}^2(\mathbb{P}^{\piout^{n}, \pi^{-n}_{\expert}}, \mathbb{P}^{\pi_{\expert}}),
\]
where $\mathbb{P}^\pi$ denotes the probability over trajectories induced by the profile $\pi$.
Moreover, invoking \citet[Theorem 4.2]{rohatgi2025computational} where 
$B_{\mathrm{ratio}} = \max_{n,h \in [N]\times[H]} \max_{\pi \in \Pi^n_{\mathrm{softlin}}} \norm{\frac{\pi_{\expert}}{\pi}}_{\infty} $, we obtain
\begin{align*}
D_{\mathrm{Hel}}^2(\mathbb{P}^{\piout^{n}, \pi^{-n}_{\expert}}, \mathbb{P}^{\pi_{\expert}}) &\leq \inf_{\epsilon:\epsilon>0}\bcc{\epsilon H + \frac{\log (\abs{\mathcal{C}_{\epsilon}(\log \Pi_{\mathrm{softlin}})}/\delta)}{\tau_E}} + \frac{H \log (e B_\mathrm{ratio})\log(1/\delta)}{\tau_E} \\
&\phantom{=}+ H \log(e B_\mathrm{ratio} ) \min_{\underline{\pi}^n\in \Pi^n_{\mathrm{softlin}}} \mathbb{E}_{X \sim \nu_h^{\pi_{\expert}}}\bs{ D_{\mathrm{Hel}}^2(\underline{\pi}^{n}_h,\pi^{n}_{\expert,h})(X)}
\\
&\leq \inf_{\epsilon:\epsilon>0}\bcc{\epsilon H + \frac{\log (\abs{\mathcal{C}_{\epsilon}(\log \Pi_{\mathrm{softlin}})}/\delta)}{\tau_E}} + \frac{H \log (e B_\mathrm{ratio})\log(1/\delta)}{\tau_E} \\
&\phantom{=}+ H \log(e B_\mathrm{ratio} ) \underbrace{\min_{\underline{\pi}^n\in \Pi^n_{\mathrm{softlin}}} \mathbb{E}_{X \sim \nu_h^{\pi_{\expert}}}\bs{ \mathrm{TV}(\underline{\pi}^{n}_h,\pi^{n}_{\expert,h})(X)}}_{T_{\mathrm{mis}}}
\end{align*}
where the last inequality holds due to the fact that the squared Hellinger distance is upper bounded by the total variation distance.
The mispecification term ($T_{\mathrm{mis}}$) accounts for the case that the expert equilibrium providing demonstrations is a pure equilibrium or an equilibrium which put probability mass on certain action lower than $\nicefrac{\exp(\eta Q_{\min})}{(\exp(\eta Q_{\min})+ (A_{\max}-1)\exp(\eta Q_{\max}))}$ then such equilibrium can not be realized by the class $\Pi_{\mathrm{softlin}}$. In particular, for $Q_{\min} = 0$ and $Q_{\max} = H$ the minimum achievable value is $\pi_{\min} = \nicefrac{1}{(1 + (A_{\max}-1)\exp\brr{\eta H})}$.
At this point, for any state $x \in \X$ we have that the worst case approximation error is where the player $n$ uses a deterministic Nash, i.e. always players the action $a^\star_{n,h}$ at each stage $h \in [H]$.
Therefore,
we have that 
\begin{align*}
    \min_{\underline{\pi}\in \Pi_{\mathrm{softlin}}} \TV(\pi^n_{\expert,h}, \underline{\pi}_h)(x) &= \min_{\underline{\pi}\in \Pi_{\mathrm{softlin}}} 1 - \underline{\pi}_{h}(a^\star_n|x) + \sum_{a \neq a^\star_n } \underline{\pi}_h(a|x) \leq 2 (\abs{\A_n}-1) \underline{\pi}_{\min} \\&= \frac{2(A_{\max}-1)}{1 + (A_{\max}-1)\exp\brr{\eta H}}.
\end{align*}

With similar calculations, we can compute that $B_{\mathrm{ratio}} \leq 1 + (A_{\max}-1)\exp\brr{\eta H}$. Finally, we can bound the log covering number of the log policy class using Lemma~\ref{lemma:covering_log}. All in all, we obtain
\begin{align*}
D_{\mathrm{Hel}}^2(\mathbb{P}^{\piout^{n}, \pi^{-n}_{\expert}}, \mathbb{P}^{\pi_{\expert}})
&\leq \inf_{\epsilon:\epsilon>0}\bcc{\epsilon H +  \frac{ d \eta H \log 
\brr{\frac{2 A_{\max}^2 \eta  B_{\theta}}{\epsilon \delta}}}{\tau_E}} + \frac{\eta H^2 \log (2 e A_{\max} )\log(1/\delta)}{\tau_E} \\
&\phantom{=}+  \frac{ 2 \eta H^2 (A_{\max}-1) \log(2 e A_{\max}  ) }{1 + (A_{\max}-1)\exp\brr{\eta H}}.
\end{align*}

Hence, choosing $\epsilon = (\tauE)^{-1}$, we get
\begin{align*}
D_{\mathrm{Hel}}^2(\mathbb{P}^{\piout^{n}, \pi^{-n}_{\expert}}, \mathbb{P}^{\pi_{\expert}})
&\leq  \frac{ 2 d \eta H \log 
\brr{\frac{2 A_{\max}^2 \eta  B_{\theta} \tauE}{\delta}}}{\tau_E} + \frac{\eta H^2 \log (2 e A_{\max} )\log(1/\delta)}{\tau_E} \\
&\phantom{=}+  \frac{ 2 \eta H^2 (A_{\max}-1) \log(2 e A_{\max}  ) }{1 + (A_{\max}-1)\exp\brr{\eta H}}.
\end{align*}

In turns, the above bound yields 
\begin{align*}
\sum^H_{h=1} \mathbb{E}_{X \sim \nu_h^{\pi_{\expert}}}\bs{ \mathrm{TV}^2(\pi^{n}_{\mathrm{out},h},\pi^{n}_{\expert,h})(X)} &\leq \frac{ 8 d \eta H^2 \log 
\brr{\frac{2 A_{\max}^2 \eta  B_{\theta} \tauE}{\delta}}}{\tau_E} + \frac{4 \eta H^3 \log (2 e A_{\max} )\log(1/\delta)}{\tau_E} \\
&\phantom{=}+  \frac{ 8 \eta H^3 (A_{\max}-1) \log(2 e A_{\max}  ) }{1 + (A_{\max}-1)\exp\brr{\eta H}}.
\end{align*}

Therefore, setting $\eta$ in the definition of $\Pi_{\mathrm{softlin}}$ as $\eta = \log (\tauE)/H$ gives with probability at least $1-\delta$
\begin{align*}
\sum^H_{h=1}\mathbb{E}_{X \sim \nu_h^{\pi_{\expert}}}\bs{ \mathrm{TV}^2(\pi^{n}_{\mathrm{out},h},\pi^{n}_{\expert,h})(X)} \leq \frac{ 34 d H^2 \log( A_{\max} B_\theta \tauE \log(\tauE)\delta^{-1})}{\tauE}.
\end{align*}
Therefore, plugging in this final bound in the upper bound on the Nash gap
\begin{align*}
H\sum^N_{n=1}&\max_{\pi^n_{\star}\in\Pi^n}\innerprod{\initial}{V_n^{\pi^n_{\star},\piout^{-n}} - V_n^{\piout}} \leq H^{5/2}N^2 \sqrt{ \frac{ 34 d \log( A_{\max} B_\theta \tauE \log(\tauE)\delta^{-1})}{\tauE}}  \\&+ \sum^N_{n=1}\max_{\pi^n_{\star}\in\Pi^n} H^{5/2} \sum^N_{n'\neq n} \max_{h \in [H]}\norm{\phi_h^{\pi^n_{\star},\pi^{-n}_{\expert}} }_{\br{\Lambda^{n}_{\expert,h}}^{-1}}  \sqrt{  \frac{ 68 d \log( A_{\max} B_\theta \tauE \log(\tauE)\delta^{-1})}{\tauE}  + 4 \lambda B^2 }.
\end{align*}
Now, setting $\lambda = \tauE^{-1}$, we obtain that with probability at least $1-\delta$ it holds that
\begin{align*}
\sum^N_{n=1}\max_{\pi^n_{\star}\in\Pi^n} & \innerprod{\initial}{V_n^{\pi^n_{\star},\piout^{-n}} - V_n^{\piout}}\\&\leq  H^{5/2}N^2 \brr{\max_{n,h} \max_{\pi^n_{\star}\in\Pi^n} \norm{\phi_h^{\pi^n_{\star},\pi^{-n}_{\expert}} }_{\br{\Lambda^{n}_{\expert,h}}^{-1}} + 1} \sqrt{\frac{72 d B^2 \log( A_{\max} B_\theta \tauE \log(\tauE)\delta^{-1})}{\tauE}} \\
&\leq \mathcal{O}\brr{ H^{5/2}N^2 \cphimax\sqrt{\frac{72 d B^2 \log( A_{\max} B_\theta \tauE\delta^{-1})}{\tauE}}},
\end{align*}
where we introduced $ \cphimax : = \max_{n\in[N],h\in[H]}\max_{\pi^{-n} \in \Pi^{-n}_{\mathrm{softlin}}} \max_{\pi^n_{\star} \in \mathrm{br}(\pi^{-n} )} \norm{\phi_h^{\pi^n_{\star},\pi^{-n}_{\expert}} }_{\br{\Lambda^{n}_{\expert,h}}^{-1}} $.
Finally, setting $\tauE = \widetilde{\mathcal{O}}\brr{\frac{H^5 N^4 \mathcal{C}^2_{\max,\phi} dB^2 \log(A_{\max} B_\theta\delta^{-1})}{\varepsilon^2}}$ we get the final result.
Notably, thanks to the linearity of the dynamics we managed to avoid completely the dependence on $\mathcal{C}_{\max}$ replacing it with $\cphimax$.
\end{proof}
\subsection{Proof of Lemma~\ref{lemma:change_of_measure}}
\label{lemma:proof_change}
\begin{proof}
For any state $x \in \X$, let us consider a function $W(x) \in\mathbb{R}^+$ bounded by $W_{\max}$. Then, notice that by linearity of the transition dynamics, we have that
\begin{align*}
\nu_h^{\pi^n_{\star},\pi^{-n}_{\expert}}(x) &= \sum_{x',a^n} \sum_{a^{-n}}P(x|x',a^n, a^{-n} ) \pi^{-n}_{\expert,h-1}(a^{-n}|x') \pi^n_{\star,h-1}(a^n|x') \nu_{h-1}^{\pi^n_{\star},\pi^{-n}_{\expert}}(x') \\
&= \sum_{x',a^n}  \innerprod{\phi(x', a^n)}{M_{\pi^{-n}_{\expert}}(x)}  \pi^n_{\star,h-1}(a^n|x') \nu_{h-1}^{\pi^n_{\star},\pi^{-n}_{\expert}}(x') \\
&= \innerprod{\sum_{x',a^n}  \phi(x', a^n) \pi^n_{\star,h-1}(a^n|x') \nu_{h-1}^{\pi^n_{\star},\pi^{-n}_{\expert}}(x') }{M_{\pi^{-n}_{\expert}}(x)}  \\
&= \innerprod{\phi_{h-1}^{\pi^n_{\star},\pi^{-n}_{\expert}}}{M_{\pi^{-n}_{\expert}}(x)},
\end{align*}
where we introduced the notation $\phi_{h-1}^{\pi^n_{\star},\pi^{-n}_{\expert}} = \sum_{x',a^n}  \phi(x', a^n) \pi^n_{\star,h-1}(a^n|x') \nu_{h-1}^{\pi^n_{\star},\pi^{-n}_{\expert}}(x') $.
Using the linearity of the state occupancy measure, we obtain the following,
\begin{align}
\sum^H_{h=1} \mathbb{E}_{X \sim \nu_h^{\pi^n_{\star}, \pi^{-n}_{\expert}}}\bs{W(X)} &= \sum^H_{h =1}\sum_{x\in \X} \innerprod{\phi_{h-1}^{\pi^n_{\star},\pi^{-n}_{\expert}} }{M_{\pi^{-n}_{\expert}}(x)}   W(x) \\
&= \sum^H_{h =1}\innerprod{\phi_{h-1}^{\pi^n_{\star},\pi^{-n}_{\expert}} }{\sum_{x\in \X}  M_{\pi^{-n}_{\expert}}(x)W(x)}\\ 
&\leq \sum^H_{h=1} \norm{\phi_{h-1}^{\pi^n_{\star},\pi^{-n}_{\expert}} }_{\br{\Lambda^{n}_{\expert,h-1}}^{-1}} \norm{\sum_{x\in \X}  M_{\pi^{-n}_{\expert}}(x)W(x)}_{\Lambda^n_{\expert,h-1}}, \label{eq:norm_bound}
\end{align}
where we introduced the quantity $\Lambda^n_{\expert,h-1} = \sum_{x',a^n}  \phi(x', a^n)\phi(x', a^n)^\top \pi^{n}_{\expert,h-1}(a^n|x') \nu_{h-1}^{\pi_{\expert}}(x') + \lambda I$.
Then, let us compute
\begin{align*}
    &\Lambda^n_{\expert,h-1} \sum_{x\in \X}  M_{\pi^{-n}_{\expert}}(x)W(x) \\
    &=\sum_{x',a^n} \sum_{x\in \X}  \phi(x', a^n)\phi(x', a^n)^\top M_{\pi^{-n}_{\expert}}(x)W(x) \pi^{n}_{\expert,h-1}(a^n|x')  \nu_{h-1}^{\pi_{\expert}}(x') + \lambda \sum_{x\in \X}  M_{\pi^{-n}_{\expert}}(x)W(x) \\
    &=\sum_{x',a^n} \sum_{x\in \X}  \phi(x', a^n) P(x | x', a^n, \pi^{-n}_{\expert})W(x) \pi^{n}_{\expert,h-1}(a^n|x')  \nu_{h-1}^{\pi_{\expert}}(x') + \lambda \sum_{x\in \X}  M_{\pi^{-n}_{\expert}}(x)W(x).
\end{align*}
At this point,
\begin{align*}
    &\sum_{\tilde{x}\in \X}  M^\top_{\pi^{-n}_{\expert}}(\tilde{x}) W(\tilde{x})\Lambda^n_{\expert,h-1} \sum_{x\in \X}  M_{\pi^{-n}_{\expert}}(x)W(x) \\
    &= 
    \sum_{x',a^n} \sum_{x\in \X} \sum_{\tilde{x}\in \X}  M^\top_{\pi^{-n}_{\expert}}(\tilde{x}) \phi(x', a^n) P(x | x', a^n, \pi^{-n}_{\expert})W(x) W(\tilde{x})\pi^{n}_{\expert,h-1}(a^n|x')  \nu_{h-1}^{\pi_{\expert}}(x') + \lambda \norm{\sum_{x\in \X}  M_{\pi^{-n}_{\expert}}(x)W(x)}^2 \\
    &= 
    \sum_{x',a^n} \sum_{x\in \X} \sum_{\tilde{x}\in \X}  P(\tilde{x} | x', a^n, \pi^{-n}_{\expert})  P(x | x', a^n, \pi^{-n}_{\expert})W(x) W(\tilde{x})\pi^{n}_{\expert,h-1}(a^n|x')  \nu_{h-1}^{\pi_{\expert}}(x') + \lambda \norm{\sum_{x\in \X}  M_{\pi^{-n}_{\expert}}(x)W(x)}^2 
\end{align*}
Rearranging the last sum, we obtain that 
\begin{align*}
    \norm{\sum_{x\in \X}  M_{\pi^{-n}_{\expert}}(x)W(x)}^2_{\Lambda^n_{\expert,h-1}} &= 
    \sum_{x',a^n} \brr{\sum_{x\in \X}  P(x | x', a^n, \pi^{-n}_{\expert}) W(x)}^2\pi^{n}_{\expert,h-1}(a^n|x')  \nu_{h-1}^{\pi_{\expert}}(x') \\
    &\phantom{=} + \lambda \norm{\sum_{x\in \X}  M_{\pi^{-n}_{\expert}}(x)W(x)}^2 \\
    &\leq \sum_{x',a^n} \sum_{x\in \X}  P(x | x', a^n, \pi^{-n}_{\expert}) W^2(x) \pi^{n}_{\expert,h-1}(a^n|x')  \nu_{h-1}^{\pi_{\expert}}(x') \\
    &\phantom{=} + \lambda B^2 W^2_{\max} \\
\end{align*}
where Jensen's inequality guarantees that $\sum_{x\in \X}  P(x | x', a^n, \pi^{-n}_{\expert}) W^2(x) \geq (\sum_{x\in \X}  P(x | x', a^n, \pi^{-n}_{\expert}) W(x))^2$. Finally, noticing that $\sum_{x', a^n} P(x | x', a^n, \pi^{-n}_{\expert}) \pi^{n}_{\expert,h-1}(a^n|x')  \nu_{h-1}^{\pi_{\expert}}(x') = \nu_{h}^{\pi_{\expert}}(x) $, we get
\looseness=-1
\begin{align*}
    \norm{\sum_{x\in \X}  M_{\pi^{-n}_{\expert}}(x)W(x)}^2_{\Lambda^n_{\expert,h-1}} \leq \sum_{x\in \X}   W^2(x) \nu_{h}^{\pi_{\expert}}(x) + \lambda B^2 W^2_{\max}.
\end{align*}
Therefore, putting all the pieces together we obtain,
\begin{align}
\label{eq:change_final}
\sum^H_{h=1} \mathbb{E}_{X \sim \nu_h^{\pi^n_{\star}, \pi^{-n}_{\expert}}}\bs{W(X)} \leq \sum^H_{h=1} \norm{\phi_{h-1}^{\pi^n_{\star},\pi^{-n}_{\expert}} }_{\br{\Lambda^{n}_{\expert,h-1}}^{-1}}  \sqrt{ \mathbb{E}_{X \sim \nu_h^{\pi_{\expert}}}\bs{W^2(X)}  + \lambda B^2 W^2_{\max}}.
\end{align}
\end{proof}
\section{Proofs of Section~\ref{sec:interactive}}
\label{appendix:proofs_interactive}
In this section we present the analysis of LSVI-UCB-ZERO-BC. In this cases, there are some technical differences between the two and $N$ players setting. Therefore, we choose to present the proofs separately: we start with the simpler two players setup and move to the $N$-players setup in Appendix~\ref{appendix:proof_N_interactive}. Before stating the proofs of the technical lemmas used to prove the main theorem of the interactive setting (Theorem~\ref{thm:interactive_main}), we provide an analysis sketch. 
While the algorithmic components are similar to the ones of the tabular case, the required analysis differs significantly. In the tabular case \cite{freihaut2025rate}, the exploration dataset covered all significant states, which enabled a change of measure at states level and then enabled an application of classical BC results for the second case. This time we have to perform the change of measure at features level in order to avoid dependence on the number of states. In particular, for each player $n \in \bcc{1,2}$ the change of measures will depend on the covariance feature matrices $\Lambda^{-n,K}_{h}$ for $n=1,2$ constructed after $K$ iterations of Algorithm~\ref{alg:explore-lsvi-ucb}. This is given in Lemma~\ref{lemma:bound_error}.

We notice that Lemma~\ref{lemma:bound_error} avoids completely the dependence on $\cphimax$ replacing it by $\max_{\pi^{-n} \in \Pi^{-n}} \norm{\phi_h^{\pi^{n}_E,\pi^{-n}, }}_{(\Lambda^{-n,K}_{h})^{-1}}$. 
The main difference between the two quantities is that in $\Lambda^{-n,K}_{h}$ the features are evaluated at the states and actions sampled while running LSVI-UCB-ZERO and not via Nash vs Nash dynamics. 
Notably, we can control the latter quantity by proving that when the matrix $\Lambda^{-n,K}_{h}$ is constructed via LSVI-UCB-ZERO, the $\Lambda^{-n,K}_{h}$-weighted norm of any expected feature vector decreases at a $\mathcal{O}(K^{-1/2})$ rate. This result is given in Lemma~\ref{lemma:features_norm_go_to_zero_main}.

At this point, the proof is concluded by invoking the guarantees for the conditional maximum likelihood estimator (MLE) under mispecification to prove the following high probability bound on $\Delta\TV^2_{n,h}(\piout)$ given in Lemma~\ref{lemma:MLE}.

The final result given in Theorem~\ref{thm:interactive_main} follows from combining the change of measure in Lemma~\ref{lemma:bound_error}, the bound on the weighted norm of any feature expectation vector in Lemma~\ref{lemma:features_norm_go_to_zero_main} and finally the MLE guarantees in Lemma~\ref{lemma:MLE}.

In the following subsections, we give the proofs of the introduced lemmas.

\subsection{Proof of Lemma~\ref{lemma:bound_error}}
\lembounderror*
\begin{proof}
Let us recall that we now focus on the case of two players zero sum  and that the first player maximizes the value function. Then, we have that the Nash gap can be written as follows for any arbitrary choice of the best responding policies, i.e. $\pi^2_{\star} \in \mathrm{br}(\piout^1),$ $  \pi^1_{\star} \in \mathrm{br}(\piout^2)$
\begin{align}
    \mathrm{NG}(\piout) &\overset{(i)}{\leq} \innerprod{\initial}{V^{\pi^1_{\star},\piout^2} - V^{\piout^1,\pi^2_{\star}} } \nonumber \\
    &=\innerprod{\initial}{V^{\pi^1_{\star},\piout^2} - V^{\pi_{E}}} + \innerprod{\initial}{V^{\pi_{E}} - V^{\piout^1,\pi^2_{\star}} } \nonumber \\
    &\overset{(ii)}{\leq} \innerprod{\initial}{V^{\pi^1_{\star},\piout^2} - V^{\pi^1_{\star},\pi_E^2}} + \innerprod{\initial}{V^{\pi_{E}^1, \pi^2_{\star} } - V^{\piout^1,\pi^2_{\star}} } \nonumber \\
    &\overset{(iii)}{\leq} H \brr{\sum^H_{h=1} \mathbb{E}_{X \sim \nu^{\pi^1_{\star},\pi_E^2}_h} \bs{\TV\brr{\pi^2_{\mathrm{out},h},\pi_{E,h}^2 }(X)}
    + \sum^H_{h=1} 
    \mathbb{E}_{X \sim \nu^{\pi_E^1,\pi^2_{\star}}_h} \bs{\TV\brr{\pi^1_{\mathrm{out},h},\pi_{E,h}^1 }(X)}
    } \nonumber
    \\
    &\leq H \max_{\pi^1 \in \Pi^1}\sum^H_{h=1} \mathbb{E}_{X \sim \nu^{\pi^1,\pi_E^2}_h} \bs{\TV\brr{\pi^2_{\mathrm{out},h},\pi_{E,h}^2 }(X)} \\&\phantom{=}
    + H\max_{\pi^2 \in \Pi^2} \sum^H_{h=1} 
    \mathbb{E}_{X \sim \nu^{\pi_E^1,\pi^2}_h} \bs{\TV\brr{\pi^1_{\mathrm{out},h},\pi_{E,h}^1 }(X)},
     \label{eq:firstfirst}
\end{align}
where in $(i)$ we used $\max\{a,b\} \leq a+b$ for $a,b>0,$ in $(ii)$ we used that the expert is a NE and therefore each policy is a best response to another and in $(iii)$ we used the performance difference lemma \citep{Kakade2002ApproximatelyOA} combined with Hölder's inequality and the fact that the value function is bounded by $H$. 
At this point, we need to bound 
\[
\max_{\pi^{-n}\in \Pi^{-n}}\mathbb{E}_{X \sim \nu^{\pi_E^n,\pi^{-n}}_h} \bs{\TV\brr{\pi_{\mathrm{out},h}^n,\pi_{E,h}^n }(X)}
\]
for $n \in \bcc{1,2}$. The steps of this proof follow similar steps as done in the proof of Lemma~\ref{lemma:change_of_measure}. However, instead of the matrix $\Lambda^n_{E,h},$ we introduce the covariance matrices $\Lambda^{-n,k}_{h}$ generated by Algorithm~\ref{alg:explore-lsvi-ucb} invoked with $\mathrm{active=-n}$. Applying inequality \eqref{eq:norm_bound} with the covariance matrix $\Lambda^{-n,k}_{h}$ and $W(X) = \TV\brr{\pi_{\mathrm{out},h}^n,\pi_{E,h}^n }(X)$, we get that
\begin{align*}
&\max_{\pi^{-n}\in \Pi^{-n}}\mathbb{E}_{X \sim \nu^{\pi_E^n,\pi^{-n}}_h} \bs{\TV\brr{\pi_{\mathrm{out},h}^n,\pi_{E,h}^n }(X)} \\&\leq
\max_{\pi^{-n}\in \Pi^{-n}} \sum^H_{h=1} \norm{\phi_h^{\pi^{-n},\pi^{n}_{\expert}} }_{(\Lambda^{-n,K}_{h-1})^{-1}} \norm{\sum_{x\in \X}  M_{\pi^{n}_{\expert}}(x)\TV(\pi_{\mathrm{out},h}^n,\pi_{E,h}^n)(x)}_{\Lambda^{-n,K}_{h}}
\end{align*}
Again following the steps of the proof of Lemma~\ref{lemma:change_of_measure}, we get with the replaced terms the analogous results as in \eqref{eq:change_final}, namely
\begin{align*}
   \max_{\pi^{-n}\in \Pi^{-n}} \mathbb{E}_{X \sim \nu^{\pi_E^n,\pi^{-n}}_h} \bs{\TV\brr{\pi_{\mathrm{out},h}^n,\pi_{E,h}^n }(X)} &\leq
\max_{\pi^{-n}\in \Pi^{-n}} \sum^H_{h=1} \norm{\phi_{h-1}^{\pi^{-n},\pi^{n}_{\expert}} }_{(\Lambda^{-n,K}_{h-1})^{-1}} \\
&\phantom{=}\cdot \sqrt{ \sum^K_{k=1}\sum_{x\in \X} P^{\pi^n_{\expert}}(x | X^{-n,k}_{h-1}, A^{-n,k}_{h-1}) \TV^2\brr{\pi_{\mathrm{out},h}^n,\pi_{E,h}^n }(x)  + 4 B^2} , 
\end{align*}
where we also used $W^2_{\max} \leq 4$ for our current choice of $W(x)$.

At this point, we would like to establish that with high probability $\sum_{x\in \X} P^{\pi^n_{\expert}}(x | X^{-n,k}_{h-1}, A^{-n,k}_{h-1}) \TV^2 (\pi_{\mathrm{out},h}^n,\pi_{E,h}^n)(x)$ concentrates around $\sum_{x\in \X} \nu^{\pi^n_{\expert}, \pi^{-n}_k}(x) \TV^2 (\pi_{\mathrm{out},h}^n,\pi_{E,h}^n )(x)$.  
Towards this goal let us denote by $\mathcal{F}_k$ the sigma algebra induced by the random variables $\mathcal{D}_{k} = \bcc{X^{n,s}_{h}, A^{n,s}_{h}}^{k-1,H,N}_{s=1,h=1,n=1}$ where in the two player case we can set $N=2$.
Unfortunately $\TV^2 (\pi_{\mathrm{out},h}^n,\pi_{E,h}^n )(x)$ is not an $\mathcal{F}_k$-adapted random variable unless $k = K$ because the policy $\pi^n_{\mathrm{out},h}$ is computed using the whole dataset $\mathcal{D}_K$. Therefore, we can not apply Lemma~\ref{lemma:fast_concentration} directly but we need to pass through a covering argument.
    Towards this goal, let us define the function class $\mathcal{W}_h := \bcc{ W: \X \rightarrow [0,2] :  W(x) = \mathrm{TV}(\pi,\pi^{n}_{\expert,h})(x) | \pi \in \Pi^n_{\mathrm{softlin}}}$. Clearly, $\TV (\pi_{\mathrm{out},h}^n,\pi_{E,h}^n )(x) \in \mathcal{W}_h$ because $\pi_{\mathrm{out},h}^n \in \Pi^n_{\mathrm{softlin}}$. Therefore, we can achieve our goal by establishing a uniform convergence argument over the class $\mathcal{W}_h$.
As an intermediate step, we need to consider the covering set $\mathcal{C}_\epsilon(\mathcal{W}_h)$ defined such that
\[
\forall W \in \mathcal{W}_h, \quad \exists \tilde{W} \in \mathcal{C}_\epsilon(\mathcal{W}_h) \quad \text{s.t.} \norm{W - \tilde{W}}_{\infty} \leq \epsilon.
\]
Then let us fix a particular $\tilde{W}$ and let us invoke Lemma~\ref{lemma:fast_concentration} for $X_k = \sum_{x\in \X}\frac{1}{4}P^{\pi^{n}_{\expert}}(x | X^{-n,k}_{h-1}, A^{-n,k}_{h-1}) \tilde{W}^2(x)$ for all $h \in [H]$ to obtain that with probability $1-\delta$,
\[
\sum^K_{k=1}\sum_{x\in \X} P^{\pi^{n}_{\expert}}(x | X^{-n,k}_{h-1}, A^{-n,k}_{h-1}) \tilde{W}^2(x) \leq 2\sum^K_{k=1}  \bbE \sbr*{\sum_{x\in \X}P^{\pi^{n}_{\expert}}(x | X^{-n,k}_{h-1}, A^{-n,k}_{h-1}) \tilde{W}^2(x) | \mathcal{D}_k} + 4 \log \brr{\frac{1}{\delta}}.
\]
Then, via a union bound over $\mathcal{C}_\epsilon(\mathcal{W}_h)$ it holds that for all $\tilde{W} \in \mathcal{C}_\epsilon(\mathcal{W}_h) $ simultaneously we have with probability $1-\delta$ that 
\begin{align*}
\sum^K_{k=1}\sum_{x\in \X} P^{\pi^{n}_{\expert}}(x | X^{-n,k}_{h-1}, A^{-n,k}_{h-1}) \tilde{W}^2(x) & \leq 2\sum^K_{k=1}  \bbE \sbr*{\sum_{x\in \X}P^{\pi^{n}_{\expert}}(x | X^{-n,k}_{h-1}, A^{-n,k}_{h-1}) \tilde{W}^2(x) | \mathcal{D}_k} + 4 \log \brr{\frac{\abs{\mathcal{C}_\epsilon(\mathcal{W}_h)}}{\delta}}.
\end{align*}
Then, it follows that with probability at least $1-\delta$ for all $W \in \mathcal{W}_h$ simultaneously that
\begin{align*}
   \sum^K_{k=1}&\sum_{x\in \X} P^{\pi^{n}_{\expert}}(x | X^{-n,k}_{h-1}, A^{-n,k}_{h-1}) W^2(x)  \leq \sum^K_{k=1}\sum_{x\in \X} P^{\pi^{n}_{\expert}}(x | X^{-n,k}_{h-1}, A^{-n,k}_{h-1}) (W^2(x) - \tilde{W}^2(x)) \\
   &\phantom{=} + 2\sum^K_{k=1}  \bbE \sbr*{\sum_{x\in \X}P^{\pi^{n}_{\expert}}(x | X^{-n,k}_{h-1}, A^{-n,k}_{h-1}) \brr{\tilde{W}^2(x) - W^2(x)}| \mathcal{D}_k}\\
   &\phantom{=} + 2\sum^K_{k=1}  \bbE \sbr*{\sum_{x\in \X}P^{\pi^{n}_{\expert}}(x | X^{-n,k}_{h-1}, A^{-n,k}_{h-1}) W^2(x) | \mathcal{D}_k}
   \\&\phantom{=}+ 4 \log \brr{\frac{\abs{\mathcal{C}_\epsilon(\mathcal{W}_h)}}{\delta}} \\
   &\leq 2 W_{\max} \epsilon K +  2\sum^K_{k=1}  \bbE \sbr*{\sum_{x\in \X}P^{\pi^{n}_{\expert}}(x | X^{-n,k}_{h-1}, A^{-n,k}_{h-1}) W^2(x) | \mathcal{D}_k} \\&\phantom{=}+ 4 \log \brr{\frac{\abs{\mathcal{C}_\epsilon(\mathcal{W}_h)}}{\delta}} \\
   &= 2 W_{\max} \epsilon K +  2\sum^K_{k=1} 
   \sum_{x\in \X} \nu_h^{\pi^{n}_{\expert}, \pi^{-n,k}}(x) W^2(x) %
   \\&\phantom{=}+ 4 \log \brr{\frac{\abs{\mathcal{C}_\epsilon(\mathcal{W}_h)}}{\delta}} 
\end{align*}
Then, using the bound on the covering number, i.e. $\abs{\mathcal{C}_\epsilon(\mathcal{W}_h)} $ (see Lemma~\ref{lemma:covering}), we conclude that for any $W \in \mathcal{W}_h$ with probability at least $1-\delta$, it holds that
\begin{align*}
\sum^K_{k=1}&\sum_{x\in \X} P^{\pi^{n}_{\expert}}(x | X^{-n,k}_{h-1}, A^{-n,k}_{h-1}) W^2(x) \leq 
2 W_{\max} \epsilon K +  2\sum^K_{k=1} 
   \sum_{x\in \X} \nu_h^{\pi^{n}_{\expert}, \pi^{-n,k}}(x) W^2(x) \\&\phantom{=}+ 4 d \log \brr{\frac{1 + \frac{4 \eta A_{\max} B_{\theta}}{\epsilon}}{\delta}}.
\end{align*}
Then, putting all together, using $W_{\max}\leq 2$, choosing the size of the covering set as $\epsilon = 1/K$ and using that $\TV (\pi_{\mathrm{out},h}^n,\pi_{E,h}^n )(x) \in \mathcal{W}_h$, we obtain that
\begin{align*}
   &\max_{\pi^{-n}\in \Pi^{-n}}\mathbb{E}_{X \sim \nu^{\pi_E^n,\pi^{-n}}_h} \bs{\TV\brr{\pi_{\mathrm{out},h}^n,\pi_{E,h}^n }(X)} \leq
\max_{\pi^{-n}\in \Pi^{-n}} \sum^H_{h=1} \norm{\phi_{h-1}^{\pi^{-n},\pi^{n}_{\expert}} }_{(\Lambda^{-n,K}_{h-1})^{-1}} \\
&\cdot \sqrt{  4 +  2\sum^K_{k=1} 
   \sum_{x\in \X} \nu^{\pi^{n}_{\expert}, \pi^{-n,k}}_h(x) \TV^2 (\pi_{\mathrm{out},h}^n,\pi_{E,h}^n )(x) + 4 d \log \brr{\frac{4 K \log K A_{\max} B_{\theta}}{\delta}} + 4 B^2} \\
   &\leq
\max_{\pi^{-n}\in \Pi^{-n}} \sum^H_{h=1}\norm{\phi_{h-1}^{\pi^{-n},\pi^{n}_{\expert}} }_{(\Lambda^{-n,K}_{h-1})^{-1}} \\
&\cdot \sqrt{ 2\sum^K_{k=1} 
   \sum_{x\in \X} \nu^{\pi^{n}_{\expert}, \pi^{-n,k}}_h(x) \TV^2 (\pi_{\mathrm{out},h}^n,\pi_{E,h}^n )(x) + 36 d B^2 \log \brr{\frac{4 K \log K A_{\max} B_{\theta}}{\delta}}}. 
\end{align*}
where the last steps uses  that 
\[
4 d \log \brr{\frac{4 K \log K A_{\max} B_{\theta}}{\delta}} + 4 B^2 + 4 \leq 36 d B^2 \log \brr{\frac{4 K \log K A_{\max} B_{\theta}}{\delta}} 
\]
since both terms are larger than one. Using the big-oh notation and plugging this into \eqref{eq:firstfirst} concludes the proof. That is,
\begin{align*}
\mathrm{NG}(\piout) &\leq  H \max_{n \in \bcc{1,2}}  \max_{\pi^{-n}\in \Pi^{-n}} \sum^H_{h=1}\norm{\phi_{h-1}^{\pi^{-n},\pi^{n}_{\expert}} }_{(\Lambda^{-n,K}_{h-1})^{-1}} \\
&\phantom{\leq}\cdot\sqrt{ 2\sum^K_{k=1} 
   \sum_{x\in \X} \nu^{\pi^{n}_{\expert}, \pi^{-n,k}}_h(x) \TV^2 (\pi_{\mathrm{out},h}^n,\pi_{E,h}^n )(x) + 36 d B^2 \log \brr{\frac{4 K \log K A_{\max} B_{\theta}}{\delta}}}.
\end{align*}
\end{proof}
\subsection{Proof of Lemma~\ref{lemma:features_norm_go_to_zero_main}}
The proof of this result follows directly from the $N$-players case proven next. Note that we have given the general-sum $N$-player version in Algorithm~\ref{alg:interactive}. The conceptual idea of both algorithms is identical. The most important change is a change of notation. While in the 2 player setting we set $\mathrm{active=-n}$ which is only one player, here we need to change to  $\mathrm{active=n}$ to ensure again that only one player is actively exploring. That being said, we can continue with the theoretical result for the $N$-players case.
\begin{lemma}
\label{lemma:features_norm_go_to_zero}
For each player $n \in [N]$, the output matrix $\Lambda^{n,K}$ generated by Algorithm~\ref{alg:interactive} invoked with $\mathrm{active}=n$ satisfies with probability $1-\delta$ that
\begin{equation*}
\max_{\pi^n\in\Pi^n} \sum^H_{h=1}\norm{\phi^{\pi^n, \pi^{-n}_{\expert}}}_{(\Lambda^{n,K}_{h})^{-1}} \leq \sqrt{\frac{d^3 H^4 B^2 \log(dKH/\delta)}{K}}.
\end{equation*}
\end{lemma}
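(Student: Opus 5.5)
The plan is to follow the reward-free linear MDP template of \citet{DBLP:journals/corr/abs-2006-11274}. We view the exploration of player $n$ (with the others frozen at $\pi^{-n}_{\expert}$) as running LSVI-UCB in the single-agent linear MDP $\mathcal{M}_n$, whose transition kernel is $\phi(x,a^n)^\top M_{\pi^{-n}_{\expert}}(x')$ by the linear Markov game assumption. The reward at episode $k$ is the synthetic bonus $r^k_h(x,a)=\norm{\phi(x,a)}_{(\Lambda^{n,k}_h)^{-1}}$.

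\textbf{Step 1 (Jensen).} For any $\pi^n$, Jensen's inequality (convexity of the norm) gives
\[
\norm{\phi_h^{\pi^n,\pi^{-n}_{\expert}}}_{(\Lambda^{n,K}_h)^{-1}} \le \mathbb{E}_{(X,A)\sim \mu_h^{\pi^n,\pi^{-n}_{\expert}}}\norm{\phi(X,A)}_{(\Lambda^{n,K}_h)^{-1}}.
\]
Since $\Lambda^{n,k}_h \preceq \Lambda^{n,K}_h$ for all $k\le K$, this is at most $\frac1K\sum_{k=1}^K \mathbb{E}_{\pi^n,\pi^{-n}_{\expert}}[r^k_h]$. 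Summing over $h$, the left-hand side of the lemma is bounded by $\frac1K\sum_k \max_{\pi^n} V^{\pi^n}_1(x_1; r^k)$, the average optimal value of the bonus reward in $\mathcal{M}_n$.

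\textbf{Step 2 (optimism).} We show that $V^k_1 \ge \max_{\pi^n} V^{\pi^n}_1(\cdot;r^k)$, where $V^k$ is the optimistic value computed by Algorithm~\ref{alg:interactive}. This follows the LSVI-UCB analysis of \citet{jin2019provably}. By linearity of $\mathcal{M}_n$, the Bellman backup of any bounded $V$ is linear in $\phi$. The self-normalized concentration bound with a covering of the class of functions $\min\{H,w^\top\phi+(\beta+1)\norm{\phi}_{\Lambda^{-1}}\}$ gives
\[
\bigl|(w^k_h)^\top\phi - P V^k_{h+1}\bigr| \le \beta\norm{\phi}_{(\Lambda^{n,k}_h)^{-1}}, \qquad \beta=\tilde O(dHB),
\]
with probability $1-\delta$. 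The extra $+1$ in the bonus accounts exactly for the reward $r^k_h$, because the reward is itself $\norm{\phi}_{(\Lambda^{n,k}_h)^{-1}}$. Backward induction then yields optimism.

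\textbf{Step 3 (sum of optimistic values).} We bound $\sum_k V^k_1(x_1)$ by the usual telescoping along the realized trajectories. The per-step error is at most $(2\beta+2)\norm{\phi^k_h}_{(\Lambda^{n,k}_h)^{-1}}$ plus a martingale difference. Azuma's inequality controls the martingale term by $O(H\sqrt{KH\log(1/\delta)})$. The elliptical potential lemma gives $\sum_k \norm{\phi^k_h}_{(\Lambda^{n,k-1}_h)^{-1}}\le\sqrt{2dK\log(1+K/d)}$. Summing over $h$ yields $\sum_k V^k_1\le \tilde O(\beta H\sqrt{dK}) = \tilde O(\sqrt{d^3H^4B^2K})$. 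Dividing by $K$ gives the stated $\sqrt{d^3H^4B^2\iota/K}$ rate.

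\textbf{Main obstacle.} The delicate point is Step 2. The covering argument must handle a value class whose reward term $\norm{\phi}_{(\Lambda^{n,k}_h)^{-1}}$ is data-dependent. This is resolved by noting that reward plus bonus equals a single bonus with coefficient $\beta+1$, so the function class is the same one as in \citet{jin2019provably} and has the same log covering number $\tilde O(d^2)$; this is where $d^3$ arises. A second, minor issue is the index shift between $\Lambda^{n,k-1}$ and $\Lambda^{n,k}$ in the potential lemma. It is handled since $\Lambda^{n,0}=I$ and $\norm{\phi}\le1$, which make these norms comparable up to a constant factor.
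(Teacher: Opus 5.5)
Your proof is correct, but it is organized differently from the paper's.

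\textbf{The paper's route.} The paper writes $\sum_{k}\sum_{h}\langle \mu^{\pi_\star}_h, r^k_h\rangle = \mathrm{Regret}(\pi_\star) + \sum_k\sum_h \langle \mu^{\pi^k}_h, r^k_h\rangle$. It then bounds the two terms with imported results:
\begin{itemize}
\item the regret term via \citet[Theorem 6]{viano2024imitation}, used as a black box, with a footnote asserting that the non-linear reward needs only a ``minimal variation'';
\item the learner's own expected bonus via Freedman's inequality and the elliptical potential lemma \citep[Lemma 10]{viano2024imitation}.
\end{itemize}
Only at the end does it use the monotonicity $r^K_h\le r^k_h$ and Jensen.

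\textbf{Your route.} You apply Jensen and monotonicity first. You then use per-episode optimism, $V^k_1 \ge \max_{\pi^n} V^{\pi^n}_1(\cdot\,; r^k)$, and bound $\sum_k V^k_1$ directly by telescoping along the realized trajectories. This is the reward-free template of \citet{DBLP:journals/corr/abs-2006-11274}. The learner's collected bonus is absorbed into the per-step bonus term, so Azuma suffices for the martingale part.

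\textbf{Comparison.} Your route is self-contained. Your observation that reward plus bonus collapses into a single bonus with coefficient $\beta+1$ means the value-class covering of \citet{jin2019provably} applies unchanged. That is precisely the justification the paper relegates to its footnote.

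The paper's route is modular: it needs only some no-regret learner against the revealed bonus rewards. This is what lets the authors swap in RMAX-RAVI-LSVI-UCB for the discounted setting (Lemma~\ref{lemma:change_of_measure_infinite}). That learner has softmax iterates and slowly changing bonuses, which do not fit your greedy optimism-plus-telescoping argument without rework.

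\textbf{Bookkeeping.} Two harmless points remain.
\begin{itemize}
\item The initial state is drawn from $\nu_1$ rather than fixed, which adds one more bounded martingale term.
\item Algorithm~\ref{alg:interactive} plays a uniform policy in the first episode, which costs at most $H$ in the sum over $k$.
\end{itemize}
Neither changes the rate.
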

\begin{proof}
    To see this let us first aim at bounding the following regret notion, defined for any comparator policy $\pi^n_{\star}$.
    For simplicity, we now drop the footnote $n$ because the exact same reasoning applies to all players.
    \begin{equation*}
        \mathrm{Regret}(\pi_\star) := \sum^K_{k=1}\sum^H_{h=1} \innerprod{\mu^{\pi_\star}_h} {r^k_h} - \sum^K_{k=1}\sum^H_{h=1} \innerprod{\mu^{\pi^k}_h} {r^k_h}
    \end{equation*}
    Note, that the reward of the algorithm is given by 
    \[r^{n,k}_h(x,a) = \norm{\phi(x,a)}_{(\Lambda^k_{n,h})^{-1}}.\]
    It is important to emphasize, that this reward is \emph{not} the reward of the underlying Markov game. Instead this reward is an artificially constructed reward to increase the exploration. Additionally, notice that the exploratory rewards $\bcc{r^k}^K_{k=1}$ are bounded between $0$ and $1$ and it is revealed to the learner \emph{before} the policy $\pi^k$ is updated.
    Therefore, invoking \citet[Theorem 6]{viano2024imitation}\footnote{This is up to a minimal variation to accommodate quadratic rewards instead of linear ones and transition weights norm bounded by $B$ instead than  $1$, i.e. $\norm{M^{\pi^{-n}}} \leq B$ for all $n\in[N]$ and $\pi^n \in \Pi^n$.} we obtain 
    \begin{equation*}
        \mathrm{Regret}(\pi_\star) \leq \mathcal{O}\brr{H^2 d^{3/2} B \sqrt{K \log(K \delta^{-1})}}.
    \end{equation*}
    Therefore, rearranging, we obtain that with probability $1-\delta$
    \[
    \sum^K_{k=1}\sum^H_{h=1} \innerprod{\mu^{\pi_\star}_h} {r^k_h} \leq \sum^K_{k=1}\sum^H_{h=1} \innerprod{\mu^{\pi^k}_h} {r^k_h} + \mathcal{O}\brr{H^2 d^{3/2} B \sqrt{K \log(K \delta^{-1})}}.
    \]
    At this point, the first term on the right hand side can be upper bounded via a variant of the Freedman's inequality \citep[Lemma D.4]{cohen2019learning} and the classical elliptical potential lemma \cite{Abbasi-Yadkori:2011}. In particular, invoking \citet[Lemma 10]{viano2024imitation} we obtain 
    \begin{equation}
    \sum^K_{k=1}\sum^H_{h=1} \innerprod{\mu^{\pi^k}_h} {r^k_h} \leq \mathcal{O}\brr{d^{3/2}H^2 B\sqrt{K \log (2K \delta^{-1})}}. \label{eq:elliptical}
    \end{equation}
    Therefore, all in all with probability $1-\delta$
    \[
    \sum^K_{k=1}\sum^H_{h=1} \innerprod{\mu^{\pi_\star}_h} {r^k_h} \leq \mathcal{O}\brr{H^2 d^{3/2} B \sqrt{K \log(K \delta^{-1})}},
    \]
    where a union bound is not needed since the event under which the regret is bounded contains the event under which \eqref{eq:elliptical} holds.
    Now, since for each state-action pair and episode index $k \in [K]$, it holds that
    $r^k(x,a) \leq r^{k-1}(x,a)$ therefore, we have that
    \[
    K \sum^H_{h=1} \innerprod{\mu^{\pi_\star}_h} {r^K_h} \leq \sum^K_{k=1}\sum^H_{h=1} \innerprod{\mu^{\pi_\star}_h} {r^k_h} \leq \mathcal{O}\brr{H^2 d^{3/2}B \sqrt{K \log(K \delta^{-1})}}
    \]
    Now, rewriting the reward function more explicitly and using the Jensen's inequality
    \begin{align*}
        \innerprod{\mu^{\pi_\star}_h} {r^K_h} &= \sum_{x,a} \mu^{\pi_\star}_h(x,a) \norm{\phi(x,a)}_{(\Lambda^K_h)^{-1}} \\
        &\geq \norm{\sum_{x,a} \mu^{\pi_\star}_h(x,a) \phi(x,a)}_{(\Lambda^K_h)^{-1}} \\
        &:= \norm{\phi^{\pi_\star}_h}_{(\Lambda^K_h)^{-1}}  
    \end{align*}
    Therefore, choosing $\pi_\star$ to be the maximizer of the last quantity in the derivation above and dividing by $K$, we obtain that with probability $1-\delta$ it holds that
    \begin{equation*}
\max_{\pi_\star\in \Pi} \sum^H_{h=1}\norm{\phi^{\pi_\star}_h}_{(\Lambda^K_h)^{-1}} \leq \mathcal{O}\brr{H^2 d^{3/2} B \sqrt{\frac{\log(K \delta^{-1})}{K}}}.
    \end{equation*}
    The full theorem statement considers that each player $n$ acts in the environment where all other players are fixed, playing according to the profile $\pi^{-n}_{\expert}$. Therefore, for each player index $n \in [N]$, $\max_{\pi_\star\in \Pi}\norm{\phi^{\pi_\star}_h}$ can be replaced by $ \max_{\pi^n\in \Pi^n} \norm{\phi^{\pi^n, \pi^{-n}_{\expert}}_h}$. 
\end{proof}
Finally, we provide the guarantees for the mispecified MLE.
\subsection{Proof of Lemma~\ref{lemma:MLE}}
\lemMLE*
\begin{proof}
We start by reminding ourselves that we have defined \[
\Delta\TV^2_{n,h}(\piout) := \sum^K_{k=1}\sum_{x\in \X}   \nu_h^{\pi^{n}_{\expert},\pi^{-n,k}}(x) \TV^2(\pi^n_{\mathrm{out},h},\pi^n_{\expert,h})(x).
\]
We continue by upper bounding
the total variation by the Hellinger distance, i.e.
\begin{equation}2\sum^K_{k=1}\sum_{x\in \X}   \nu_h^{\pi^{n}_{\expert},\pi^{-n,k}}(x) \TV^2(\pi^n_{\mathrm{out},h},\pi^n_{\expert,h})(x) \leq 8\sum^K_{k=1}\sum_{x\in \X}   \nu_h^{\pi^{n}_{\expert},\pi^{-n,k}}(x) D_{\mathrm{Hel}}^2(\pi^n_{\mathrm{out},h},\pi^n_{\expert,h})(x).\label{eq:tv_lower_hell}\end{equation}
Then, applying Lemma~\ref{eq:martingale_mispecified_bc} to upper bound the term involving the squared Hellinger distance between the output policy and the expert, we obtain with probability $1-2\delta$
\begin{align}
    8\sum^K_{k=1}\sum_{x\in \X}   \nu_h^{\pi^{n}_{\expert},\pi^{-n,k}}(x) D_{\mathrm{Hel}}^2(\pi^n_{\mathrm{out},h},\pi^n_{\expert,h})(x) &\leq 16 \epsilon K + 16 \log \frac{\abs{\mathcal{C}_{\epsilon}(\log \Pi_{\mathrm{softlin}})}}{\delta} +   \frac{1152 (\log A_{\max} +  \eta H) }{\exp\brr{\eta H}} K \notag \\&\phantom{=}
     + 32( \log A_{\max} +  \eta H) \log \frac{1}{\delta}. \label{eq:bound_hel_2}
\end{align}
Now, similarly to the proof of Theorem~\ref{thm:featureBC_n} by applying Lemma~\ref{lemma:covering} and the choice of $\epsilon=K^{-1}$, we obtain
\begin{align}
    \sum^K_{k=1}\sum_{x\in \X}   \nu_h^{\pi^{n}_{\expert},\pi^{-n,k}}(x) D_{\mathrm{Hel}}^2(\pi^n_{\mathrm{out},h},\pi^n_{\expert,h})(x) &\leq 2  + 2 d \eta H \log\brr{\frac{2KA^2_{\max}\eta B_{\theta}}{ \delta}} +   \frac{144 (\log A_{\max} +  \eta H) }{\exp\brr{\eta H}} K \notag \\&\phantom{=}
     + 4( \log A_{\max} +  \eta H) \log \delta^{-1}. \label{eq:chooseeta_2}
\end{align}

Finally, choosing $\eta:= \log(K)/H$, ensures that
\begin{align}
    \sum^K_{k=1}\sum_{x\in \X}   \nu_h^{\pi^{n}_{\expert},\pi^{-n,k}}(x) D_{\mathrm{Hel}}^2(\pi^n_{\mathrm{out},h},\pi^n_{\expert,h})(x) \leq \widetilde{\mathcal{O}}\brr{ d \log\brr{\frac{K A_{\max} B_{\theta}}{\delta}}}.
\end{align}
The proof is concluded plugging the last result back into  \eqref{eq:tv_lower_hell}.
\end{proof}
Finally, we are ready to prove our main result for interactive MAIL in Linear MGs.
\subsection{Proof of Theorem~\ref{thm:interactive_main}}
By Lemma~\ref{lemma:bound_error}, we have that
with probability at least $1-\delta$, it holds that 
\begin{align}
    &\mathrm{NG}(\piout) \leq H  \max_{n \in \bcc{1,2}}\sum^H_{h=1} \max_{\pi^{-n} \in \Pi^{-n}} \norm{\phi_{h}^{\pi^{n}_E,\pi^{-n}}}_{(\Lambda^{-n,K}_{h-1})^{-1}} \mathcal{O}\brr{\sqrt{\Delta\TV^2_{n,h}(\piout) + B^2 d \log(K/\delta)}}.
\end{align}
Then, by invoking Lemma~\ref{lemma:features_norm_go_to_zero_main} and a union bound we have that with probability at least $1-2\delta$,
\begin{align}
    \mathrm{NG}(\piout) &\leq H  \widetilde{\mathcal{O}}\brr{\sqrt{\frac{d^3 H^4 B^2\log(dKH/\delta) }{K}}} \mathcal{O}\brr{\sqrt{\Delta\TV^2_{n,h}(\piout) + B^2 d \log(K/\delta)}} \\
    &= \widetilde{\mathcal{O}}\brr{\sqrt{\frac{d^3 H^6 B^2\log(dKH/\delta) \brr{\Delta\TV^2_{n,h}(\piout) + B^2 d \log(K/\delta)} }{K}}}.
\end{align}
Finally, invoking Lemma~\ref{lemma:MLE} and a final union bound we obtain that with probability at least $1-3\delta$ it holds that
\begin{align}
    \mathrm{NG}(\piout) &\leq \widetilde{\mathcal{O}}\brr{\sqrt{\frac{d^3 H^6 B^2\log(dKH/\delta) \brr{ d \log(K A_{\max} B_\theta / \delta) + B^2 d \log(K/\delta)} }{K}}} \\
    &= \widetilde{\mathcal{O}}\brr{\sqrt{\frac{d^4 H^6 B^4 \log(dKH/\delta) \log(K A_{\max} B_\theta / \delta)}{K}}}.
\end{align}
Therefore, choosing $K = \widetilde{\mathcal{O}}\brr{\frac{d^4 H^6 B^4 \log(A_{\max} B_\theta / \delta) }{\varepsilon^2}}$ guarantees that $\mathrm{NG}(\piout)  = \mathcal{O}(\varepsilon)$ with probability at least $1-\delta$.

Moving forward, we generalize the analysis of LSVI-UCB-ZERO-BC in two directions: (i) allowing arbitrary number of players $N$ and (ii) studying infinite horizon discounted linear Markov games.
\section{Interactive MAIL in $N$-players Linear Markov games}
\label{appendix:proof_N_interactive}
We start by extending our interactive result to the $N$ players setting, at first, in the final horizon setting. Here, the idea is to pre-collect $K$ trajectories in the game interacting with the expert as explained in the first part of Algorithm~\ref{alg:interactive} which serves as $N$-players extension of Algorithm~\ref{alg:explore-lsvi-ucb}.
\begin{algorithm}
\begin{algorithmic}
\caption{$N$-players LSVI-UCB-ZERO-BC} \label{alg:interactive}
\FOR{$n \in [N]$}
\STATE \textcolor{blue}{\textbf{\% Exploration phase}, i.e. LSVI-UCB-ZERO($\mathcal{M}^{-n},\mathrm{active}=n$)}
\STATE Initialize the MDP with dynamics $P^{\pi^{-n}_{\expert}}$.
\STATE Initialize $\Lambda^{n,1}_{h} =  I$.
\STATE Initialize policy $\pi_{h}^{n,1}(\cdot|x) = \mathrm{Unif}(\A)$ for all $x,h \in \X\times[H]$.
\FOR{$k \in \bcc{1, \dots, K}$}
\STATE Collect a trajectory $\tau^k_{n} = \bcc{(X^{n,k}_{h}, A^{n,k}_{h})}^H_{h=1}$ with $\pi_k$ in the MDP with dynamics $P^{\pi_{\expert}^{-n}}$.
\FOR{$n' \in [N]$}
\STATE Sample $A^{k,\expert(n')}_{n,h} \sim \pi^{n'}_{\expert,h}(\cdot|X^{n,k}_{h})$
\ENDFOR
\STATE Update covariance matrix $\Lambda^{n,k+1}_{h} = \Lambda^{n,k}_{h} + \phi(X^{n,k}_{h},A^{n,k}_{h}) \phi(X^{n,k}_{h},A^{n,k}_{h})^\top$.
\STATE Define the reward $r^{n,k+1}_{h}(x,a) =  \norm{\phi(x,a)}_{(\Lambda^{k+1}_{n,h})^{-1}}$.

\STATE Update policy to $\pi^{n,k+1}$ with one step of LSVI-UCB with reward $r^{n,k+1}$.
\ENDFOR
\ENDFOR
\STATE Perform BC on the collected dataset
\[
\pi^{n}_{\mathrm{out},h} = \argmax_{\pi \in \Pi_{\mathrm{softlin}}} \sum^K_{k=1} \sum^N_{n'=1}\log \pi( A^{k,\expert(n)}_{n',h}| X^{n',k}_h) \quad \forall n \in [N]
\]
\STATE \textbf{Output} $\piout = \bcc{\pi^{n}_{\mathrm{out},h}}^{H, N}_{h=1, n=1}$.
\end{algorithmic}    
\end{algorithm}

Collecting the data in this way, guarantees that the covariance matrix output by the above algorithm guarantees that $$\sum^H_{h=1} \max_{n'\in [N]} \max_{\pi^{n'}\in \Pi^{n'}} \norm{\phi_{h-1}^{\pi^{n'},\pi^{-n'}_{\expert}} }_{(\Lambda^{n',K}_{h-1})^{-1}},$$ which is the equivalent of $\cphimax$ where $\Lambda^{n,K}_h$ replaces $\Lambda^n_{\expert,h}$, decreases at a $\mathcal{O}\brr{K^{-1/2}}$ rate.
We recall that this result is proven in Lemma~\ref{lemma:features_norm_go_to_zero}. 
At an intuitive level we recall, that the first part of Algorithm~\ref{alg:interactive} aims at collecting a dataset which guarantees good coverage of all possible features direction.

Therefore, it remains to show a change of measure similar to the one presented in Lemma~\ref{lemma:change_of_measure} for the more general case of $N$ players.  We present such result in Lemma~\ref{lemma:change_of_measure_N}.

\begin{lemma}
\label{lemma:change_of_measure_N}
It holds that
\begin{align*}
\sum^N_{n=1}& \max_{\pi^n_{\star}\in\Pi^n}  \innerprod{\initial}{V_n^{\pi^n_{\star},\piout^{-n}} - V_n^{\piout}} \\&\leq NH\sum^H_{h=1} \max_{n'\in [N]} \max_{\pi^{n'}\in \Pi^{n'}} \norm{\phi_{h-1}^{\pi^{n'},\pi^{-n'}_{\expert}} }_{(\Lambda^{n',K}_{h-1})^{-1}}  \\&\phantom{=}\cdot \sum^N_{n'=1} \sqrt{\sum^K_{k=1}\sum_{x\in \X}  P^{\pi^{-n'}_{\expert}}(x | X^{n',k}_{h-1}, A^{n',k}_{h-1}) \mathrm{TV}^2(\pi^{n}_{\mathrm{out},h},\pi^{n}_{\expert,h})(x) + 4 B^2}.
\end{align*}
\end{lemma}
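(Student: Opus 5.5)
We combine the decomposition used in the proof of Theorem~\ref{thm:featureBC_n} with the change of measure of Lemma~\ref{lemma:change_of_measure}, replacing the expert covariance $\Lambda^n_{\expert,h-1}$ by the exploration covariance $\Lambda^{n',K}_{h-1}$ built by Algorithm~\ref{alg:interactive}.

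First, for each $n$ and any deviation $\pi^n_\star$, we split $\innerprod{\initial}{V_n^{\pi^n_{\star},\piout^{-n}} - V_n^{\piout}}$ into the exploitation term $T_1$ and the value term $T_2$ as in \eqref{eq:exploit+value_gap}. Because $\pi_\expert$ is a Nash equilibrium, $T_1\le \innerprod{\initial}{V_n^{\pi^n_{\star},\piout^{-n}} - V_n^{\pi^n_{\star},\pi^{-n}_\expert}}$. The performance difference lemma together with $\TV$ of product policies being at most the sum of the marginal $\TV$s bounds this by $H\sum_h \mathbb{E}_{X\sim\nu_h^{\pi^n_\star,\pi^{-n}_\expert}}\sum_{n'\ne n}\TV(\pi^{n'}_{\mathrm{out},h},\pi^{n'}_{\expert,h})(X)$. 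The same argument bounds $T_2$, with occupancy $\nu_h^{\pi_\expert}=\nu_h^{\pi^{n'}_\expert,\pi^{-n'}_\expert}$ and a sum over all players. In both cases each summand is an expectation of $W_{n''}:=\TV(\pi^{n''}_{\mathrm{out},h},\pi^{n''}_{\expert,h})$ under an occupancy in which some player deviates while everyone else plays the expert policy. We re-index so that for the $\TV$ of a given player the deviating player is some $n'$, and then bound the expectation by $\max_{\pi^{n'}}\mathbb{E}_{X\sim\nu_h^{\pi^{n'},\pi^{-n'}_\expert}}[W(X)]$. The outer sum over $n$ contributes the factor $N$, and the per-player sums contribute the factor $H$.

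Second, we apply the linearity step of Lemma~\ref{lemma:change_of_measure}. By the linear MG assumption, $\nu_h^{\pi^{n'},\pi^{-n'}_\expert}(x)=\innerprod{\phi_{h-1}^{\pi^{n'},\pi^{-n'}_\expert}}{M_{\pi^{-n'}_\expert}(x)}$. Cauchy--Schwarz in the $\Lambda^{n',K}_{h-1}$ geometry then gives
\[
\mathbb{E}[W]\le \norm{\phi_{h-1}^{\pi^{n'},\pi^{-n'}_\expert}}_{(\Lambda^{n',K}_{h-1})^{-1}}\norm{\textstyle\sum_x M_{\pi^{-n'}_\expert}(x)W(x)}_{\Lambda^{n',K}_{h-1}}.
\]
Expand the second norm with $\Lambda^{n',K}_{h-1}=\sum_k \phi(X^{n',k}_{h-1},A^{n',k}_{h-1})\phi(\cdot)^\top+I$. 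Each rank-one term becomes $\big(\sum_x P^{\pi^{-n'}_\expert}(x|X^{n',k}_{h-1},A^{n',k}_{h-1})W(x)\big)^2$, and Jensen bounds this by $\sum_x P^{\pi^{-n'}_\expert}(x|\cdot)W^2(x)$. The identity part contributes $\norm{\sum_x M W}^2\le B^2W_{\max}^2\le 4B^2$. This is exactly the square-root factor in the statement. We then bound the feature norm by its maximum over $n'$ and $\pi^{n'}$ and pull it outside the sum over players.

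The main obstacle is bookkeeping. We must check that each $\TV$ of player $n''$ appears under an occupancy whose dynamics are frozen to $\pi^{-n'}_\expert$ for the matching exploring player $n'$, so that the transition kernel $P^{\pi^{-n'}_\expert}$ and the matrix $\Lambda^{n',K}$ line up. The $T_2$ terms fit because they are the special case $\pi^{n'}=\pi^{n'}_\expert$, which is covered by the maximum. The constants absorb into the $NH$ prefactor; if needed we loosen the bound by summing over all $n'$ rather than only $n'\neq n$.
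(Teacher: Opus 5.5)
Your proposal is correct and follows the paper's proof essentially step for step. The shared steps are the $T_1/T_2$ split with the Nash property and the performance difference lemma, and the re-indexing so that each player's $\mathrm{TV}$ is taken under an occupancy in which only player $n'$ deviates from the expert. After that come the feature-level Cauchy--Schwarz in the $\Lambda^{n',K}_{h-1}$ geometry, the rank-one expansion with Jensen, and the bound $B^2W_{\max}^2=4B^2$ on the identity part. Your treatment of $T_2$ is, if anything, more careful than the paper's: you note that it involves the $\mathrm{TV}$ of every player, covered by the case $\pi^{n'}=\pi^{n'}_{\expert}$ of the maximum, whereas the paper writes only player $n$'s term. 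The remaining slack is only in the numerical prefactor of a statement whose index $n$ is left free, and you and the paper both handle that loosely.
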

\begin{proof}
For any state dependent random variable $W: \X \rightarrow \mathbb{R}$ with steps similar to the proof of Lemma~\ref{lemma:change_of_measure} but using $\Lambda^{n,k}_{h}$ instead of $\Lambda^n_{\expert,h}$ we obtain for any policy $\pi^n$
\begin{align*}
\sum^H_{h=1} \mathbb{E}_{X \sim \nu_h^{\pi^n, \pi^{-n}_{\expert}}}\bs{W(X)} 
\leq \sum^H_{h=1} \norm{\phi_{h-1}^{\pi^n,\pi^{-n}_{\expert}} }_{(\Lambda^{n,K}_{h-1})^{-1}} \norm{\sum_{x\in \X}  M_{\pi^{-n}_{\expert}}(x) W(x)}_{\Lambda^{n,K}_{h-1}},
\end{align*}
Then, we rewrite the second term as 
\begin{align*}
    \norm{\sum_{x\in \X}  M_{\pi^{-n}_{\expert}}(x)W(x)}^2_{\Lambda^{n,K}_{h-1}} &= 
    \sum^K_{k=1}\sum_{x',a^n} \brr{\sum_{x\in \X}  P(x | x', a^n, \pi^{-n}_{\expert}) W(x)}^2\mathds{1}_{\bcc{x', a^n = X^k_{n,h-1}, A^k_{n,h-1}}} \\
    &\phantom{=} +  \norm{\sum_{x\in \X}  M_{\pi^{-n}_{\expert}}(x)W(x)}^2 \\
    &\leq \sum^K_{k=1}\sum_{x',a^n} \sum_{x\in \X}  P(x | x', a^n, \pi^{-n}_{\expert}) W^2(x) \mathds{1}_{\bcc{x', a^n = X^k_{n,h-1}, A^k_{n,h-1}}} \\
    &\phantom{=} +  B^2 W^2_{\max} \\
    &= \sum^K_{k=1}\sum_{x\in \X}  P(x | X^k_{n,h-1}, A^k_{n,h-1}, \pi^{-n}_{\expert}) W^2(x) +  B^2 W^2_{\max}.
\end{align*}
At this point setting $W(x) = \mathrm{TV}^2(\pi^{n}_h,\pi^{n}_{\expert,h})(x)$, we reuse the decomposition in Equation\eqref{eq:exploit+value_gap} and the same steps in the proof of Theorem~\ref{thm:featureBC_n} to obtain that
\begin{align}
\sum^N_{n=1}& \max_{\pi^n_{\star}\in\Pi^n}  \innerprod{\initial}{V_n^{\pi^n_{\star},\piout^{-n}} - V_n^{\piout}}  \\&\leq \sum^N_{n=1} \max_{\pi^n_{\star}\in\Pi^n} H\sum^H_{h=1} \brr{\mathbb{E}_{X \sim \nu_h^{\pi^n_{\star}, \pi^{-n}_{\expert} }}\bs{ \sum^N_{n'\neq n}\mathrm{TV}(\pi^{n'}_{\mathrm{out},h},\pi^{n'}_{\expert,h})(X)}  + 
\mathbb{E}_{X \sim \nu_h^{\pi_{\expert} }}\bs{\mathrm{TV}(\pi^{n}_{\mathrm{out},h},\pi^{n}_{\expert,h})(X)}
} \\
&= \sum^N_{n=1} \max_{\pi^n_{\star}\in\Pi^n} H\sum^H_{h=1} \brr{ \sum^N_{n'\neq n} \mathbb{E}_{X \sim \nu_h^{\pi_\star^{n'}, \pi^{-n'}_{\expert} }}\bs{ \mathrm{TV}(\pi^{n}_{\mathrm{out},h},\pi^{n}_{\expert,h})(X)}  + 
\mathbb{E}_{X \sim \nu_h^{\pi_{\expert} }}\bs{\mathrm{TV}(\pi^{n}_{\mathrm{out},h},\pi^{n}_{\expert,h})(X)}
} 
\\
&\leq \sum^N_{n=1}H\sum^H_{h=1} \max_{n'\in [N]} \max_{\pi^{n'}\in \Pi^{n'}} \norm{\phi_{h-1}^{\pi^{n'},\pi^{-n'}_{\expert}} }_{(\Lambda^{n',K}_{h-1})^{-1}}   \\&\cdot\sum^N_{n'=1}\sqrt{\sum^K_{k=1}\sum_{x\in \X}  P^{\pi^{-n'}_{\expert}}(x | X^{n',k}_{h-1}, A^{n',k}_{h-1}) \mathrm{TV}^2(\pi^{n}_{\mathrm{out},h},\pi^{n}_{\expert,h})(x) +  B^2 W^2_{\max}} \label{eq:last}
\end{align}
The final statement follows from $W_{\max}=2$.
\end{proof}
Finally, we can prove our main result leveraging the two previous lemmas.
\begin{theorem}
Let $\piout = (\piout^1, \dots \piout^N)$ be the output of Algorithm~\ref{alg:interactive}. Then, it satisfies that with probability $1-\delta$
\begin{align*}
\sum^N_{n=1} \max_{\pi^n_{\star}\in\Pi^n}  \innerprod{\initial}{V_n^{\pi^n_{\star},\piout^{-n}} - V_n^{\piout}}
&\leq \tilde{\mathcal{O}}\brr{ \frac{N^2 H^3 d^2 B^2\log\brr{ A_{\max} B_\theta K\delta^{-1}}}{\sqrt{K}} }. 
\end{align*}

Therefore, setting $K = \widetilde{\mathcal{O}}\brr{\frac{N^4 H^6 d^4 B^4 \log (A_{\max} B_\theta K \delta^{-1}) }{\varepsilon^{2}}}$ ensures that $\piout$ is a $\varepsilon$-approximate Nash equilibrium with probability $1-\delta$. The total number of queries is therefore $K N^2 H = \widetilde{\mathcal{O}}\brr{\frac{N^6 H^7 d^4 B^4 \log (A_{\max} B_\theta K \delta^{-1}) }{\varepsilon^{2}}} $.
\end{theorem}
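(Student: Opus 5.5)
The plan is to chain the three ingredients established so far, mirroring the two-player proof of Theorem~\ref{thm:interactive_main}. First, I would start from Lemma~\ref{lemma:change_of_measure_N}. It bounds the sum of the players' exploitabilities by
\[
NH\sum_{h=1}^H \max_{n'}\max_{\pi^{n'}}\norm{\phi_{h-1}^{\pi^{n'},\pi^{-n'}_{\expert}}}_{(\Lambda^{n',K}_{h-1})^{-1}}
\]
times a sum over $n'$ of square roots of empirical quantities. Each empirical quantity has the form $\sum_k\sum_x P^{\pi^{-n'}_{\expert}}(x\mid X^{n',k}_{h-1},A^{n',k}_{h-1})\,\TV^2(\pi^n_{\mathrm{out},h},\pi^n_{\expert,h})(x)$, plus $4B^2$.

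Second, I would control each empirical term by its conditional expectation. I would reuse the covering/Freedman argument in the proof of Lemma~\ref{lemma:bound_error}: apply Lemma~\ref{lemma:fast_concentration} to a fixed element of an $\epsilon$-cover of $\mathcal{W}_h$, then take a union bound over the cover (Lemma~\ref{lemma:covering}), over $n,n'\in[N]$, and over $h\in[H]$. With $\epsilon=1/K$, this replaces the empirical sum by
\[
2\sum_k\sum_x \nu_h^{\pi^{-n'}_{\expert},\pi^{n',k}}(x)\,\TV^2(\pi^n_{\mathrm{out},h},\pi^n_{\expert,h})(x) + O\!\bigl(dB^2\log(KA_{\max}B_\theta N H/\delta)\bigr).
\]
These are exactly the state distributions on which player $n$'s BC data were collected; Algorithm~\ref{alg:interactive} pools the expert labels of player $n$ over all exploration runs $n'$. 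I would then invoke the misspecified MLE guarantee, as in Lemma~\ref{lemma:MLE} (via Lemma~\ref{eq:martingale_mispecified_bc}), with $\eta=\log K/H$. The BC objective for player $n$ now sums over $N$ exploration datasets, i.e.\ $NK$ samples per stage. It should bound the pooled sum over $n'$ of these $\TV^2$ terms by $\widetilde{\mathcal O}(d\log(KA_{\max}B_\theta/\delta))$, so each individual term is bounded by the same quantity.

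Third, I would plug in Lemma~\ref{lemma:features_norm_go_to_zero}, with a union bound over $n'\in[N]$, to get $\sum_h\max_{n'}\max_{\pi^{n'}}\norm{\cdot}\le\sqrt{d^3H^4B^2\log(dKHN/\delta)/K}$. The maximum over $n'$ inside the sum over $h$ is bounded by the sum over $n'$; that costs at most a factor $N$, which I would absorb together with the explicit $N$ factors. Collecting terms: the prefactor $NH$, the sum over $n'$ giving $N$ square roots each of order $\sqrt{dB^2\log}$, and the feature bound $d^{3/2}H^2B/\sqrt K$ multiply to roughly $N^2H^3d^2B^2\log(\cdot)/\sqrt K$, which is the claimed rate. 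Solving $N^2H^3d^2B^2/\sqrt K\le\varepsilon$ gives $K=\widetilde{\mathcal O}(N^4H^6d^4B^4\log/\varepsilon^2)$. Each of the $N$ exploration runs takes $K$ episodes of length $H$ and queries all $N$ experts per step, so the total number of queries is $KN^2H$.

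I expect the main obstacle to be the MLE step in the $N$-player setting. The data for player $n$ come from $N$ different, adaptively chosen exploration processes, each a martingale with its own filtration. The Hellinger-to-likelihood martingale argument must therefore be run over the concatenated sequence, ordered by $(n',k)$, with a single filtration. The misspecification term $K/\exp(\eta H)$ then becomes $NK/\exp(\eta H)$. I would first try ordering all $NK$ episodes sequentially so that Lemma~\ref{eq:martingale_mispecified_bc} applies verbatim. If the extra $N$ and $\log N$ factors appear, I would absorb them into $\widetilde{\mathcal O}$, choosing $\eta=\log(NK)/H$ if needed.
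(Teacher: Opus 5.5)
Your route is the paper's: Lemma~\ref{lemma:change_of_measure_N}, then the cover-plus-Lemma~\ref{lemma:fast_concentration} concentration step, then the misspecified MLE of Lemma~\ref{eq:martingale_mispecified_bc}, then Lemma~\ref{lemma:features_norm_go_to_zero}. Your MLE treatment is more careful than the paper's. The paper applies Lemma~\ref{eq:martingale_mispecified_bc} per exploration run with $K$ samples, although $\pi^n_{\mathrm{out}}$ is fit on all $NK$ pooled samples. One correction there: $\eta=\log(NK)/H$ is required, not optional. With $\eta=\log K/H$ the misspecification term $NK e^{-\eta H}(\log A_{\max}+\eta H)$ is of order $N\log(\cdot)$, a polynomial factor that $\widetilde{\mathcal{O}}$ cannot hide.

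The step that fails as written is the final count. You bound $\sum_h\max_{n'}$ by $\sum_{n'}\sum_h$, paying a factor $N$, and then say it is absorbed ``together with the explicit $N$ factors''. But $\widetilde{\mathcal{O}}$ hides only polylogarithmic factors, and your tally already uses both explicit factors of $N$: the prefactor of Lemma~\ref{lemma:change_of_measure_N} and the $N$ square roots. Carried through, your argument gives $N^3H^3d^2B^2\log(\cdot)/\sqrt{K}$, hence $K\propto N^6$ rather than $N^4$.

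The paper does not close this either. It substitutes the single-player bound of Lemma~\ref{lemma:features_norm_go_to_zero} for $\sum_h\max_{n'}\max_{\pi^{n'}}\norm{\cdot}$ without comment. That lemma, as proved, also has the maximum over policies outside $\sum_h$, not inside.

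The fix is to never collapse to $\max_{n'}$.
\begin{itemize}
\item In the decomposition, every exploitability term pairs a fixed deviation $\pi^n_\star$ of player $n$ with the total variation of some player $n'\neq n$ under $\nu_h^{\pi^n_\star,\pi^{-n}_{\expert}}$.
\item Change measure with player $n$'s own covariance $\Lambda^{n,K}_{h-1}$. The data factor becomes $\sum_k\sum_x P^{\pi^{-n}_{\expert}}(x\mid X^{n,k}_{h-1},A^{n,k}_{h-1})\,\TV^2(\pi^{n'}_{\mathrm{out},h},\pi^{n'}_{\expert,h})(x)$. After your concentration step, this is one summand of player $n'$'s pooled MLE bound, uniformly in $h$.
\item Pull the maximum over $h$ of this square-root factor out of the sum. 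What remains is $\sum_h\norm{\phi_{h-1}^{\pi^n_\star,\pi^{-n}_{\expert}}}_{(\Lambda^{n,K}_{h-1})^{-1}}$ for one fixed policy, which is exactly what Lemma~\ref{lemma:features_norm_go_to_zero} controls.
\item Handle the value-gap terms $\innerprod{\initial}{V^{\pi_{\expert}}_n-V^{\piout}_n}$ the same way, with comparator $\pi^m_{\expert}$ for any $m$.
\end{itemize}
Summing over the $O(N^2)$ resulting terms then gives the stated $N^2H^3d^2B^2\log(\cdot)/\sqrt{K}$.
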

\begin{proof}
The missing step to prove this result is to find a high probability upper bound on \eqref{eq:last} which can be shown to grow only logarithmically in $K$. To this end, we look for an upper bound of $\sum_{x \in \X}P^{\pi^{-n'}_{\expert}}(x | X^{n',k}_{h-1}, A^{n',k}_{h-1}) \TV^2(\pi^n_{\mathrm{out},h},\pi^n_{\expert,h})(x)$ for any $n, n' \in [N]\times[N]$ in terms of its expected value
$\sum^K_{k=1}\sum_{x\in \X}   \nu_h^{\pi^{n',k},\pi^{-n'}_{\expert}}(x) \TV^2(\pi^n_{\mathrm{out},h},\pi^n_{\expert,h})(x)$ up to numerical multiplicative constant and additive logarithmic factors.
Analogously to the two player case, we introduce the notation $\mathcal{D}_{k} = \bcc{X^s_{n,h}, A^s_{n,h}}^{k-1,H,N}_{s=1,h=1,n=1}$ for all $h\in [H]$ and the filtration $\mathcal{F}_k$ which is the sigma algebra induced by $\mathcal{D}_{k}$.
For any fixed $W \in \mathcal{W}_h := \bcc{ W: \X \rightarrow [0,2] :  W(x) = \mathrm{TV}(\pi,\pi^{n}_{\expert,h})(x) | \pi \in \Pi^n_{\mathrm{softlin}}}$.
As aforementioned,  $\TV^2(\pi^n_{\mathrm{out},h},\pi^n_{\expert,h})(x)$ is sadly  not a $\mathcal{F}_k$-adapted random variable unless $k = K$ because the policy $\pi^n_{\mathrm{out},h}$ is computed using the whole dataset $\mathcal{D}_K$. Therefore, we can not apply Lemma~\ref{lemma:fast_concentration} directly but we need to pass through a covering argument, again.

At this point, consider the covering set $\mathcal{C}_\epsilon(\mathcal{W}_h)$ defined such that
\[
\forall W \in \mathcal{W}_h, \quad \exists \tilde{W} \in \mathcal{C}_\epsilon(\mathcal{W}_h) \quad \text{s.t.} \norm{W - \tilde{W}}_{\infty} \leq \epsilon.
\]
Then, let us fix a particular $\tilde{W}$ and let us invoke Lemma~\ref{lemma:fast_concentration} for $X_k = \sum_{x\in \X}\frac{1}{4}P^{\pi^{-n'}_{\expert}}(x | X^{n',k}_{h-1}, A^{n',k}_{h-1}) \tilde{W}^2(x)$ for all $n',h \in [N]\times[H]$ to obtain that with probability $1-\delta$,
\[
\sum^K_{k=1}\sum_{x\in \X} P^{\pi^{-n'}_{\expert}}(x | X^{n',k}_{h-1}, A^{n',k}_{h-1}) \tilde{W}^2(x) \leq 2\sum^K_{k=1}  \bbE \sbr*{\sum_{x\in \X}P^{\pi^{-n'}_{\expert}}(x | X^{n',k}_{h-1}, A^{n',k}_{h-1}) \tilde{W}^2(x) | \mathcal{D}_k} + 4 \log \brr{\frac{1}{\delta}}.
\]
Then, via a union bound over $\mathcal{C}_\epsilon(\mathcal{W}_h)$ it holds that for all $\tilde{W} \in \mathcal{C}_\epsilon(\mathcal{W}_h) $ simultaneously it holds with probability $1-\delta$ that 
\begin{align*}
\sum^K_{k=1}\sum_{x\in \X} P^{\pi^{-n'}_{\expert}}(x | X^{n',k}_{h-1}, A^{n',k}_{h-1}) \tilde{W}^2(x) & \leq 2\sum^K_{k=1}  \bbE \sbr*{\sum_{x\in \X}P^{\pi^{-n'}_{\expert}}(x | X^{n',k}_{h-1}, A^{n',k}_{h-1}) \tilde{W}^2(x) | \mathcal{D}_k} \\&\phantom{=}+ 4 \log \brr{\frac{\abs{\mathcal{C}_\epsilon(\mathcal{W}_h)}}{\delta}}.
\end{align*}
Then, it follows that with probability at least $1-\delta$ for all $W \in \mathcal{W}_h$ simultaneously that
\begin{align*}
   \sum^K_{k=1}&\sum_{x\in \X} P^{\pi^{-n'}_{\expert}}(x | X^{n',k}_{h-1}, A^{n',k}_{h-1}) W^2(x)  \leq \sum^K_{k=1}\sum_{x\in \X} P^{\pi^{-n'}_{\expert}}(x | X^{n',k}_{h-1}, A^{n',k}_{h-1}) (W^2(x) - \tilde{W}^2(x)) \\
   &\phantom{=} + 2\sum^K_{k=1}  \bbE \sbr*{\sum_{x\in \X}P^{\pi^{-n'}_{\expert}}(x | X^{n',k}_{h-1}, A^{n',k}_{h-1}) \brr{\tilde{W}^2(x) - W^2(x)}| \mathcal{D}_k}\\
   &\phantom{=} + 2\sum^K_{k=1}  \bbE \sbr*{\sum_{x\in \X}P^{\pi^{-n'}_{\expert}}(x | X^{n',k}_{h-1}, A^{n',k}_{h-1}) W^2(x) | \mathcal{D}_k}
   \\&\phantom{=}+ 4 \log \brr{\frac{\abs{\mathcal{C}_\epsilon(\mathcal{W}_h)}}{\delta}} \\
   &\leq 2 W_{\max} \epsilon K +  2\sum^K_{k=1}  \bbE \sbr*{\sum_{x\in \X}P^{\pi^{-n'}_{\expert}}(x | X^{n',k}_{h-1}, A^{n',k}_{h-1}) W^2(x) | \mathcal{D}_k} \\&\phantom{=}+ 4 \log \brr{\frac{\abs{\mathcal{C}_\epsilon(\mathcal{W}_h)}}{\delta}}. 
\end{align*}
Then using Lemma~\ref{lemma:covering} to bound the covering number $\abs{\mathcal{C}_\epsilon(\mathcal{W}_h)}$ , we conclude that for any $W \in \mathcal{W}_h$ with probability at least $1-\delta$, it holds that
\begin{align*}
\sum^K_{k=1}&\sum_{x\in \X} P^{\pi^{-n'}_{\expert}}(x | X^{n',k}_{h-1}, A^{n',k}_{h-1}) W^2(x) \leq 
2 W_{\max} \epsilon K +  2\sum^K_{k=1}  \bbE \sbr*{\sum_{x\in \X}P^{\pi^{-n'}_{\expert}}(x | X^{n',k}_{h-1}, A^{n',k}_{h-1}) W^2(x)| \mathcal{D}_k} \\&\phantom{=}+ 4 d \log \brr{\frac{1 + \frac{4 \eta A_{\max} B_{\theta}}{\epsilon}}{\delta}}.
\end{align*}
Now, setting $\epsilon = B^2/(W_{\max}K)$ and choosing $W = \TV(\pi^n_{\mathrm{out},h},\pi^n_{\expert,h})$ we obtain that with probability at least $1-\delta$
    \begin{align}
&\sum^N_{n=1} \max_{\pi^n_{\star}\in\Pi^n} \innerprod{\initial}{V_n^{\pi^n_{\star},\piout^{-n}} - V_n^{\piout}} \nonumber \\&\leq 
\sum^N_{n=1}H\sum^H_{h=1} \max_{n'\in [N]} \max_{\pi^{n'}\in \Pi^{n'}} \norm{\phi_{h-1}^{\pi^{n'},\pi^{-n'}_{\expert}} }_{(\Lambda^{n',K}_{h-1})^{-1}} \nonumber\\&\cdot \sum^N_{n'=1} \sqrt{\sum^K_{k=1}\sum_{x\in \X}  P^{\pi^{-n'}_{\expert}}(x | X^{n',k}_{h-1}, A^{n',k}_{h-1}) \TV^2(\pi^n_{\mathrm{out},h},\pi^n_{\expert,h})(x) + 4 B^2} \nonumber \\
&\leq \sum^N_{n=1}H\sum^H_{h=1} \max_{n'\in [N]} \max_{\pi^{n'}\in \Pi^{n'}} \norm{\phi_{h-1}^{\pi^{n'},\pi^{-n'}_{\expert}} }_{(\Lambda^{n',K}_{h-1})^{-1}} \nonumber\\&\cdot \sum^N_{n'=1} \sqrt{2\sum^K_{k=1}  \bbE \sbr*{\sum_{x\in \X}P^{\pi^{-n'}_{\expert}}(x | X^{n',k}_{h-1}, A^{n',k}_{h-1}) \TV^2(\pi^n_{\mathrm{out},h},\pi^n_{\expert,h})(x) | \mathcal{D}_k} + 8  B^2 + 4 d  \log \brr{\frac{1 + 4\eta A_{\max} B_\theta K}{\delta}}} \nonumber\\
&\leq \sum^N_{n=1}H\sum^H_{h=1} \max_{n'\in [N]} \max_{\pi^{n'}\in \Pi^{n'}} \norm{\phi_{h-1}^{\pi^{n'},\pi^{-n'}_{\expert}} }_{(\Lambda^{n',K}_{h-1})^{-1}} \nonumber \\&\cdot \sum^N_{n'=1} \sqrt{2\sum^K_{k=1}\sum_{x\in \X}   \nu_h^{\pi^{n',k},\pi^{-n'}_{\expert}}(x) \TV^2(\pi^n_{\mathrm{out},h},\pi^n_{\expert,h})(x) + 8 B^2 + 4 d \log\brr{\frac{1+ 4\eta A_{\max} B_\theta K}{\delta}}}. \label{eq:almost}
\end{align}
Now, let us recall that the output policy of Algorithm~\ref{alg:interactive} is defined as
\[
\pi^{n}_{\mathrm{out},h} = \argmax_{\pi \in \Pi_{\mathrm{softlin}}} \sum^K_{k=1} \sum^N_{n'=1}\log \pi( A^{k,\expert(n)}_{n',h}| X^{n',k}_h).
\]
Therefore, we can bound $\sum^K_{k=1}\sum_{x\in \X}   \nu_h^{\pi^{n',k},\pi^{-n'}_{\expert}}(x) \TV^2(\pi^n_{\mathrm{out},h},\pi^n_{\expert,h})(x)$ 
similarly to the proof of Theorem~\ref{thm:featureBC_n} adapted to account for the fact that now the states $X^{n',k}_h$ are not i.i.d. distributed.
However, upper-bounding
the total variation by the Hellinger distance, i.e.
\[2\sum^K_{k=1}\sum_{x\in \X}   \nu_h^{\pi^{n',k},\pi^{-n'}_{\expert}}(x) \TV^2(\pi^n_{\mathrm{out},h},\pi^n_{\expert,h})(x) \leq 8\sum^K_{k=1}\sum_{x\in \X}   \nu_h^{\pi^{n',k},\pi^{-n'}_{\expert}}(x) D_{\mathrm{Hel}}^2(\pi^n_{\mathrm{out},h},\pi^n_{\expert,h})(x),\]
and applying Lemma~\ref{eq:martingale_mispecified_bc}, we obtain with probability $1-2\delta$
\begin{align}
    8\sum^K_{k=1}\sum_{x\in \X}   \nu_h^{\pi^{n',k},\pi^{-n'}_{\expert}}(x) D_{\mathrm{Hel}}^2(\pi^n_{\mathrm{out},h},\pi^n_{\expert,h})(x) &\leq 16 \epsilon K + 16 \log \frac{\abs{\mathcal{C}_{\epsilon}(\log \Pi_{\mathrm{softlin}})}}{\delta} +   \frac{1152 (\log A_{\max} +  \eta H) }{\exp\brr{\eta H}} K \notag \\&\phantom{=}
     + 32( \log A_{\max} +  \eta H) \log \frac{1}{\delta}. \label{eq:bound_hel}
\end{align}
Now, again similarly to the proof of Theorem~\ref{thm:featureBC_n} by applying Lemma~\ref{lemma:covering_log} to bound the covering number of the log policy class and the choice of $\epsilon=K^{-1}$, we can bound it with
\begin{align}
    \eqref{eq:bound_hel} &\leq 16  + 16d \eta H \log(\frac{2KA^2_{\max}\eta B_{\theta}}{ \delta}) +   \frac{1152 (\log A_{\max} +  \eta H) }{\exp\brr{\eta H}} K
     + 32( \log A_{\max} +  \eta H) \log \frac{1}{\delta}. \label{eq:chooseeta}
\end{align}

Next, we choose $\eta:= \log(K)/H$. Plugging this into \eqref{eq:chooseeta} gives
\begin{align}
    \eqref{eq:chooseeta} &\leq 16  + 16d \log(K) \log(\frac{2KA^2_{\max}\log(K) B_{\theta}}{H \delta}) +   1152 (\log A_{\max} +  \log(K))   \notag \\
    &\phantom{=}
     + 32( \log A_{\max} +  \log(K)) \log \frac{1}{\delta}\\
     & \leq \widetilde{\mathcal{O}}\brr{ d \log\brr{\frac{K A_{\max} B_{\theta}}{\delta}}}.
\end{align}

Therefore, plugging into \eqref{eq:almost}, we obtain
\begin{align*}
\sum^N_{n=1}&\max_{\pi^n_{\star}\in\Pi^n} \innerprod{\initial}{V_n^{\pi^n_{\star},\piout^{-n}} - V_n^{\piout}} \\
&\leq \tilde{\mathcal{O}}\brr{N^2 H \sum^H_{h=1} \max_{n'\in [N]} \max_{\pi^{n'}\in \Pi^{n'}} \norm{\phi_{h-1}^{\pi^{n'},\pi^{-n'}_{\expert}} }_{(\Lambda^{n',K}_{h-1})^{-1}}  \sqrt{ d B^2 \log( A_{\max} B_\theta K \delta^{-1})} }
\end{align*}
Then, using Lemma~\ref{lemma:features_norm_go_to_zero} to obtain
\begin{align*}
\sum^N_{n=1} \max_{\pi^n_{\star}\in\Pi^n} \innerprod{\initial}{V_n^{\pi^n_{\star},\piout^{-n}} - V_n^{\piout}} 
&\leq \tilde{\mathcal{O}}\brr{N^2 H \mathcal{O}\brr{H^2 d^{3/2} B \sqrt{\frac{\log(K \delta^{-1})}{K}}} \sqrt{ d B^2 \log( A_{\max} B_\theta K \delta^{-1})}} \\&= \tilde{\mathcal{O}}\brr{ \frac{N^2 H^3 d^2 B^2\log\brr{ A_{\max} B_\theta K\delta^{-1}}}{\sqrt{K}} }. 
\end{align*}
Therefore, choosing $K = \widetilde{\mathcal{O}}\brr{\frac{N^4 H^6 d^4 B^4 \log (A_{\max} B_\theta \delta^{-1}) }{\varepsilon^{2}}}$ ensures that the output of Algorithm~\ref{alg:interactive} is an $\varepsilon$-approximate Nash equilibrium.
\end{proof}
\section{Extension to infinite horizon games}
\label{app:infinite}
When the horizon $H$ grows large, learning a non stationary policy becomes more and more costly. In those situation, it is more attractive to learn a stationary policy in discounted Markov Games defined as follows.
\paragraph{Infinite Horizon Discounted $N$-players  Markov games.} 
In general, a Infinite Horizon Discounted $N$-players Markov game is defined as the tuple $\gG = (\cX, \bcc{\cA^n}^N_{n=1}, \gamma, r, P, \initial)$, where $\cX$ is the state space, $\gA^n$ is the action space of player $n$, and $\A$ is the joint action space defined as the product space of the individual action spaces $\gA := \times^N_{n=1} \gA^n$, $\gamma$ is the discount factor, and  $P:\X\times\A\rightarrow \Delta_{\X}$ is a transition kernel. In other words, $P(x' \mid x,\bcc{a^n}^N_{n=1})$ denotes the probability of landing in $x'$ from $x$ under the joint action  $a = [a^1, \dots, a^N]$ and a reward $r^n: \X\times\A \rightarrow \mathbb{R}$  for $n \in [N] $ with $r^n(x,a) \in [-1,1].$ Additionally, we denote the initial state distribution $\initial \in \Delta_{\cX}.$ 

\paragraph{Proof technique for the infinite horizon setting.}The finite horizon proof technique of creating an exploratory dataset via online learning to maximize the reward sequence $\norm{\phi(\cdot,\cdot)}_{\Lambda_{k,n}^{-1}}$ generalizes to the infinite horizon setting changing the regret minimization algorithm from LSVI-UCB \cite{jin2019provably} to RMAX-RAVI-LSVI-UCB \cite{moulin2025optimistically}. However, RMAX-RAVI-LSVI-UCB can not naively tolerate quadratic reward. Indeed, RMAX-RAVI-LSVI-UCB uses softmax updates instead of greedy updates and needs to use slow-changing bonuses function to ensure that the policy class is parameterized by at most $d^2$ parameters. Analogously, we can not update the reward at each step, otherwise the policies played by the algorithm would be parameterized by $K d^2$ parameters. This fact would invalidate the proof technique of \citet{moulin2025optimistically}, leading to linear regret.
As a remedy, we use slow-changing reward functions as specified in the following algorithm.
\begin{algorithm}[t]
\begin{algorithmic}
\caption{$N$-players LSVI-UCB-ZERO-BC for Infinite Horizon Discounted Linear Games } \label{alg:interactive_infinite}
\FOR{$n \in [N]$}
\STATE Initialize the MDP with dynamics $P^{\pi^{-n}_{\expert}}$.
\STATE Initialize $\Lambda^1_{n} =  I$.
\STATE $\mathcal{D} = \emptyset$
\STATE Set reward update index $e = 1$ and slow changing variance matrix $\bar{\Lambda}^e_{n} =  I$.
\STATE Initialize policy $\pi_{n}^1(\cdot|x) = \mathrm{Unif}(\A)$ for all $x,h \in \X\times[H]$.
\FOR{$k \in \bcc{1, \dots, K}$}
\STATE Sample horizon $H \sim \mathrm{Geom}(1-\gamma)$
\STATE Collect a trajectory $\tau^k_{n} = \bcc{(X^{n,k}_{h}, A^{n,k}_{h})}^H_{h=1}$ rolling out $\pi_k$ in the MDP with dynamics $P^{\pi_{\expert}^{-n}}$.
\FOR{$n' \in [N]$}
\STATE Sample $A^{k,\expert(n')}_{n,h} \sim \pi^{n'}_{\expert,h}(\cdot|X^{n,k}_{h})$
\STATE $\mathcal{D} \gets \mathcal{D} \cup \bcc{X^{n,k}_{h},A^{k,\expert(n')}_{n,h} }$
\ENDFOR
\STATE Update covariance matrix $\Lambda^{n,k+1} = \Lambda^{n,k} + \sum_{X,A \in \tau^k_{n} } \phi(X,A) \phi(X,A)^\top$.
\IF{$\mathrm{det}(\Lambda^{n,k+1}) \geq 2 \mathrm{det}(\bar{\Lambda}^{e}_{n})  $}
\STATE $r^{n,k+1}(x,a) =  \norm{\phi(x,a)}_{(\Lambda^{n,k+1})^{-1}}$.
\STATE $\bar{\Lambda}^{e+1}_{n} = \Lambda^{n,k+1} $
\STATE $e \gets e +1$
\ELSE \STATE $r^{n,k+1}(x,a) =  \norm{\phi(x,a)}_{(\bar{\Lambda}^{e}_{n})^{-1}}$ 
\ENDIF
\STATE Update $\pi^{n,k+1}$ with one step of RMAX-RAVI-LSVI-UCB \cite{moulin2025optimistically} with reward $r^{n,k+1}$ using $\mathcal{D}$.
\ENDFOR
\ENDFOR
\STATE Perform BC on the collected dataset ($\mathcal{D}$)
\[
\piout^{n}= \argmax_{\pi \in \Pi^n_{\mathrm{softlin}}} \sum^K_{k=1} \sum^N_{n'=1}\sum^H_{h=1}\log \pi( A^{k,\expert(n)}_{n',h}| X^{n',k}_h)
\]
\STATE \textbf{Output} $\bcc{\piout^{n}}^{N}_{ n=1}$.
\end{algorithmic}    
\end{algorithm}
A careful inspection of the RMAX-RAVI-LSVI-UCB analysis reveals that the reward sequence used in Algorithm~\ref{alg:interactive_infinite} matches the bonus sequence in the (non-optimistically augmented) MDPs in \cite{moulin2025optimistically}. As a consequence, even if the rewards are non-linear, they do not change the complexity of the class realizing the policies which can be potentially produced by RMAX-RAVI-LSVI-UCB.
These observation leads to the following result.
\begin{lemma}
\label{lemma:change_of_measure_infinite}
It holds with probability $1-\delta$ that
\[
\frac{\max_{n'\in [N], \pi^{n'}\in \Pi^{n'}} \norm{\phi^{\pi^{n'},\pi^{-n'}_{\expert}} }_{(\Lambda^{n',K})^{-1}}}{1-\gamma}\leq \mathcal{O}\brr{\sqrt{\frac{d^3 B^2   \log (A_{\max} \delta^{-1})}{(1-\gamma)^{9/2}K}}}
    .\]
\end{lemma}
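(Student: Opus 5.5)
We will mirror the proof of Lemma~\ref{lemma:features_norm_go_to_zero}, replacing the finite-horizon objects by their discounted analogues. Fix a player $n$ and drop the index. In the MDP with dynamics $P^{\pi^{-n}_{\expert}}$, let $d^{\pi}$ be the normalized discounted state-action occupancy, so that $\phi^{\pi,\pi^{-n}_{\expert}} = \sum_{x,a} d^{\pi}(x,a)\phi(x,a)$. Define the regret of the exploratory learner against a comparator $\pi_\star$ on the reward sequence $\{r^{k}\}$ generated by Algorithm~\ref{alg:interactive_infinite} as
\[
\mathrm{Regret}(\pi_\star) = \frac{1}{1-\gamma}\sum_{k=1}^K \innerprod{d^{\pi_\star} - d^{\pi^k}}{r^k}.
\]

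The first step invokes the regret guarantee of RMAX-RAVI-LSVI-UCB \cite{moulin2025optimistically}. The point to verify is that each reward $r^k(x,a) = \norm{\phi(x,a)}_{(\bar\Lambda^{e})^{-1}}$ has exactly the form of a bonus in their (non-augmented) analysis. Because of the determinant-doubling rule, it changes only $\mathcal{O}(d\log K)$ times. Hence the policy class produced by the softmax updates is still parameterized by $\mathcal{O}(d^2)$ numbers, and their covering argument for the value-function class goes through unchanged. Nonlinearity of $r^k$ in $\phi$ is harmless: the reward is known, lies in $[0,1]$, and is revealed before the update. This step is the main obstacle. One has to check that their proof tolerates a known bounded reward from this lazily updated Mahalanobis-norm family, and that the bound $\norm{M}\le B$ only adds a factor $B$. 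We expect $\mathrm{Regret}(\pi_\star) \le \widetilde{\mathcal{O}}\bigl(\sqrt{d^3B^2 K\log(A_{\max}\delta^{-1})/(1-\gamma)^{5}}\bigr)$, up to the exact power of $(1-\gamma)$.

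The second step bounds the learner's own cumulative reward $\frac{1}{1-\gamma}\sum_k \innerprod{d^{\pi^k}}{r^k}$. We concentrate the realized rewards along the geometric-length trajectories around their conditional expectations, using the Freedman-type inequality as in \citet[Lemma 10]{viano2024imitation}. The realized sum is then controlled by the elliptical potential lemma \cite{Abbasi-Yadkori:2011} applied to $\Lambda^{n,k}$. Since $\bar\Lambda^e \succeq \Lambda^{n,k}/2$ in determinant ratio, we have $\norm{\cdot}_{(\bar\Lambda^e)^{-1}} \le \sqrt2\,\norm{\cdot}_{(\Lambda^{n,k})^{-1}}$ by the standard lazy-update argument, so this costs only a constant. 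Trajectory lengths are geometric, so with high probability the total number of samples is $\widetilde{\mathcal{O}}(K/(1-\gamma))$, which introduces extra $(1-\gamma)^{-1/2}$ factors. Combining the two steps gives $\frac{1}{1-\gamma}\sum_k\innerprod{d^{\pi_\star}}{r^k} \le \widetilde{\mathcal{O}}\bigl(\sqrt{d^3B^2K\log(A_{\max}\delta^{-1})/(1-\gamma)^{9/2}}\bigr)$.

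Finally, the rewards are pointwise nonincreasing in $k$, since both $\Lambda^{n,k}$ and $\bar\Lambda^e$ are monotone in the PSD order. Hence $K\innerprod{d^{\pi_\star}}{r^K} \le \sum_k \innerprod{d^{\pi_\star}}{r^k}$. Jensen's inequality, applied to the convex norm, gives $\innerprod{d^{\pi_\star}}{r^K} \ge \norm{\phi^{\pi_\star,\pi^{-n}_{\expert}}}_{(\bar\Lambda^{e_K})^{-1}} \ge \norm{\phi^{\pi_\star,\pi^{-n}_{\expert}}}_{(\Lambda^{n,K})^{-1}}/\sqrt2$. We then divide by $K$, take the maximum over $\pi_\star$, and apply a union bound over $n'\in[N]$, which only enlarges the logarithm. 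This yields the claimed bound.
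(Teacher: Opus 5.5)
Your proposal follows essentially the same route as the paper. You invoke the RMAX-RAVI-LSVI-UCB regret bound, justified as in the paper by the lazily updated rewards coinciding with the algorithm's own bonuses. You then bound the learner's own cumulative exploratory reward; the paper simply cites Lemma 7 of Moulin et al.\ for this instead of re-deriving it via Freedman and the elliptical potential. Finally you use monotonicity of $r^k$ and Jensen, as the paper does.

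The exponent $(1-\gamma)^{-9/2}$ comes entirely from the regret bound, which the paper uses directly in the form $\sqrt{d^3(1-\gamma)^{-9/2}B^2K\log(A_{\max}/\delta)}$. The learner-reward term is lower order, so your tentative $(1-\gamma)^{-5}$ could not become $(1-\gamma)^{-9/2}$ in the combination step. Also, the $\sqrt{2}$ in your last step is unnecessary, since $\bar\Lambda^{e_K}\preceq\Lambda^{n,K}$ already gives the inequality with constant one.
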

\begin{proof}
Since the argument holds equally for all players, we omit the index $n$ from the proof.
Invoking \citet[Theorem 1]{moulin2025optimistically}, we obtain 
    \begin{equation*}
        \mathrm{Regret}(\pi_\star) := (1-\gamma)^{-1}\sum^K_{k=1}\innerprod{\mu^{\pi_\star}} {r^k} - \innerprod{\mu^{\pi^k}} {r^k} \leq \mathcal{O}\brr{\sqrt{d^3 (1-\gamma)^{-9/2} B^2  K \log (A_{\max} \delta^{-1})}},
    \end{equation*} 
    where $H = (1-\gamma)^{-1}$ in this context.
    Then, rearranging yields 
    \begin{equation*}
        (1-\gamma)^{-1}\sum^K_{k=1}\innerprod{\mu^{\pi_\star}} {r^k} \leq \mathcal{O}\brr{\sqrt{d^3 (1-\gamma)^{-9/2} B^2  K \log (A_{\max} \delta^{-1})}} + (1-\gamma)^{-1}\sum^K_{k=1} \innerprod{\mu^{\pi^k}} {r^k} .
    \end{equation*} 
    It remains now to bound the sum of the rewards on the right hand side. Recalling again the analogy between rewards and bonuses under this setting we can invoke \citet[Lemma 7]{moulin2025optimistically} with $\lambda = 1$ to obtain
    \begin{equation*}
        \sumkK \inp{\mu^{\pi^k}, r^k} \leq 4 \spr{1 - \gamma}  B \sqrt{d KH \log \brr{1 + \frac{B^2 KH}{d}}} + 4  B \log \brr{\frac{2 K}{\delta}}^2 .
    \end{equation*}
    Therefore, this implies 
    \[
    (1-\gamma)^{-1}\sum^K_{k=1}\innerprod{\mu^{\pi_\star}} {r^k}
    \leq \mathcal{O}\brr{\sqrt{d^3 (1-\gamma)^{-9/2} B^2  K \log (A_{\max} \delta^{-1})}}.
    \]
    Finally, using again the decreasing property of the rewards, it holds that
    \[
    (1-\gamma)^{-1}\innerprod{\mu^{\pi_\star}} {r^K}
    \leq \mathcal{O}\brr{\sqrt{\frac{d^3  B^2   \log (A_{\max} \delta^{-1})}{(1-\gamma)^{9/2} K}}}.
    \]
    Finally, by Jensen's inequality and the reward definition it holds that
    \[
    \frac{\max_{\pi} \norm{\phi^\pi}_{(\Lambda^K)^{-1}} }{1-\gamma}\leq \mathcal{O}\brr{\sqrt{\frac{d^3  B^2   \log (A_{\max} \delta^{-1})}{(1-\gamma)^{9/2} K}}}
    .\]
    The proof is then concluded stating the result considering the player index and taking the maximum over players.
\end{proof}
The final bound follows the same step as in the finite horizon case which leads to the following bound
\begin{theorem}
The output of Algorithm~\ref{alg:interactive_infinite} satisfies that with probability $1-\delta$
\[
\sum^N_{n=1} \max_{\pi^n_{\star}\in\Pi^n}  \innerprod{\initial}{V_n^{\pi^n_{\star},\piout^{-n}} - V_n^{\piout}} \leq \widetilde{\mathcal{O}}\brr{\sqrt{\frac{N^4 d^4  B^4   \log (A_{\max} B_\theta \delta^{-1})}{(1-\gamma)^{6.5}K}}}.
\]
Therefore, setting $K = \widetilde{\mathcal{O}}\brr{\frac{N^4 d^4 B^4 \log(A_{\max} B_\theta \delta^{-1} )}{(1-\gamma)^{6.5}\varepsilon^{2}}}$ ensures that $\pi$ is an $\varepsilon$-Nash equilibrium.
Moreover, since the total number of expert queries is $KN (1-\gamma)^{-1}$, the total queries amounts to $\widetilde{\mathcal{O}}\brr{\frac{N^5 d^4 B^4 \log(A_{\max} B_\theta \delta^{-1})}{(1-\gamma)^{7.5}\varepsilon^{2}}}$
\end{theorem}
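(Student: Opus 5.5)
The plan is to mirror the finite-horizon proof of the $N$-player theorem, replacing every stage-indexed object by its discounted analogue. The starting point is the decomposition \eqref{eq:exploit+value_gap} into an exploitation term $T_1$ and a value term $T_2$. We apply the discounted performance difference lemma to both terms. Since rewards lie in $[-1,1]$, the value differences are bounded by $(1-\gamma)^{-1}$. This gives
\[
\sum_{n=1}^N \max_{\pi^n_\star}\innerprod{\initial}{V_n^{\pi^n_\star,\piout^{-n}}-V_n^{\piout}} \lesssim \frac{N}{(1-\gamma)^2}\sum_{n=1}^N \max_{n'}\max_{\pi^{n'}} \mathbb{E}_{X\sim \nu^{\pi^{n'},\pi^{-n'}_\expert}}\bs{\TV(\piout^n,\pi^n_\expert)(X)},
\]
where $\nu$ now denotes the normalized discounted state occupancy. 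The value term is handled identically, with $\pi^{n'}=\pi^{n'}_\expert$.

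Next comes the change of measure, which is the stationary version of Lemma~\ref{lemma:change_of_measure_N}. The discounted occupancy satisfies
\[
\nu^{\pi}(x) = (1-\gamma)\initial(x) + \gamma\innerprod{\phi^{\pi^{n'},\pi^{-n'}_\expert}}{M_{\pi^{-n'}_\expert}(x)}.
\]
The initial-distribution piece is harmless, since it is also present in the exploratory data through the geometric restarts. Applying Cauchy--Schwarz with $\Lambda^{n',K}$ and Jensen as in Lemma~\ref{lemma:change_of_measure} bounds the expectation by
\[
\norm{\phi^{\pi^{n'},\pi^{-n'}_\expert}}_{(\Lambda^{n',K})^{-1}}\sqrt{\textstyle\sum_{k}\sum_{(X,A)\in\tau^k_{n'}}\sum_x P^{\pi^{-n'}_\expert}(x|X,A)\,\TV^2(\piout^n,\pi^n_\expert)(x)+4B^2}.
\]
Using the same covering over $\mathcal{W}$ (Lemma~\ref{lemma:covering}) and Lemma~\ref{lemma:fast_concentration}, the empirical sum is replaced by $2\sum_k$ of the expected $\TV^2$ under the roll-out distribution, plus $\mathcal{O}(d\log(A_{\max}B_\theta K/\delta))$. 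One subtlety is that trajectories have random length with mean $(1-\gamma)^{-1}$. We therefore condition per episode and absorb an extra $(1-\gamma)^{-1}$ factor and a $\log$ from a high-probability bound on the geometric lengths.

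The MLE step then follows Lemma~\ref{lemma:MLE}: the martingale misspecified-BC bound (Lemma~\ref{eq:martingale_mispecified_bc}) with the covering from Lemma~\ref{lemma:covering_log}. With $\eta=(1-\gamma)\log K$, the misspecification term becomes $\mathcal{O}(1)$ and the cumulative expected $\TV^2$ is $\widetilde{\mathcal{O}}(d\log(KA_{\max}B_\theta/\delta))$, possibly times a $(1-\gamma)^{-1}$ trajectory-length factor. Combining this with Lemma~\ref{lemma:change_of_measure_infinite} for the feature norm gives
\[
\frac{N^2}{1-\gamma}\sqrt{\frac{d^3B^2}{(1-\gamma)^{9/2}K}}\cdot\sqrt{dB^2\log(\cdot)}.
\]
This equals $\widetilde{\mathcal{O}}\bigl(\sqrt{N^4d^4B^4\log/((1-\gamma)^{6.5}K)}\bigr)$. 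Inverting in $K$ and multiplying by $N(1-\gamma)^{-1}$ expected queries per episode gives the stated query count.

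The main obstacle is the horizon bookkeeping in the change of measure and concentration steps. We must verify that the powers of $(1-\gamma)$ coming from the performance difference lemma, the normalization of $\nu$, the random trajectory lengths, and Lemma~\ref{lemma:change_of_measure_infinite} (which already carries one $(1-\gamma)^{-1}$) combine to exactly $(1-\gamma)^{-6.5}$ under the square root rather than a larger power. If the count comes out worse, the first thing to try is applying the change of measure directly to unnormalized occupancies. That way the $(1-\gamma)^{-1}$ in Lemma~\ref{lemma:change_of_measure_infinite} absorbs the performance-difference factor instead of being paid twice.
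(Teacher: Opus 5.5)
Your skeleton is essentially the paper's: the $T_1/T_2$ split, the discounted performance-difference bound with $(1-\gamma)^{-2}$, the affine form of the discounted occupancy, Cauchy--Schwarz in $\Lambda^{n',K}$, covering plus Lemma~\ref{lemma:fast_concentration}, the misspecified martingale MLE (Lemma~\ref{eq:martingale_mispecified_bc}), Lemma~\ref{lemma:change_of_measure_infinite}, and the same final arithmetic. However, the step you flag as the main obstacle is a genuine gap, and the way you propose to handle it does not give the stated exponent. First, conditioning per episode feeds Lemma~\ref{lemma:fast_concentration} with sums over a geometric number of transitions. That lemma needs summands bounded by $1$, so you must rescale by a length cap $H_{\max}\approx(1-\gamma)^{-1}\log(K/\delta)$, and this multiplies the additive $d\log(\cdot)$ covering term by $H_{\max}$. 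Second, the misspecification term of Lemma~\ref{eq:martingale_mispecified_bc} is linear in the number of samples, here $\approx K/(1-\gamma)$ rather than $K$. With your $\eta=(1-\gamma)\log K$ it is therefore of order $\log K/(1-\gamma)$, not $O(1)$. Each of these adds a term with an extra $(1-\gamma)^{-1}$ under the square root, degrading the bound to $(1-\gamma)^{-7.5}$, and your final display simply omits them. Your fallback cannot recover the loss: your combination already spends the $(1-\gamma)^{-1}$ of Lemma~\ref{lemma:change_of_measure_infinite} to turn $N^2(1-\gamma)^{-2}$ into $N^2(1-\gamma)^{-1}$.

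The missing idea is per-transition rather than per-episode bookkeeping. $\Lambda^{n',K}$ and the BC dataset are built from the same pool of $\approx K/(1-\gamma)$ visited pairs. View the concatenated roll-outs as a single stream, revealing the geometric stopping coin step by step. Then the law of the next sample given the current pair $(X,A)$ is $P^\gamma(\cdot|X,A)=\gamma P^{\pi^{-n'}_\expert}(\cdot|X,A)+(1-\gamma)\initial$. This is exactly the kernel you get by expanding $\norm{\sum_x M^\gamma_{\pi^{-n'}_\expert}(x)W(x)}^2_{\Lambda^{n',K}}$ with the paper's $M^\gamma_{\pi^{-n'}_\expert}=\gamma M_{\pi^{-n'}_\expert}+(1-\gamma)\initial\mathbf{1}$ (using $\innerprod{\phi}{\mathbf{1}}=1$) and applying Jensen. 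Hence $\sum_{(X,A)}\sum_x P^\gamma(x|X,A)\TV^2(\piout^n,\pi^n_\expert)(x)$ is, up to one boundary term, at most four times the sum of per-sample conditional Hellinger terms that Lemma~\ref{eq:martingale_mispecified_bc} bounds directly. That lemma's filtration is per sample; you only need to enlarge it to contain the exploratory actions and stopping coins. No length cap enters. Taking $\eta=(1-\gamma)\log\frac{K}{1-\gamma}$ makes the misspecification term $O(\log)$ while changing only logarithms in the covering bound. The inner square root is then $\widetilde{\mathcal{O}}(dB^2\log)$, and your arithmetic yields $(1-\gamma)^{-6.5}$, as in the paper, whose concentration display likewise carries the trajectory length only inside a logarithm. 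You may keep your split-off of $(1-\gamma)\initial$, which is fine and even avoids $\innerprod{\phi}{\mathbf{1}}=1$. In that case you must also control the last transition of each episode, whose successor is absent from the data, via a per-transition application of Lemma~\ref{lemma:fast_concentration}. Given the current pair, stopping is a Bernoulli$(1-\gamma)$ event, so these terms total at most $2(1-\gamma)$ times the full sum plus $O(d\log)$.
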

\begin{proof}
The result follows again by Lemma~\ref{lemma:change_of_measure_infinite} and the change of measure argument in Lemma~\ref{lemma:change_of_measure} that extends naturally to the infinite horizon.
Indeed, we have that the state occupancy measure in the discounted setting can be written as the following affine function in the features space
\begin{align*}
\nu^{\pi^n_{\star},\pi^{-n}_{\expert}}(x) &= \innerprod{\phi^{\pi^n_{\star},\pi^{-n}_{\expert}}}{ \underbrace{\gamma M_{\pi^{-n}_{\expert}}(x) + (1-\gamma)\initial(x) \mathbf{1}}_{:= M^\gamma_{\pi^{-n}_{\expert}}}},
\end{align*}
therefore using again Lemma~\ref{lemma:fast_concentration} and Lemma~\ref{lemma:covering}, it holds that
\begin{align*} &\mathbb{E}_{X \sim \nu^{\pi^n_{\star}, \pi^{-n}_{\expert}}}\bs{W(X)} \leq  \frac{1}{(1-\gamma)^2} \max_{n'\in [N]} \max_{\pi^{n'}\in \Pi^{n'}} \norm{\phi^{\pi^{n'},\pi^{-n'}_{\expert}} }_{(\Lambda^{n',K})^{-1}}  \\&\cdot\sum^N_{n'=1} \sqrt{4\sum^K_{k=1}\sum_{x\in \X}   \nu^{\pi^{n',k},\pi^{-n'}_{\expert}}(x) \mathrm{TV}^2(\piout^{n},\pi^{n}_{\expert})(x) + 8 B^2 + 4 d \log(A_{\max} B_\theta K \log(HK)\delta^{-1})}.
\end{align*}
The proof follows from steps analogous to Lemma~\ref{lemma:change_of_measure} with $M^\gamma_{\pi^{-n}_{\expert}}$ replacing $M_{\pi^{-n}_{\expert}}$.
Then, summing over $n$ over the set $\bcc{1, \dots, N}$, upper-bounding the squared total variation  via the squared Hellinger divergence and applying our MLE concentration for martingale distributed data under misspecification Lemma~\ref{eq:martingale_mispecified_bc}, we obtain that with probability $1-\delta$, it holds that
\begin{align*}
\sum^N_{n=1}\max_{\pi^n_{\star}\in\Pi^n} \innerprod{\initial}{V_n^{\pi^n_{\star},\piout^{-n}} - V_n^{\piout}}
&\leq \frac{N^2}{1-\gamma} \frac{\max_{n'\in [N], \pi^{n'}\in \Pi^{n'}} \norm{\phi^{\pi^{n'},\pi^{-n'}_{\expert}} }_{(\Lambda^{n',K})^{-1}}}{1-\gamma} \sqrt{32 d B^2 \log\brr{\frac{ A_{\max} B_\theta K \log(K H)}{\delta}}} \\&\leq 
\frac{N^2}{1-\gamma}  \mathcal{O}\brr{\sqrt{\frac{d^3 B^2   \log (A_{\max} \delta^{-1})}{(1-\gamma)^{9/2} K}}} \sqrt{32 d B^2 \log( A_{\max} B_\theta K \log(K H)\delta^{-1})} \\&\leq
\mathcal{O}\brr{\sqrt{\frac{N^4 d^4  B^4   \log (A_{\max} B_\theta \delta^{-1})}{(1-\gamma)^{6.5}K}}}. 
\end{align*}
\end{proof}
\section{Omitted pseudocode for Deep multi-agent Imitation Learning}
\label{appendix:deep}
In this section, we provide the complete pseudo-code for the deep MAIL algorithm. In particular, we consider the following algorithm \ref{alg:interactive_deep}. For simplicity we present the simpler two player case.

\begin{algorithm}
\caption{DQN-EXPLORE-BC for $2$-players game} \label{alg:interactive_deep}
    \begin{algorithmic}
    \STATE \textbf{Require} Number of warm up iterations $K$.
    \STATE \textcolor{orange}{\% Exploration Phase}
    \STATE \textcolor{blue}{\% Loop over the players}
    \FOR{$n \in [1,2]$}
    \STATE Initialize the DQN Network $Q_n(\cdot) = \mathrm{LastLayer}_n(\mathrm{InitialLayers}_n(\cdot))$ and the corresponding target network.
    \STATE Create a copy of the initial layers for the reward function
    $\mathrm{OldInitialLayers}_n(\cdot)) = \mathrm{InitialLayers}_n(\cdot))$
    \STATE Replay Buffer $\mathcal{D}^n$ with fixed capacity S
    \STATE Initialize $\Lambda = \lambda I$
        \FOR{$k \in [K]$}
        \STATE \textcolor{blue}{\% Update the exploratory reward function}
        \STATE Set the reward function $r_{\mathrm{expl}}(x,a) = \norm{\mathrm{OldInitialLayers}_n(x,a)}_{\Lambda^{-1}} $
        \STATE $\Lambda_{\mathrm{new}} = \Lambda$
        \WHILE{$\mathrm{logdet}(\Lambda_{\mathrm{new}}) \leq  \mathrm{logdet}(\Lambda) + \mathrm{log(2)} $}
          \STATE Collect a trajectory $\tau$ rolling out $\pi^n$ (and the opponent playing according to a Nash) and add it to $\mathcal{D}^n$.
          \STATE Using $\mathcal{D}^n$ and use DQN to update $\mathrm{LastLayer}_n$ and $\mathrm{InitialLayers}_n$ by solving approximately
          \[\argmin_{\mathrm{Layers} } \sum_{x,a, x' \in \mathcal{D}_n} \br{ r_{\mathrm{expl}}(x,a) + \gamma \max_{a'\in\A} \mathrm{OldLayers}(x',a') - \mathrm{Layers}(x,a) }^2
          \]

          \STATE Set $Q_n(x,a) = \mathrm{LastLayer}_n(\mathrm{OldInitialLayers}_n(x,a))$
          \STATE $\pi^n(\cdot|x) = \mathrm{softmax}(Q_n(x, \cdot))$ for all $x\in \X$.
          \STATE $\Lambda_{\mathrm{new}} = \Lambda_{\mathrm{new}} + \sum_{x,a \in \mathcal{D}_n} \mathrm{OldInitialLayers}_n(x,a) \mathrm{OldInitialLayers}_n(x,a)^T$
        \ENDWHILE
        \STATE \textcolor{blue}{\% Update the initial layers that will be used to update the exploratory reward function}
        \STATE $\mathrm{OldInitialLayers} \gets \mathrm{InitialLayers}  $
        \STATE Compute $\Lambda = \sum_{x,a \in \mathcal{D}_n} \mathrm{OldInitialLayers}(x,a) \mathrm{OldInitialLayers}(x,a)^\top + \lambda I$.
        \ENDFOR
        \ENDFOR
        \STATE \textcolor{orange}{\% Imitation Phase}
        \STATE Initialize BC networks.
        \STATE Output $\pi^1_{\mathrm{out}} = \bc\brr{\mathcal{D}^2}$ and  $\pi^2_{\mathrm{out}} = \bc\brr{\mathcal{D}^1}$
    \end{algorithmic}
\end{algorithm}

\section{Experiments}
\label{app:experiments}

\subsection{Linear experiments}
We next describe the setup of the considered zero-sum Gridworld environment. The environment is the same as the one used in \citet{freihaut2025rate}. In particular, the state space is given by the joint positions of the two agents on a $3\times 3$ grid. The agents can not take the same position at the same time. Formally,
\[
\gS = \{((i,j),(k,l)) \mid (i,j) \neq (k,l), \, i,j,k,l \in \{0,1,\ldots,2\}\},
\]
which yields $72$ states in total. The action space is identical for both agents and defined as $\gA^1 = \gA^2=\{\text{left}, \text{right}, \text{up}, \text{down}\}$. The transition dynamics are deterministic. If an action leads an agent to run in a wall or the other agent, the position remains unchanged and otherwise the action transfers the agent to the respective neighboring cell.

The starting positions that can be considered are conditioned on the fact that both agents should have the same distance to the goal, which makes the game "fair" and the resulting Nash value is $0$. In particular, we fix the deterministic starting state $((1, 0), (2, 1))$, from which both players require exactly five steps to reach the goal. Note that multiple Nash equilibria exist. All of the existing Nash equilibria have in common that no path lets the other agent reach the goal first. For Figure \ref{fig:linear_experiments} (a) we take a convex combination of Nash equilibrium strategies. To obtain different Nash equilibrium strategies, we run different initializations of zero-sum value iteration \cite{pmlr-v37-perolat15}. Note that the set of Nash equilibria is convex in zero-sum games, the resulting strategy is again a Nash equilibrium. For \ref{fig:linear_experiments} (b) we instead used a deterministic Nash equilibrium strategy as the performance of BC remained unchanged. 

\paragraph{Linear feature maps.}
We consider three different linear feature maps. In particular, to compare against the tabular version considered in \citet{freihaut2025rate}, we use one hot encoded features for states and actions as described in Example \ref{ex:tabular}, therefore we have features of dimension $d = 72 \times 4 = 288.$ Additionally, we introduce a \emph{relational feature map} of dimension $d=80.$ These features are designed to generalize across states by capturing relative geometric relationships rather than absolute positions. Specifically, for any state-action pair $(x,a)$, we construct a sparse binary vector $\phi(x,a)$ based on the following relational concepts:
\begin{itemize}
    \item Relative Direction (16 dims): The direction from the agent to the opponent (8 dims) and to the goal (8 dims), discretized into 8 sectors (N, NE, E, SE, S, SW, W, NW).
    \item Proximity (2 dims): Indicators for whether the agent is adjacent (within one step in any direction) to the opponent or the goal.
    \item Positional Status (2 dims): Indicators for whether the agent is currently located at the goal or in a corner of the grid.
\end{itemize}
These $20$ binary concepts are computed for the current state and then embedded into the specific action slot, resulting in a total dimension of $d = 20 \times 4 = 80$. Finally, we evaluate a deep Linear Policy as the BC policy. This model is parameterized as a multi-layer feedforward network designed to improve representation learning while maintaining linear transformations. The architecture consists of a sequence of linear layers with decreasing hidden widths. Specifically, it maps the flattened state ($9$) to a hidden dimension of $12$, followed by a Layer Normalization step. This is succeeded by a projection to a second hidden layer of dimension $6$ (halved width), another Layer Normalization, and a final linear transformation to the $4$ action logits. This results in a compact model with a total of $262$ parameters.

\paragraph{Parameters.}
To run the experiments, we run them for three seeds, $[42, 123, 456, 789] $ and for the dataset sizes $10, 50, 100, 200, 500$, with $1000$ BC epochs.

\subsection{Deep experiments.}

\paragraph{Environments.}
We utilize the PettingZoo \citep{terry2021pettingzoogymmultiagentreinforcement} implementations of Tic-Tac-Toe and Connect4. In both games, the observation state is represented as a stack of two binary planes ($C=2$), where the first plane indicates the positions of Player 1's stones and the second plane indicates those of Player 2. Consequently, the input dimensions are $2 \times 3 \times 3$ for Tic-Tac-Toe and $2 \times 6 \times 7$ for Connect4. The action spaces are discrete, consisting of $9$ actions for Tic-Tac-Toe (corresponding to the grid cells) and $7$ actions for Connect4 (corresponding to the columns). Both environments are treated as zero-sum games with sparse rewards: agents receive a reward of $+1$ for winning, $-1$ for losing, and $0$ for a draw or intermediate steps. The horizon of Tic-Tac-Toe is $H=9$ and $H=42$ for Connect4.

\paragraph{Experts.}
To derive the Nash equilibrium expert policies in Tic-Tac-Toe, we implement a computationally optimized minimax algorithm that exploits the symmetries of the Tic-Tac-Toe board. While the full state space consists of $3^9 = 19683$ configurations, the game is invariant under the dihedral group $D_4$ of the square, which includes four rotations and four reflections. We define a canonicalization mapping $\Phi: \gX\to \gX_{\text{canonical}}$ such that $\Phi(x) = \min_{g \in G} g(x)$, where $G$ is the group of valid transformations. By solving only for these canonical representatives, we reduce the decision space to exactly $765$ unique legal states for the expert policy, making the storage as an array possible. We employ a recursive minimax search with memorization to compute the optimal policy $\pi_{\expert}$. Additionally to ensure the fastest victory for player 1 we weight the strategies by the length.

We facilitate optimal play in Connect4 by integrating BitBully, a high-performance solver that provides Nash equilibrium strategies. Implemented in C++ to leverage highly optimized bitboard operations and advanced tree-search pruning techniques, the library exposes efficient Python bindings. This allows us to query the optimal policy directly within our algorithm with minimal computational overhead.

\paragraph{Parameters and Network architectures.}
As the number of warm up iterations, we choose $K=30000$ for Connect4 and $K=1000.$ As the regularization parameter $\lambda$ that ensures that the matrix $\Lambda$ is invertible we choose $\lambda=1.5$ for both experiments as this makes the inverse operation scalable. We repeat the experiments for three seeds: $[42, 123, 1].$

For the exploration phase of \texttt{DQN-Explore} (Algorithm \ref{alg:interactive_deep}), we employ a Convolutional Neural Network architecture designed for state-action value estimation. The network structure is consistent across environments, with the exception of the fully connected hidden dimension size, which is set to $128$ for Connect4 and $64$ for Tic-Tac-Toe to account for the differing state space complexities. The fixed capacity of the replay buffer was set to $S=10000$ for Connect4 and $S=1000$ for Tic-Tac-Toe.

The architecture consists of three convolutional layers with $3 \times 3$ kernels, stride $1$, and padding $1$, ensuring the spatial dimensions ($H \times W$) remain preserved throughout the feature extraction. The channel depth increases sequentially from the input to $32$, $64$, and finally $128$ channels. Consequently, the extracted state features are flattened into a vector of size $H \times W \times 128$.

For Q-value estimation, the flattened state features are first projected via a linear layer. This latent state representation is then concatenated with the action vector before being passed through two final fully connected layers. All hidden layers utilize ReLU activations. Optimization is performed using Adam with a learning rate of $\alpha = 0.01$. As a loss function we apply the Huber loss which is less sensitive to outliers \cite{JMLR:v22:20-1364}. When calling the network to receive the Q-values, we apply a clipping to avoid an overflow in the resulting softmax distribution common for DQN implementations \cite{JMLR:v22:20-1364}.

For the Behavioral cloning part of the Tic-Tac-Toe environments, we are using a small CNN network. In particular, we have the following network architecture: The model consists of three convolutional layers with $3 \times 3$ kernels, stride 1, and padding 1, with channel dimensions scaling as $2 \to 64 \to 128 \to 128$. Each convolutional layer is followed by a ReLU activation. The resulting feature map is flattened and passed to a fully connected layer with 256 units, followed by a ReLU activation and a Dropout layer ($p=0.2$). The final linear layer projects the features to the 9 action logits. The learning rate is set to $0.01,$ the loss function is a a Crossentropy function and we use an Adam optimizer. The batch size is set to $64$ and the epochs to $100.$

In Connect4, the state space is significantly more complex than Tic-Tac-Toe, comprising over $4 \times 10^{12}$ feasible board positions. To address this, we employ an advanced neural architecture inspired by AlphaGo \citep{silver2017masteringchessshogiselfplay}. Specifically, we utilize a Residual Network (ResNet) adapted for the $6 \times 7$ board dimensions. The network begins with an initial convolutional block ($128$ filters, $3 \times 3$ kernel), followed by a backbone of $5$ Residual Blocks. Each residual block consists of two $3 \times 3$ convolutional layers with Batch Normalization and ReLU activations, augmented by Squeeze-and-Excitation (SE) \citep{hu2019squeezeandexcitationnetworks} modules to capture global channel-wise dependencies. The output is processed by a policy head which first reduces dimensionality via a $1 \times 1$ convolution (32 filters), followed by a fully connected hidden layer ($512$ units) that projects to the $7$ action logits. 

For training, we dynamically adjust the number of epochs based on the dataset size: $20$ epochs for datasets with fewer than $50,000$ trajectories; $5$ epochs for datasets between $50,000$ and $500,000$ trajectories; and $2$ epochs for larger datasets. We use the Adam optimizer with a learning rate of $0.05$ and a batch size of $4096$, minimizing the Cross-Entropy loss.

Overall, note that the hyperparameters and network architectures could potentially be fine-tuned to increase the performance of the algorithms. 

\paragraph{Opponents in Connect4.}
In this section, we detail the opponent configurations used to evaluate the robustness of the learned policy in Connect4, as summarized in Table \ref{table:win_rates}.

Unlike the Grid-world experiments, computing an exact best response for the learned strategy is intractable due to the prohibitively large state space ($\approx 4 \times 10^{12}$ states). Furthermore, simply evaluating against the solver directly is insufficient to proxy true exploitability. As highlighted by \citet{freihaut2025rate}, a policy trained via Behavior Cloning (BC) may perfectly memorize expert trajectories but remain fragile to distribution shifts. Instead the learned policy is potentially exploitable by first forcing it outside distribution, leading to potential mistakes and then playing the winning moves.

To address this, we construct an Approximate Best Response opponent designed to actively exploit the learner's generalization errors. This opponent leverages the action-value scores provided by the solver to switch between two modes. When the solver indicates the current state is a theoretical loss for the opponent (negative score), standard optimal play would simply prolong the game. Instead, our opponent randomizes over sub-optimal actions (excluding those that allow an immediate loss-in-one) to force the learner into unvisited states. The moment the learner commits an error making the solver scores positive, the opponent switches to deterministic perfect play to secure the win.

Additionally, to evaluate the policy against a spectrum of competence levels, we implement softmax opponents. These agents sample actions stochastically based on the solver's scores using a Boltzmann distribution. We test across temperature values $\{1, 2, 3, 4, 5\}$, where higher temperatures introduce greater stochasticity, simulating increasingly noisy and therefore potentially less optimal agents. To get a feeling of the optimality, we approximate the entropy of the opponent policies and report them in Table\ref{table:entropy}. Note that an agent has $7$ actions and therefore a uniform policy would have an entropy of roughly $1.95.$

\begin{table}[h!]
  \caption{Estimated entropy of different opponents.}
  \label{table:entropy}
  \begin{center}
    \begin{small}
      \begin{sc}
        \begin{tabular}{lc}
          \toprule
          Opponent & Entropy estimation \\
          \midrule
          Approx. BR & - \\
          Noise Lvl 1        & $\mathbf{0.96 \pm 0.015}$  \\
          Noise Lvl 2        & $\mathbf{1.25 \pm 0.01}$ \\
          Noise Lvl 3        & $\mathbf{1.45\pm 0.01}$  \\
          Noise Lvl 4        & $\mathbf{1.59 \pm 0.01}$ \\
          Noise Lvl 5        & $\mathbf{1.69 \pm 0.01}$ \\
          \bottomrule
        \end{tabular}
      \end{sc}
    \end{small}
  \end{center}
  \vskip -0.1in
\end{table}

\paragraph{Notes on computational efficiency.}
In our proposed Algorithm \ref{alg:interactive_deep}, we calculate the inverse of the matrix $\Lambda$, which is derived from the frozen initial layers of the neural network. Crucially, this inversion is performed only within the outer loop, avoiding the frequent re-computation required by the inner while loop. Although matrix inversion is generally computationally expensive, the cost in our framework depends solely on the hidden dimension of the DQN network and does not scale directly with the number of samples or the size of the state space. Empirically, we observed that this operation was not a bottleneck. We investigated replacing the direct inversion with the Sherman-Morrison formula \cite{doi:10.1137/1031049} to leverage the iterative updates. However, this yielded negligible performance gains. Consequently, we employ direct matrix inversion. For architectures with significantly larger hidden dimensions, the trade-offs between the stability of direct inversion and the efficiency of iterative schemes should be re-evaluated.

\paragraph{Computational Resources.}
All numerical experiments and deep learning training phases were conducted on a compute node equipped with an NVIDIA A100 GPU (80GB VRAM) and an AMD chipset.

\subsection{Additional experiments}
In this section, we provide additional experimental providing further insights into our methods. To start, we investigate if the exploration routine implemented with Explore-DQN can be replaced by simpler heuristics such as selecting an action uniformly at random in each state.
\subsubsection{Can the exploration phase be simplified?}
To address this question, we compare the Explore-DQN-based exploration phase, which maximizes the exploratory reward in Algorithm~\ref{alg:interactive_deep}, with a simpler heuristic that selects one action uniformly at random at each state.
These two strategies differ in the sense that the uniform exploration strategy is memoryless: it plays all actions with equal probability without taking into account the information collected in previous rounds. On the contrary, the exploratory reward of actions for which a relevant amount of information has been previously collected decreases. This makes other actions more likely to be chosen under the DQN-Explore mechanism.

Intuitively, we expect significant differences to arise only in a hard environment where important states can be reached only when choosing one particular sequence of actions with length $H$. In this case, it is easy to see that the uniform exploration strategy requires in expectation $\abs{\A}^H$ episodes before playing the informative sequence of actions. On the contrary, we expect the uniform heuristic to perform well on simpler, shorter-horizon tasks.

In order to confirm this insight, we compare the two exploration strategies in two different environments (Gridworld and Chain Markov Game) and report the results in Figure~\ref{fig:comp_exps}. 

\paragraph{The Gridworld experiment.} The Gridworld environment is based on a small 5x5 grid. Therefore, the horizon is short and it is easy for both exploration algorithms to collect a dataset that covers well any possible best response strategy for the opponent. This is well shown in Figure~\ref{fig:comp_exps} (a) which shows that both algorithms converge quickly to zero exploitability. We can notice that in this setting the uniform exploration strategy is even slightly better than DQN-Explore. Therefore, in this case, we can conclude that simple heuristic strategies can be a valid practical replacement for the theory-based strategy implemented by DQN-Explore.

\paragraph{The chain experiment} The same conclusion does not carry over to the experiment in the chain environment reported in Figure~\ref{fig:comp_exps} (b). This environment is inspired by the lower bound construction in \cite{freihaut2025rate} with the change that in order to reach the state in which a best responding agent can exploit the algorithms a particular sequence of length $8$ should be chosen. The action space has size $2$, so the expected number of trials that the uniform exploration needs to reach such state is $2^7 = 128$. The results in Figure~\ref{fig:comp_exps} (b) confirm that this is a hard task for the uniform exploration strategy, while they show that DQN-Explore manages to successfully cover the best response. This implies that the policy trained via BC over the dataset collected via DQN-Explore successfully reaches $0$ exploitability.

\begin{figure}
    \centering
    \includegraphics[width=\linewidth]{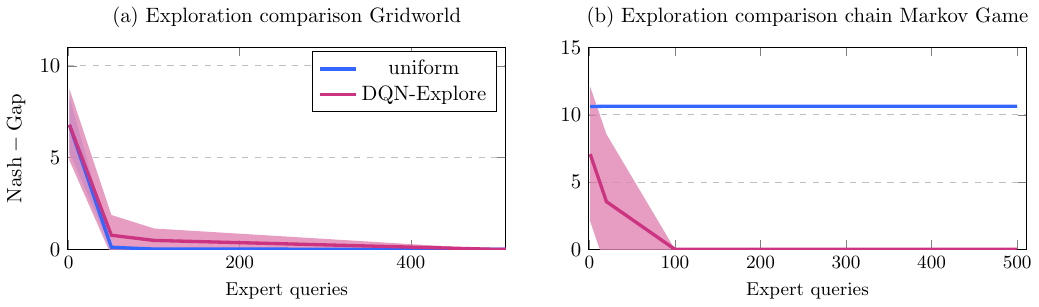}
    \caption{Comparison of exploration algorithms}
    \label{fig:comp_exps}
\end{figure}

\section{Technical Lemmas}
\label{app:technical}
\begin{lemma}\label{lemma:fast_concentration}
Let $\bcc{X_k}^K_{k=1}$ be a sequence of positive random variables such that $\bcc{X_k}^{k-1}_{s=1}$ are $\mathcal{F}_k$-measurable and that $\forall k, X_k \leq 1$ almost surely, then it holds that,
\[
\mathbb{P}\bs{\forall k \in [K] ~~~\sum^k_{s=1} X_s \leq 2 \sum^k_{s=1} \mathbb{E}\bs{X_s|\mathcal{F}_s} + \log\brr{\frac{1}{\delta}} } \geq 1-\delta.
\]
\end{lemma}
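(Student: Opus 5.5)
The plan is the standard exponential-supermartingale (Freedman/Bernstein-type) argument with the exponent parameter fixed to $1$, followed by Ville's maximal inequality to get the bound simultaneously for all $k \in [K]$.

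\textbf{Conventions.} I read the measurability assumption in the usual way: $\mathcal{F}_s$ contains the information strictly before round $s$ (as with $\mathcal{D}_k$ in the proof of Lemma~\ref{lemma:bound_error}), and $X_s$ is $\mathcal{F}_{s+1}$-measurable. Write $m_s := \mathbb{E}\bs{X_s|\mathcal{F}_s} \in [0,1]$, and set
\[
M_k := \exp\brr{\sum^k_{s=1} X_s - 2\sum^k_{s=1} m_s}, \qquad M_0 := 1 .
\]

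\textbf{Step 1: a one-step moment bound.} For $x \in [0,1]$, convexity of $t \mapsto e^t$ on $[0,1]$ gives $e^{x} \le 1 + (e-1)x$. Applying this to $X_s \in [0,1]$, taking conditional expectations, and using $1+u \le e^u$ yields
\[
\mathbb{E}\bs{e^{X_s}|\mathcal{F}_s} \le 1 + (e-1) m_s \le \exp\brr{(e-1) m_s} \le \exp\brr{2 m_s},
\]
where the last step uses $e - 1 \le 2$ and $m_s \ge 0$.

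\textbf{Step 2: supermartingale property.} Since $m_k$ is $\mathcal{F}_k$-measurable and $M_{k-1}$ is $\mathcal{F}_k$-measurable, Step 1 gives
\[
\mathbb{E}\bs{M_k|\mathcal{F}_k} = M_{k-1}\, e^{-2m_k}\, \mathbb{E}\bs{e^{X_k}|\mathcal{F}_k} \le M_{k-1}.
\]
Hence $(M_k)_{k\ge 0}$ is a nonnegative supermartingale with respect to the shifted filtration $(\mathcal{F}_{k+1})_k$, and $\mathbb{E}[M_0] = 1$.

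\textbf{Step 3: uniform-in-time bound via Ville.} Ville's maximal inequality, or equivalently Doob's inequality for nonnegative supermartingales (applied over the finite horizon $K$ via the stopping time at the first violation), gives
\[
\mathbb{P}\bs{\max_{k\le K} M_k \ge 1/\delta} \le \delta .
\]
On the complementary event, taking logarithms gives $\sum_{s\le k} X_s \le 2\sum_{s\le k} m_s + \log(1/\delta)$ for every $k \in [K]$ simultaneously. This is exactly the claim.

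\textbf{Main obstacle.} The only delicate point is the filtration bookkeeping: $m_s$ must be predictable, meaning measurable before $X_s$ is revealed. The stated hypothesis ("$\{X_s\}_{s=1}^{k-1}$ are $\mathcal{F}_k$-measurable") is exactly what makes both $M_{k-1}$ and $m_k$ measurable with respect to $\mathcal{F}_k$, which is needed in Step 2. Everything else is routine. Note that the constant $2$ only needs to dominate $e - 1$, which is why no $\lambda$-tuning is needed and the additive term is just $\log(1/\delta)$.
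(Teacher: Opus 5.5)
Your proof is correct and follows the paper's argument: the same exponential supermartingale $\exp\brr{\sum_{s\le k} X_s - 2\sum_{s\le k}\mathbb{E}\bs{X_s|\mathcal{F}_s}}$, followed by Ville's inequality to make the bound hold uniformly over $k\in[K]$. The only differences are minor. For the one-step bound you use the convexity chord $e^x\le 1+(e-1)x$ on $[0,1]$ together with $1+u\le e^u$, whereas the paper applies $e^x\le 1+x+x^2$ (valid for $x\le 1$) to the centered exponent and then uses $X_k^2\le X_k$. Your normalization $M_0=1$ is also slightly cleaner than the paper's claim $\mathbb{E}[Y_1]=1$, which in general holds only as an inequality.
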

\begin{proof}
Let us consider the sequence $\bcc{Y_k}^K_{k=1}$ such that $Y_k = \exp\br{\sum^k_{s=1} X_s - 2 \mathbb{E}\bs{X_s| \mathcal{F}_s}}$. We will now show that the sequence $Y_k$ is a supermartingale, that is $\mathbb{E}\bs{Y_k | \mathcal{F}_k} \leq Y_{k-1}$.
Towards this end, notice that
\[
Y_k = Y_{k-1} \exp\brr{X_k - 2 \mathbb{E}\bs{X_k| \mathcal{F}_k}}.
\]
Now, since $\bcc{X_k}^{k-1}_{s=1}$ are $\mathcal{F}_k$ measurable it follows that $Y_{k-1}$ is $\mathcal{F}_k$ measurable, therefore
\[
\mathbb{E}\bs{Y_k|\mathcal{F}_k} = Y_{k-1} \mathbb{E}\bs{\exp\brr{X_k - 2 \mathbb{E}\bs{X_k| \mathcal{F}_k}} |\mathcal{F}_k}.
\]
At this point, we need to establish that $\mathbb{E}\bs{\exp\brr{X_k - 2 \mathbb{E}\bs{X_k| \mathcal{F}_k}} |\mathcal{F}_k}\leq 1$.
\begin{align*}
    \mathbb{E}\bs{\exp\brr{X_k - 2 \mathbb{E}\bs{X_k| \mathcal{F}_k}} |\mathcal{F}_k}&\leq \mathbb{E}\bs{ 1 + X_k - 2 \mathbb{E}\bs{X_k| \mathcal{F}_k} + (X_k - 2 \mathbb{E}\bs{X_k| \mathcal{F}_k})^2 | \mathcal{F}_k} \\
    &= 1 - \mathbb{E}\bs{X_k| \mathcal{F}_k} + \mathbb{E}\bs{X^2_k| \mathcal{F}_k} - 4 \mathbb{E}\bs{X_k| \mathcal{F}_k}^2 + 4 \mathbb{E}\bs{X_k| \mathcal{F}_k}^2 \\
    &=1 - \mathbb{E}\bs{X_k| \mathcal{F}_k} + \mathbb{E}\bs{X^2_k| \mathcal{F}_k}
    \\&\leq 1.
\end{align*}
 The first and the last inequality hold because $X_k \leq 1$. In particular, for the first one we have that for all $x\leq 1$, $\exp(x) \leq 1+x+x^2$ and in the last one $x^2 \leq x$ for all $x \in [0,1]$. All in all, this implies that $
 \mathbb{E}\bs{Y_k | \mathcal{F}_k} \leq Y_{k-1} $ and therefore that $\bcc{Y_k}^K_{k=1}$ is a supermartingale.
 At this point, using the Ville's inequality we obtain that
 \[
 \mathbb{P}\bs{\exists k ~~\text{s.t.} ~~Y_k \geq \frac{\mathbb{E}[Y_1]}{\delta}} \leq \delta.
 \]
 Therefore, using that $\mathbb{E}[Y_1]=1$ and the definition of $Y_k$
 \[
 \mathbb{P}\bs{\exists k ~~\text{s.t.} ~~ \exp\br{\sum^k_{s=1} X_s - 2 \mathbb{E}\bs{X_s| \mathcal{F}_s}} \geq \frac{1}{\delta}} \leq \delta.
 \]
 Finally, rearranging yields
  \[
 \mathbb{P}\bs{\forall k ~~ \sum^k_{s=1} X_s \leq 2 \sum^k_{s=1} \mathbb{E}\bs{X_s| \mathcal{F}_s} + \log \frac{1}{\delta}} \geq 1 - \delta.
 \]
\end{proof}
\begin{lemma} \label{lemma:covering} \textbf{Covering number}
It holds that
\[
\mathcal{C}_\epsilon(\mathcal{W}_h) \leq \brr{1 + \frac{4 \eta A_{\max} B_{\theta}}{\epsilon}}^d.
\]
\end{lemma}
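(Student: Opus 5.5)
The plan is to reduce the covering of $\mathcal{W}_h$ to a covering of the parameter ball $\Theta=\{\theta\in\mathbb{R}^d:\|\theta\|\le B_\theta\}$, using that every $W\in\mathcal{W}_h$ has the form $W_\theta(x)=\TV(\pi_\theta,\pi^n_{\expert,h})(x)$ with $\pi_\theta(a|x)\propto \exp(\eta\phi(x,a)^\top\theta)$. Concretely, I would show that the map $\theta\mapsto W_\theta$ is Lipschitz from $(\Theta,\|\cdot\|_2)$ into $(\mathcal{W}_h,\|\cdot\|_\infty)$ with constant $L=2\eta A_{\max}$ (or something of that order). Then an $\epsilon/L$-net of $\Theta$ in $\ell_2$ maps to an $\epsilon$-net of $\mathcal{W}_h$ in sup norm. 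The standard volumetric bound gives a net of $\Theta$ of size at most $(1+2B_\theta L/\epsilon)^d$, which equals $(1+4\eta A_{\max}B_\theta/\epsilon)^d$ when $L=2\eta A_{\max}$.

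For the Lipschitz step, I fix $x$ and use the reverse triangle inequality for the $\ell_1$ total variation (the paper's $\TV$ is $\sum_a|\cdot|$):
\[
|W_\theta(x)-W_{\theta'}(x)|\le \sum_{a}|\pi_\theta(a|x)-\pi_{\theta'}(a|x)|.
\]
This removes the expert policy entirely, so the problem becomes Lipschitzness of the softmax in $\theta$. Writing $z_a=\eta\phi(x,a)^\top\theta$, the logits move by at most $|z_a-z'_a|\le \eta\|\phi(x,a)\|\|\theta-\theta'\|\le \eta\|\theta-\theta'\|$, using $\|\phi\|\le1$. The softmax map is $1$-Lipschitz from $\ell_\infty$ logits to $\ell_1$ up to a constant. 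A crude bound I would use is via the mean value theorem: the Jacobian entries $\partial\pi_a/\partial z_b=\pi_a(\mathbb{1}_{a=b}-\pi_b)$ have absolute row sums at most $2\pi_a$, so $\sum_a|\Delta\pi_a|\le 2\max_b|\Delta z_b|$. Alternatively, summing crudely over $A_{\max}$ actions gives the $A_{\max}$ factor appearing in the statement. Either way, the result is $\|W_\theta-W_{\theta'}\|_\infty\le 2\eta A_{\max}\|\theta-\theta'\|$ (the sharper bound also holds and only helps).

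Finally, I take the $\epsilon/(2\eta A_{\max})$-net $\mathcal{N}$ of $\Theta$ and define $\mathcal{C}_\epsilon(\mathcal{W}_h)=\{W_{\tilde\theta}:\tilde\theta\in\mathcal{N}\}$. The volumetric estimate $|\mathcal{N}|\le(1+2B_\theta/(\epsilon/(2\eta A_{\max})))^d$ then yields the claim. If the policy class is taken per stage $h$, only the single $\theta_{n,h}$ matters, so the exponent stays $d$.

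The only mildly delicate step is the softmax Lipschitz constant together with the bookkeeping of constants so that the final form matches $4\eta A_{\max}B_\theta/\epsilon$. I expect no real obstacle there, since any constant of order $\eta A_{\max}$ suffices.
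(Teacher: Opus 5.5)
Your proposal is correct and follows the paper's route. The reverse triangle inequality $|\mathrm{TV}(\pi,\pi^n_{\expert,h})(x)-\mathrm{TV}(\pi',\pi^n_{\expert,h})(x)|\le \mathrm{TV}(\pi,\pi')(x)$ reduces covering $\mathcal{W}_h$ to covering $\Pi^n_{\mathrm{softlin}}$; the paper then cites an existing covering bound for the softmax-linear class, which you instead prove directly via the softmax Jacobian and a volumetric net of $\Theta$, and your sharper Lipschitz constant $2\eta$ even gives $(1+4\eta B_\theta/\epsilon)^d$, so the stated bound with the extra $A_{\max}$ follows a fortiori.
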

\begin{proof}
Recall that
\[
\mathcal{W}_h := \bcc{ W: \X \rightarrow [0,2] :  W(x) = \mathrm{TV}(\pi,\pi^{n}_{\expert,h})(x) | \pi \in \Pi^n_{\mathrm{softlin}}}
\]
Then, let us fix two arbitrary $\pi, \pi' \in \Pi^n_{\mathrm{softlin}}$ and notice that
\begin{align*}
    \mathrm{TV}(\pi,\pi^{n}_{\expert,h})(x) -  \mathrm{TV}(\pi',\pi^{n}_{\expert,h})(x) \leq \mathrm{TV}(\pi,\pi')(x) 
\end{align*}
At this point, if $\pi' \in \mathcal{C}_\epsilon(\Pi^n_{\mathrm{softlin}})$ it holds that $\mathrm{TV}(\pi,\pi')(x)  \leq \epsilon$ and therefore that $$\mathrm{TV}(\pi,\pi^{n}_{\expert,h})(x) -  \mathrm{TV}(\pi',\pi^{n}_{\expert,h})(x) \leq \epsilon.$$
This implies that $\mathcal{C}_\epsilon(\mathcal{W}_h)\leq \mathcal{C}_\epsilon(\Pi^n_{\mathrm{softlin}})$. At this point, we can invoke \citet[Lemma 6]{moulin2025inverseqlearningrightoffline} which gives 
\begin{equation*}
\abs{\mathcal{C}_{\epsilon}(\Pi_{\mathrm{softlin}})} \leq \brr{1 + \frac{4 \eta A_{\max} B_{\theta}}{\epsilon}}^d.
\end{equation*}
and therefore $\mathcal{C}_\epsilon(\mathcal{W}_h) \leq \brr{1 + \frac{4 \eta A_{\max} B_{\theta}}{\epsilon}}^d$.
\end{proof}
\begin{lemma}\textbf{Covering number of the log policy class.} \label{lemma:covering_log}
It holds that
\[
\abs{\mathcal{C}_{\epsilon}(\log \Pi_{\mathrm{softlin}})} \leq \brr{\frac{2 A_{\max}^2 \exp(\eta H) \eta  B_{\theta}}{\epsilon}}^d.
\]
\end{lemma}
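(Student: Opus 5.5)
To bound the covering number of the log policy class, I would build a cover of the parameter ball $\Theta=\{\theta\in\mathbb{R}^d:\norm{\theta}\le B_\theta\}$ and transfer it to $\log\Pi_{\mathrm{softlin}}$ through a Lipschitz estimate in sup norm. The cover is understood stage by stage: for each $h$, a cover of the functions $(x,a)\mapsto \log\pi_h(a|x)$ in $\norm{\cdot}_\infty$. This matches the use of the lemma in the MLE bounds above, where the $\log$ of the cover enters with an extra factor of $H$ from the product over stages.

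First, I would fix $\theta,\theta'\in\Theta$ and write
\[
\log\pi_\theta(a|x)=\eta\,\phi(x,a)^\top\theta-\zeta_\theta(x),\qquad \zeta_\theta(x)=\log\sum_{a'}\exp\brr{\eta\phi(x,a')^\top\theta}.
\]
The linear term changes by at most $\eta\norm{\phi(x,a)}\norm{\theta-\theta'}\le\eta\norm{\theta-\theta'}$, using $\norm{\phi}\le 1$. For the log-partition term I would use that $\nabla_\theta\zeta_\theta(x)=\eta\,\mathbb{E}_{a'\sim\pi_\theta}[\phi(x,a')]$ has norm at most $\eta$. Then the mean value theorem gives $|\zeta_\theta(x)-\zeta_{\theta'}(x)|\le\eta\norm{\theta-\theta'}$. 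Together,
\[
\norm{\log\pi_\theta-\log\pi_{\theta'}}_\infty\le 2\eta\norm{\theta-\theta'}.
\]

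Second, I would take a standard $\epsilon'$-net of the Euclidean ball of radius $B_\theta$. It has cardinality at most $(1+2B_\theta/\epsilon')^d\le(3B_\theta/\epsilon')^d$ when $\epsilon'\le B_\theta$. Setting $\epsilon'=\epsilon/(2\eta)$ gives an $\epsilon$-cover of $\log\Pi_{\mathrm{softlin},h}$ of size at most $(6\eta B_\theta/\epsilon)^d$. This is already bounded by $\brr{2A_{\max}^2\exp(\eta H)\eta B_\theta/\epsilon}^d$, since $A_{\max}\ge 2$ and $\exp(\eta H)\ge1$ give $2A_{\max}^2e^{\eta H}\ge 8>6$. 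The regime $\epsilon>B_\theta$, or even $\epsilon$ large relative to $\eta B_\theta$, is handled trivially: a single element suffices because $\norm{\log\pi_\theta-\log\pi_{\theta'}}_\infty\le 4\eta B_\theta$, and one can absorb constants.

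The extra factors $A_{\max}^2 e^{\eta H}$ in the stated bound presumably come from covering $\log$-probabilities through the probabilities, using that $\pi\ge \pi_{\min}\ge 1/(A_{\max}e^{\eta H})$. Since the statement is an upper bound, the sharper direct argument suffices. The only point needing care is the gradient bound on $\zeta$, which is routine, so I expect no real obstacle. I would also note that if the intended cover is over the whole non-stationary class, one takes the product over $h$, giving a $dH$ exponent. That is consistent with how the lemma is combined with an extra $H$ in the MLE step.
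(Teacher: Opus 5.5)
Your argument is correct, but it takes a different route from the paper. The paper never works in parameter space for the log class. It takes an $\epsilon\pi_{\min}$-cover of $\Pi_{\mathrm{softlin}}$ itself from an external covering lemma, which already carries a factor $A_{\max}$. It then transfers this cover through $\norm{\log\pi(\cdot|x)-\log\pi'(\cdot|x)}_1\le\pi_{\min}^{-1}\norm{\pi(\cdot|x)-\pi'(\cdot|x)}_1$ with $\pi_{\min}^{-1}=1+(A_{\max}-1)e^{\eta H}$. That is exactly where the factor $A_{\max}^2e^{\eta H}$ comes from, as you guessed. You instead bound the $\theta$-Lipschitz constant of the log-probabilities via the gradient of the log-partition function, and then cover the $\theta$-ball directly.

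Your route has several advantages:
\begin{itemize}
\item It is self-contained.
\item It needs no probability floor. The paper's $\pi_{\min}$ presumes a logit spread of at most $\eta H$, which $\norm{\theta}\le B_\theta$ alone does not guarantee.
\item It gives a bound free of $e^{\eta H}$, so the covering term in the downstream MLE bounds would lose its $\eta H$ factor.
\item It actually certifies the stated constant. The paper's last step is slightly off: $4A_{\max}(1+(A_{\max}-1)e^{\eta H})\ge 2A_{\max}^2e^{\eta H}$ for $A_{\max}\ge 2$, so its chain really yields about $(4A_{\max}^2e^{\eta H}\eta B_\theta/\epsilon)^d$. This is harmless, since the bound enters only logarithmically.
\end{itemize}
The paper's route buys genericity: the $\pi_{\min}^{-1}$ transfer turns any policy cover into a log cover for any class with a probability floor.

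Two small points on your version. First, the paper's log cover, as used in the martingale MLE lemma, is in the metric $\max_x\norm{\log\pi(\cdot|x)-\log\pi'(\cdot|x)}_1$, summed over actions. This is stronger than your sup over $(x,a)$. To handle it, sum over actions and use $\norm{\phi(x,a)-\mathbb{E}_{a'\sim\pi}\phi(x,a')}\le 2(1-\pi(a|x))$. This gives a Lipschitz constant of $2\eta(A_{\max}-1)$ and a cover of size $(1+4\eta(A_{\max}-1)B_\theta/\epsilon)^d$, still within the stated bound. Second, no amount of ``absorbing constants'' covers very large $\epsilon$. Once $\epsilon>2A_{\max}^2e^{\eta H}\eta B_\theta$, the right-hand side is below $1$, so the statement itself fails there, for the paper's proof too. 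This is irrelevant at the value $\epsilon=1/K$ where the lemma is used.
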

\begin{proof}
    Notice that for any pair of policies $\pi, \pi' \in \Pi_{\mathrm{softlin}}$ it holds that
    \[
    \norm{\log \pi(\cdot|x) - \log \pi'(\cdot|x) }_1 \leq \pi^{-1}_{\min} \norm{ \pi(\cdot|x) - \pi'(\cdot|x) }_1
    \]
\end{proof}
Therefore, an $\epsilon \pi_{\min}$ covering number for the policy class implies an $\epsilon$ covering number for the log space of the policy class. Therefore, recalling that $\pi_{\min} = \frac{1}{1 + (A_{\max}-1)\exp(\eta H)}$ we need to compute the $\epsilon/(1 + (A_{\max}-1)\exp(\eta H))$ covering number of the class $\Pi_{\mathrm{softlin}}$ which using \citep[Lemma 6]{moulin2025inverseqlearningrightoffline} can be bounded as follows 
\begin{align*}
\abs{\mathcal{C}_{\epsilon}(\log \Pi_{\mathrm{softlin}})} \leq \abs{\mathcal{C}_{(1 + (A_{\max}-1)\exp(\eta H))^{-1}\epsilon}( \Pi_{\mathrm{softlin}})} &\leq \brr{1 + \frac{4(1 + (A_{\max}-1)\exp(\eta H)) \eta A_{\max} B_{\theta}}{\epsilon}}^d \\
&\leq \brr{\frac{2 A_{\max}^2 \exp(\eta H) \eta  B_{\theta}}{\epsilon}}^d.
\end{align*}

\begin{lemma}
\label{eq:martingale_mispecified_bc}
Let us consider a stochastic process $X_1, \dots X_{K}$ and expert actions $A^E_i \sim \pi_{\expert}(X_i)$ such that $X_i$ is $\mathcal{F}^E_i = \sigma(X_1,A^E_1, \dots, X_{i-1},A^E_{i-1} )$ with $X_i | \mathcal{F}^E_i \sim \nu_i $ , then the maximum likelihood estimator policy 
\[
\pi_{\mathrm{MLE}} = \mathrm{argmax}_{\pi \in \Pi_{\mathrm{softlin}}} \sum^{K}_{i=1} \log \pi(A^E_i | X_i)
\]
satisfies, with probability at least $1-2\delta$,
     \begin{align*}
 \sum^{K}_{i=1}\mathbb{E}\bs{ D^2_{\mathrm{Hel}}(\pi_{\expert}(\cdot|X_i), \pi_{\mathrm{MLE}}(\cdot|X_i)) | \mathcal{F}^E_i } &\leq 2 \epsilon K +  2 \log \frac{\abs{\mathcal{C}_{\epsilon}(\log \Pi_{\mathrm{softlin}})}}{\delta} +   \frac{144 (\log A_{\max} +  \eta H) }{\exp\brr{\eta H}} K \\&\phantom{=}
     + 4( \log A_{\max} +  \eta H) \log \frac{1}{\delta}.
 \end{align*}
\end{lemma}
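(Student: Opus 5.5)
The plan is to prove this martingale, misspecified MLE bound along the lines of the standard log-loss / Hellinger argument (as in \citet{foster2024behavior,rohatgi2025computational}). We then add a comparator step that absorbs the misspecification coming from the fact that $\pi_{\expert}$ is only a limit point of $\Pi_{\mathrm{softlin}}$.

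\textbf{Step 1 (realizable core via a supermartingale).} Fix a finite $\epsilon$-cover $\mathcal{C}_\epsilon(\log\Pi_{\mathrm{softlin}})$ in the sup-norm over $\log\pi$. For each $\pi$ in the cover and any fixed comparator $\bar\pi$, define the increments
\[
Z_i=\tfrac12\log\frac{\pi(A^E_i|X_i)}{\bar\pi(A^E_i|X_i)}.
\]
Since $X_i$ is drawn from $\nu_i$ given $\mathcal{F}^E_i$ and $A^E_i\sim\pi_{\expert}(X_i)$, we have
\[
\mathbb{E}[e^{Z_i}\mid\mathcal{F}^E_i,X_i]=\sum_a \pi_{\expert}(a|X_i)\sqrt{\pi(a|X_i)/\bar\pi(a|X_i)}.
\]
In the realizable case $\bar\pi=\pi_{\expert}$, this equals $1-\tfrac12 D^2_{\mathrm{Hel}}(\pi_{\expert},\pi)(X_i)$. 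It follows that $\exp\bigl(\sum_i Z_i+\tfrac12\sum_i\mathbb{E}[D^2_{\mathrm{Hel}}\mid\mathcal{F}^E_i]\bigr)$ is a supermartingale, using $\log(1-x)\le -x$ and the tower property over $X_i$. Ville's inequality (as in Lemma~\ref{lemma:fast_concentration}) together with a union bound over the cover then gives, simultaneously for all cover elements, the bound $\sum_i\mathbb{E}[D^2_{\mathrm{Hel}}\mid\mathcal{F}^E_i]\le -2\sum_i Z_i+2\log(|\mathcal{C}_\epsilon|/\delta)$. Approximating $\pi_{\mathrm{MLE}}$ by its cover element costs $\epsilon$ per step in the log-likelihood and in Hellinger distance. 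This produces the $2\epsilon K$ term.

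\textbf{Step 2 (misspecification).} We cannot compare with $\pi_{\expert}$ directly, because $\log\pi_{\expert}$ may be $-\infty$ off the support. Instead, take $\bar\pi\in\Pi_{\mathrm{softlin}}$ to be the softmax of the (scaled) linear $Q$ realizing $\pi_{\expert}$ in the limit, so that $\bar\pi\ge\pi_{\min}=(1+(A_{\max}-1)e^{\eta H})^{-1}$ and $\TV(\bar\pi,\pi_{\expert})(x)\lesssim A_{\max}e^{-\eta H}$ pointwise (the same calculation as in the proof of Theorem~\ref{thm:featureBC_n}). Decompose $-\sum_i Z_i$ into two pieces. The first is $\tfrac12\sum_i\log\frac{\bar\pi}{\pi_{\mathrm{MLE}}}(A^E_i|X_i)\le 0$, by optimality of the MLE since $\bar\pi\in\Pi_{\mathrm{softlin}}$. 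The second is a correction coming from the mismatch between $\pi_{\expert}$ and $\bar\pi$ in the conditional expectation. Concretely, $\mathbb{E}[e^{Z_i}\mid\cdot]$ differs from $1-\tfrac12D^2_{\mathrm{Hel}}(\pi_{\expert},\pi)$ by at most $\sum_a|\pi_{\expert}-\bar\pi|\sqrt{\pi/\bar\pi}$. Controlling this needs the ratio bound $\log(1/\pi_{\min})\le\log A_{\max}+\eta H$. I would bound this per-step error by $O\bigl((\log A_{\max}+\eta H)\,\TV(\bar\pi,\pi_{\expert})\bigr)$ by working with truncated log-ratios clipped at $\log(1/\pi_{\min})$. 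Summed over steps, this yields the $144(\log A_{\max}+\eta H)e^{-\eta H}K$ term.

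\textbf{Step 3 (second concentration).} The clipped log-ratio increments are bounded by $\log A_{\max}+\eta H$. Their deviation from the conditional means is handled by a second application of Lemma~\ref{lemma:fast_concentration}, rescaled by this range. This accounts for the additive $4(\log A_{\max}+\eta H)\log(1/\delta)$ term, and the extra $\delta$ explains the overall $1-2\delta$.

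The main obstacle is Step 2. One has to keep the log-likelihood comparison finite while $\pi_{\expert}$ may be deterministic, so the comparator must be $\bar\pi$ and not $\pi_{\expert}$. One also has to track how the bounded density ratio $B_{\mathrm{ratio}}\le 1/\pi_{\min}$ inflates the misspecification error only logarithmically, namely through $\log A_{\max}+\eta H$ rather than through $e^{\eta H}$. I would follow the truncation argument of \citet[Theorem 4.2]{rohatgi2025computational}, adapting it from i.i.d.\ samples to the martingale filtration $\mathcal{F}^E_i$.
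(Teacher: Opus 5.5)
Your Step~1 is fine and matches the paper's core argument. The paper phrases it as a Donsker--Varadhan/Chernoff bound with a point-mass posterior on the cover, which is the same as your supermartingale plus Ville plus union bound.

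\textbf{The gap is in Step~2.} Its premise is mistaken. Since $A^E_i\sim\pi_E(\cdot|X_i)$, you have $\pi_E(A^E_i|X_i)>0$ almost surely, so nothing stops you from keeping $\pi_E$ as the comparator inside the exponential moment. Once you replace it by $\bar\pi$, the per-step mismatch $\sum_a(\pi_E-\bar\pi)(a)\sqrt{\pi(a)/\bar\pi(a)}$ is \emph{not} $O((\log A_{\max}+\eta H)\,\TV(\bar\pi,\pi_E))$ from the information you use, namely $\bar\pi\ge\pi_{\min}$ and $\TV\lesssim A_{\max}e^{-\eta H}$. The best general bound is $\TV/\sqrt{\pi_{\min}}$, and it is attained. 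Take two actions, $m=\pi_{\min}$, $\bar\pi=(m,1-m)$, $\pi_E=(2m,1-2m)$ and $\pi=(1-m,m)$. Then $\mathbb{E}_{a\sim\pi_E}\sqrt{\pi(a)/\bar\pi(a)}\approx 3\sqrt m$, while $\sum_a\sqrt{\pi_E(a)\pi(a)}\approx(1+\sqrt2)\sqrt m$. The mismatch is therefore about $(2-\sqrt2)\sqrt m$, against $\TV=2m$.

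With $\eta H=\log K$ you have $m\asymp 1/(A_{\max}K)$. The bound must hold for every cover element, because $\pi_{\mathrm{MLE}}$ is data dependent. Summing over $K$ steps then gives order $\sqrt K$ instead of the $O(\log)$ term in the statement. Clipping log-ratios at $\log(1/\pi_{\min})$ cannot repair this. The ratio $\log(\pi/\bar\pi)$ is already bounded by that quantity; the blow-up comes from exponentiating half of it.

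\textbf{How the paper avoids this.} You need to switch to $\bar\pi$ only after the exponential moments are gone. The paper keeps $\pi_E$ in Step~1, which yields
\[
\sum_i\mathbb{E}\bigl[D^2_{\mathrm{Hel}}(\pi_E,\pi_{\mathrm{MLE}})(X_i)\mid\mathcal{F}^E_i\bigr]\le 2\epsilon K+2\log\frac{\abs{\mathcal{C}_\epsilon(\log\Pi_{\mathrm{softlin}})}}{\delta}+\sum_i\log\frac{\pi_E(A^E_i|X_i)}{\pi_{\mathrm{MLE}}(A^E_i|X_i)}.
\]
It then uses MLE optimality on the raw log-likelihood to replace $\pi_{\mathrm{MLE}}$ by $\bar\pi$. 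Finally it controls $\sum_i\log\frac{\pi_E}{\bar\pi}(A^E_i|X_i)$ with four ingredients:
\begin{itemize}
\item the \emph{lower} truncation $f(x)=\log x$ for $x\ge 1$ and $f(x)=x-1$ for $x<1$, since the unbounded side is the lower one ($\pi_E(A^E_i|X_i)$ can be tiny), while the upper side is automatically at most $\log B_{\mathrm{ratio}}$;
\item Freedman's inequality with range $1+\log B_{\mathrm{ratio}}$;
\item the variance bound $\mathbb{E}[f^2]\le 4(2+\log B_{\mathrm{ratio}})\,\mathbb{E}[f]$, which follows from $\mathbb{E}[e^{-f}]\le 1$;
\item the mean bound $\mathbb{E}[f\mid\mathcal{F}^E_i]\le(4+\log B_{\mathrm{ratio}})\,\mathbb{E}[D^2_{\mathrm{Hel}}(\pi_E,\bar\pi)\mid\mathcal{F}^E_i]$ from \citet{rohatgi2025computational} (Lemmas F.4--F.5), after which $D^2_{\mathrm{Hel}}\le\TV$.
\end{itemize}

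That last KL-versus-Hellinger comparison under a density-ratio bound is where the factor $\log A_{\max}+\eta H$, rather than $\pi_{\min}^{-1/2}$, comes from. Your plan has no counterpart to it.

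\textbf{On Step~3.} Lemma~\ref{lemma:fast_concentration} cannot stand in for Freedman here. The truncated increments are signed, and shifting them to be nonnegative before applying its factor-$2$ bound costs an additive $K$.
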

\begin{proof}
    Let $\Pi$ be a finite policy class, let us consider a distribution over $\Pi$ given by $\mathfrak{p}(\pi) = \frac{e^{g(\pi)}}{\sum_{\pi'\in \Pi} e^{g(\pi')} }$ where $g: \Pi \rightarrow \mathbb{R}$ is a function to be specified later. Then, for any other distribution over the policy class $\mathfrak{p}' \in \Delta_{\Pi}$ it holds that 
    \begin{align*}
0 &\leq KL(\mathfrak{p}', \mathfrak{p}) \\
&= -\sum_{\pi \in \Pi} \mathfrak{p}'(\pi) g(\pi) + \log \sum_{\pi'\in \Pi} e^{g(\pi')} + \sum_{\pi \in \Pi} \mathfrak{p}'(\pi) \log \mathfrak{p}'(\pi) \\
&= -\sum_{\pi \in \Pi} \mathfrak{p}'(\pi) g(\pi) + \log \sum_{\pi'\in \Pi} \frac{e^{g(\pi')}}{\abs{\Pi}} + \sum_{\pi \in \Pi} \mathfrak{p}'(\pi) \log \mathfrak{p}'(\pi) + \log \abs{\Pi} \\
&\leq -\sum_{\pi \in \Pi} \mathfrak{p}'(\pi) g(\pi) + \log \sum_{\pi'\in \Pi} \frac{e^{g(\pi')}}{\abs{\Pi}} + \log \abs{\Pi}.
    \end{align*}
At this point, let us choose as $\Pi$ the $\epsilon$ covering set of the class $\log \Pi_{\mathrm{softlin}}$, i.e.  $\mathcal{C}_\epsilon\brr{\log \Pi_{\mathrm{softlin}}}$ and let us define
\[
\mathfrak{p}'(\pi) = \mathds{1}_{\bcc{\pi = \hat{\pi}^\epsilon}} \quad \pi_{\mathrm{MLE}} = \argmax_{\pi \in \Pi_{\mathrm{softlin}}} \sum^{K}_{i=1} \pi(A^E_i | X_i) 
\]
and $\hat{\pi}^\epsilon \in \mathcal{C}_\epsilon\brr{\log \Pi_{\mathrm{softlin}}} $ such that $\max_{x\in \X}\norm{\log  \pi_{\mathrm{MLE}}(\cdot|x) -  \log \hat{\pi}^\epsilon (\cdot|x) }_{1} \leq \epsilon $.
At this point, let us specify $g(\pi) = - \sum^{K}_{i=1} \frac{1}{2} \log \frac{\pi_{\expert}(A_i^E| X_i)}{\pi(A_i^E| X_i)} - \sum^{K}_{i=1}  \log \mathbb{E} \bs{ e^{-\frac{1}{2} \log \frac{\pi_{\expert}(\bar{A}_i^E| \bar{X}_i)}{\pi(\bar{A}_i^E| \bar{X}_i)}} \bigg | \mathcal{F}^E_i} $, 
where we defined the tangent sequence $\bar{X}_1, \bar{A}^E_1, \dots, \bar{X}_{K}, \bar{A}^E_{K} $ such that  $\bar{X}_i \sim \mathbb{P}(\cdot| X_1, A^E_1, \dots, X_{i-1}, A^E_{i-1} )$ where $\mathbb{P}(\cdot| X_1, A^E_1, \dots, X_{i-1}, A^E_{i-1} )$ is the conditional distribution of the martingale sequence $$X_1, A^E_1, \dots, X_{K}, A^E_{\tau_E}.$$ Moreover, we have that $A^E_i \sim \pi_{\expert}(\cdot|X_i)$ and $\bar{A}^E_i \sim \pi_{\expert}(\cdot|\bar{X}_i)$. Therefore, we can notice that conditioned on the filtration $\mathcal{F}^E_i = \sigma(X_1, A^E_1, \dots, X_{i-1}, A^E_{i-1} )$ the pairs $X_i, A^E_i$ and $\bar{X}_i, \bar{A}^E_i$ are identically and independently distributed.
Notice here that we will apply this lemma in the interactive setting where the state sequence does not come from the expert occupancy measure but it is sampled from a sequence of exploratory distributions generated by the reward free exploration phase. For this reason, the sequence of states is denoted without the upper script $\expert$.
Replacing the definitions of $\mathfrak{p}'$ and $g$ we can obtain
\begin{align*}
0 &\leq \frac{1}{2} \sum^{K}_{i=1} \log \frac{\pi_{\expert}(A^E_i|X_i)}{\hat{\pi}^\epsilon(A^E_i|X_i)} + \log \sum_{\pi'\in \mathcal{C}_{\epsilon}(\log \Pi_{\mathrm{softlin}})} \frac{e^{- \sum^{K}_{i=1} \frac{1}{2} \log \frac{\pi_{\expert}(A_i^E| X_i)}{\pi'(A_i^E| X_i)}}}{\abs{\mathcal{C}_{\epsilon}(\log \Pi_{\mathrm{softlin}})} \prod^{K}_{i=1}  \mathbb{E} \bs{ e^{-\frac{1}{2} \log \frac{\pi_{\expert}(\bar{A}_i^E| \bar{X}_i)}{\pi'(\bar{A}_i^E| \bar{X}_i)}} \bigg | \mathcal{F}^E_i}  } \\
&\phantom{\leq}+ \log \abs{\mathcal{C}_{\epsilon}(\log \Pi_{\mathrm{softlin}})} + \sum^{K}_{i=1}  \log \mathbb{E} \bs{ e^{-\frac{1}{2} \log \frac{\pi_{\expert}(\bar{A}_i^E| \bar{X}_i)}{\hat{\pi}^\epsilon(\bar{A}_i^E| \bar{X}_i)}} \bigg | \mathcal{F}^E_i}.
\end{align*}
Rearranging, the above equation we obtain
\begin{align*}
    - \frac{1}{2} \sum^{K}_{i=1} &\log \frac{\pi_{\expert}(A^E_i|X_i)}{\hat{\pi}^\epsilon(A^E_i|X_i)} - \log \abs{\mathcal{C}_{\epsilon}(\log \Pi_{\mathrm{softlin}})} - \sum^{K}_{i=1} \log \mathbb{E} \bs{ e^{-\frac{1}{2}  \log  \frac{\pi_{\expert}(\bar{A}_i^E| \bar{X}_i)}{\hat{\pi}^\epsilon(\bar{A}_i^E| \bar{X}_i)}} \bigg | \mathcal{F}^E_{i}}  \\&\leq  \log \sum_{\pi'\in \mathcal{C}_{\epsilon}(\log \Pi_{\mathrm{softlin}})} \frac{e^{- \sum^{K}_{i=1} \frac{1}{2} \log \frac{\pi_{\expert}(A_i^E| X_i)}{\pi'(A_i^E| X_i)}}}{\abs{\mathcal{C}_{\epsilon}(\log \Pi_{\mathrm{softlin}})} \prod^{K}_{i=1}  \mathbb{E} \bs{ e^{-\frac{1}{2} \log \frac{\pi_{\expert}(\bar{A}_i^E| \bar{X}_i)}{\pi'(\bar{A}_i^E| \bar{X}_i)}} \bigg | \mathcal{F}^E_i}  }. 
\end{align*}
Now, taking the exponential and the expectation with respect to the random variables $\bcc{X_i, A^E_i}^{K}_{i=1}$ we obtain
\begin{align*}
    \mathbb{E}&\bs{ e^{ - \frac{1}{2} \sum^{K}_{i=1} \log \frac{\pi_{\expert}(A^E_i|X_i)}{\hat{\pi}^\epsilon(A^E_i|X_i)} - \log \abs{\mathcal{C}_{\epsilon}(\log \Pi_{\mathrm{softlin}})} - \sum^{K}_{i=1} \log \mathbb{E} \bs{ e^{-\frac{1}{2}  \log \frac{\pi_{\expert}(\bar{A}_i^E| \bar{X}_i)}{\hat{\pi}^\epsilon(\bar{A}_i^E| \bar{X}_i)}} \bigg | \mathcal{F}^E_i}}} \\
    & \leq \mathbb{E}\bs{\sum_{\pi'\in \mathcal{C}_{\epsilon}(\log \Pi_{\mathrm{softlin}})} \frac{  \prod^{K}_{i=1} \mathbb{E} \bs{e^{- \frac{1}{2} \log \frac{\pi_{\expert}(A_i^E| X_i)}{\pi'(A_i^E| X_i)}} | \mathcal{F}^E_i}}{\abs{\mathcal{C}_{\epsilon}(\log \Pi_{\mathrm{softlin}})} \prod^{K}_{i=1}  \mathbb{E} \bs{ e^{-\frac{1}{2} \log \frac{\pi_{\expert}(\bar{A}_i^E| \bar{X}_i)}{\pi'(\bar{A}_i^E| \bar{X}_i)}} \bigg | \mathcal{F}^E_i} } } =1.
\end{align*}
Therefore, by the Chernoff bound, we have that 
\begin{align*}
    \mathbb{P}\bs{ - \frac{1}{2} \sum^{K}_{i=1} \log \frac{\pi_{\expert}(A^E_i|X_i)}{\hat{\pi}^\epsilon(A^E_i|X_i)} - \log \abs{\mathcal{C}_{\epsilon}(\log \Pi_{\mathrm{softlin}})} - \sum^{K}_{i=1} \log \mathbb{E} \bs{ e^{-\frac{1}{2}  \log \frac{\pi_{\expert}(\bar{A}_i^E| \bar{X}_i)}{\hat{\pi}^\epsilon(\bar{A}_i^E| \bar{X}_i)}} \bigg | \mathcal{F}^E_i} \geq t } \leq \frac{1}{e^t}.
\end{align*}
Therefore, setting $t = \log(1/\delta)$, we have that with probability at least $1-\delta$,
\begin{equation*}
     - \frac{1}{2} \sum^{K}_{i=1} \log \frac{\pi_{\expert}(A^E_i|X_i)}{\hat{\pi}^\epsilon(A^E_i|X_i)}  - \sum^{K}_{i=1} \log \mathbb{E} \bs{ e^{-\frac{1}{2}  \log \frac{\pi_{\expert}(\bar{A}_i^E| \bar{X}_i)}{\hat{\pi}^\epsilon(\bar{A}_i^E| \bar{X}_i)}} \bigg | \mathcal{F}^E_i} \leq  \log \frac{\abs{\mathcal{C}_{\epsilon}(\log \Pi_{\mathrm{softlin}})}}{\delta}.
\end{equation*}
Now, by applying \citet[Lemma 25]{Agarwal:2020b}, we have that
\begin{align*}
- \sum^{K}_{i=1} \log \mathbb{E} \bs{ e^{-\frac{1}{2}  \log \frac{\pi_{\expert}(\bar{A}_i^E| \bar{X}_i)}{\hat{\pi}^\epsilon(\bar{A}_i^E| \bar{X}_i)}} \bigg | \mathcal{F}^E_i}
&\geq \frac{1}{2} \sum^{K}_{i=1}\mathbb{E}\bs{ D^2_{\mathrm{Hel}}(\pi_{\expert}(\cdot|\bar{X}_i), \hat{\pi}^\epsilon(\cdot|\bar{X}_i)) | \mathcal{F}^E_i } \\
&=\frac{1}{2} \sum^{K}_{i=1}\mathbb{E}\bs{ D^2_{\mathrm{Hel}}(\pi_{\expert}(\cdot|X_i), \hat{\pi}^\epsilon(\cdot|X_i)) | \mathcal{F}^E_i } \\
&=\frac{1}{2} \sum^{K}_{i=1}\mathbb{E}_{X\sim \nu_i}\bs{ D^2_{\mathrm{Hel}}(\pi_{\expert}(\cdot|X), \hat{\pi}^\epsilon(\cdot|X)) }. 
\end{align*}
Now, setting $\hat{\pi}^\epsilon = \pi^\epsilon_{\mathrm{MLE}}$, we have that with probability at least $1-\delta$
\begin{align*}
    \sum^{K}_{i=1}\mathbb{E}_{X \sim \nu_i}\bs{ D^2_{\mathrm{Hel}}(\pi_{\expert}(\cdot|X), \pi^\epsilon_{\mathrm{MLE}}(\cdot|X)) } \leq 2 \log \frac{\abs{\mathcal{C}_{\epsilon}(\log \Pi_{\mathrm{softlin}})}}{\delta} +  \sum^{K}_{i=1} \log \frac{\pi_{\expert}(A^E_i|X_i)}{\pi^\epsilon_{\mathrm{MLE}}(A^E_i|X_i)}. 
\end{align*}
Now, consider the following sequence of upper-bounds
\begin{align*}
    \sum^{K}_{i=1}\mathbb{E}_{X \sim \nu_i}\bs{ D^2_{\mathrm{Hel}}(\pi_{\expert}(\cdot|X), \pi_{\mathrm{MLE}}(\cdot|X)) } &\leq \sum^{K}_{i=1}\mathbb{E}_{X \sim \nu_i}\bs{ D^2_{\mathrm{Hel}}(\pi_{\mathrm{MLE}}(\cdot|X), \pi^\epsilon_{\mathrm{MLE}}(\cdot|X))} \\ &+ \sum^{K}_{i=1}\mathbb{E}_{X \sim \nu_i}\bs{ D^2_{\mathrm{Hel}}(\pi_{\expert}(\cdot|X), \pi^\epsilon_{\mathrm{MLE}}(\cdot|X)) } \\
    & \leq \epsilon K + \sum^{K}_{i=1}\mathbb{E}\bs{ D^2_{\mathrm{Hel}}(\pi_{\expert}(\cdot|X), \pi^\epsilon_{\mathrm{MLE}}(\cdot|X)) } \\
    & \leq \epsilon K +  2 \log \frac{\abs{\mathcal{C}_{\epsilon}(\log \Pi_{\mathrm{softlin}})}}{\delta} +  \sum^{K}_{i=1} \log \frac{\pi_{\expert}(A^E_i|X_i)}{\pi^\epsilon_{\mathrm{MLE}}(A^E_i|X_i)} 
    \\
    & \leq \epsilon K +  2 \log \frac{\abs{\mathcal{C}_{\epsilon}(\log \Pi_{\mathrm{softlin}})}}{\delta} +  \sum^{K}_{i=1} \log \frac{\pi_{\expert}(A^E_i|X_i)}{\pi_{\mathrm{MLE}}(A^E_i|X_i)} \\&\phantom{=}+ \sum^{K}_{i=1} \log \frac{\pi^\epsilon_{\mathrm{MLE}}(A^E_i|X_i)}{\pi_{\mathrm{MLE}}(A^E_i|X_i)} \\&\leq
    2 \epsilon K +  2 \log \frac{\abs{\mathcal{C}_{\epsilon}(\log \Pi_{\mathrm{softlin}})}}{\delta} +  \sum^{K}_{i=1} \log \frac{\pi_{\expert}(A^E_i|X_i)}{\pi_{\mathrm{MLE}}(A^E_i|X_i)}. 
\end{align*}
If there, was no mispecification error than we would show that $\sum^{K}_{i=1} \log \frac{\pi_{\expert}(A^E_i|X_i)}{\pi_{\mathrm{MLE}}(A^E_i|X_i)} \leq 0$ by the optimality of $\pi_{\mathrm{MLE}}$. However, since the expert Nash equilibrium might not be in $\Pi_{\mathrm{softlin}}$ we need to bound $\sum^{K}_{i=1} \log \frac{\pi_{\expert}(A^E_i|X_i)}{\pi_{\mathrm{MLE}}(A^E_i|X_i)}$ in terms of the possible mispecification error. Towards this end, let us define $\bar{\pi} = \argmin_{\pi \in \Pi_{\mathrm{softlin}}} \sum^{K}_{i=1} \mathbb{E}\bs{ D^2_{\mathrm{Hel}}(\pi(\cdot|X_i), \pi_{\expert} (\cdot|X_i)) | \mathcal{F}^E_i}$. Then, by definition of $\pi_{\mathrm{MLE}}$, we obtain
\begin{align*}
    \sum^{K}_{i=1} \log \frac{\pi_{\expert}(A^E_i|X_i)}{\pi_{\mathrm{MLE}}(A^E_i|X_i)}  &= \sum^{K}_{i=1} \log \pi_{\expert}(A^E_i|X_i) - \max_{\pi \in \Pi_{\mathrm{softlin}}}\sum^{K}_{i=1} \log \pi(A^E_i|X_i)  \\
    &\leq \sum^{K}_{i=1} \log \pi_{\expert}(A^E_i|X_i) - \sum^{K}_{i=1} \log \bar{\pi}(A^E_i|X_i) \\
    &\leq \sum^{K}_{i=1} f\brr{\frac{\pi_{\expert}(A^E_i|X_i)}{\bar{\pi}(A^E_i|X_i)}},
\end{align*}
where $f(x) = \log(x)\mathds{1}_{\bcc{x\geq 1}} + (t-1)\mathds{1}_{\bcc{x\leq 1}}$. The last inequality above 
holds because $f(x) \geq \log(x)$ for all $x\geq 0$.
At this point, following the approach of \citet[Theorem 4.2]{rohatgi2025computational},
we can apply Freedman's inequality, noticing that
\[
\max_{x,a\in \X\times \A} \abs{f\brr{\frac{\pi_{\expert}(a|x)}{\bar{\pi}(a|x)}}} \leq 1 + \log B_{\mathrm{ratio}},    
\]
to obtain that with probability at least $1-\delta$
\begin{align}
    \sum^{K}_{i=1} f\brr{\frac{\pi_{\expert}(A^E_i|X_i)}{\bar{\pi}(A^E_i|X_i)}} 
    &\leq \sum^{K}_{i=1} \mathbb{E}\bs{ f\brr{\frac{\pi_{\expert}(A^E_i|X_i)}{\bar{\pi}(A^E_i|X_i)}} \bigg| \mathcal{F}^E_i} + \frac{\sum^{K}_{i=1} \mathbb{E}\bs{ \brr{f\brr{\frac{\pi_{\expert}(A^E_i|X_i)}{\bar{\pi}(A^E_i|X_i)}}}^2 \bigg| \mathcal{F}^E_i}}{1 + \log B_{\mathrm{ratio}}} \\
    &\phantom{\leq} + (1 + \log B_{\mathrm{ratio}}) \log \frac{1}{\delta}.
\label{eq:freedman}
\end{align}
Moreover, we can prove that
\begin{align*}
    \mathbb{E}\bs{e^{- f\brr{\frac{\pi_{\expert}(A^E_i|X_i)}{\bar{\pi}(A^E_i|X_i)}}} \bigg| \mathcal{F}^E_i} & \leq \mathbb{E}\bs{e^{- \log \brr{\frac{\pi_{\expert}(A^E_i|X_i)}{\bar{\pi}(A^E_i|X_i)}}} \bigg| \mathcal{F}^E_i} \\
&= \mathbb{E}\bs{\frac{\bar{\pi}(A^E_i|X_i)}{\pi_{\expert}(A^E_i|X_i)} \bigg| \mathcal{F}^E_i} \\
&= \mathbb{E}_{X_i | \mathcal{F}^E_i} \mathbb{E}_{A^E_i \sim \pi_{\expert}(\cdot|X_i)}\bs{\frac{\bar{\pi}(A^E_i|X_i)}{\pi_{\expert}(A^E_i|X_i)} } \\
&= 1.
\end{align*}
Therefore, applying \citet[Lemma F.4]{rohatgi2025computational} with $\eta =1$, we obtain that
\[
\sum^{K}_{i=1} \mathbb{E}\bs{ \brr{f\brr{\frac{\pi_{\expert}(A^E_i|X_i)}{\bar{\pi}(A^E_i|X_i)}}}^2 \bigg| \mathcal{F}^E_i} \leq 4(2 + \log B_{\mathrm{ratio}}) \sum^{K}_{i=1} \mathbb{E}\bs{ f\brr{\frac{\pi_{\expert}(A^E_i|X_i)}{\bar{\pi}(A^E_i|X_i)}} \bigg| \mathcal{F}^E_i} .   
\]
Therefore, replacing in \eqref{eq:freedman}, we obtain
\[
    \sum^{K}_{i=1} f\brr{\frac{\pi_{\expert}(A^E_i|X_i)}{\bar{\pi}(A^E_i|X_i)}} 
    \leq 9 \sum^{K}_{i=1} \mathbb{E}\bs{ f\brr{\frac{\pi_{\expert}(A^E_i|X_i)}{\bar{\pi}(A^E_i|X_i)}} \bigg| \mathcal{F}^E_i}
     + (1 + \log B_{\mathrm{ratio}}) \log \frac{1}{\delta}.
\]
Finally, invoking \citet[Lemma F.5]{rohatgi2025computational}, we obtain
\begin{align*}
    \sum^{K}_{i=1} f\brr{\frac{\pi_{\expert}(A^E_i|X_i)}{\bar{\pi}(A^E_i|X_i)}} 
    &\leq 9 (4 + \log B_{\mathrm{ratio}} ) \sum^{K}_{i=1} \mathbb{E}\bs{ D^2_{\mathrm{Hel}}\brr{\pi_{\expert}(A^E_i|X_i), \bar{\pi}(A^E_i|X_i)} \bigg| \mathcal{F}^E_i}
     + (1 + \log B_{\mathrm{ratio}}) \log \frac{1}{\delta} \\
     &\leq 9 (4 + \log B_{\mathrm{ratio}} ) \frac{2(A_{\max}-1)}{1 + (A_{\max}-1)\exp\brr{\eta H}} K
     + (1 + \log B_{\mathrm{ratio}}) \log \frac{1}{\delta} \\
     &\leq 9 (4 + \log B_{\mathrm{ratio}} ) \frac{2}{\exp\brr{\eta H}} K
     + (1 + \log B_{\mathrm{ratio}}) \log \frac{1}{\delta}.
\end{align*}
Finally, using the fact that $\log B_{\mathrm{ratio}} \leq \log (1 + (A_{\max}-1)\exp\brr{\eta H}) \leq 2 \log A_{\max} + 2 \eta H$, we
 get
 \begin{align*}
 \sum^{K}_{i=1}\mathbb{E}\bs{ D^2_{\mathrm{Hel}}(\pi_{\expert}(\cdot|X_i), \pi_{\mathrm{MLE}}(\cdot|X_i)) | \mathcal{F}^E_i } &\leq 2 \epsilon K +  2 \log \frac{\abs{\mathcal{C}_{\epsilon}(\log \Pi_{\mathrm{softlin}})}}{\delta} +   \frac{144 (\log A_{\max} +  \eta H) }{\exp\brr{\eta H}} K \\&\phantom{=}
     + 4( \log A_{\max} +  \eta H) \log \frac{1}{\delta}.
 \end{align*}
\end{proof}

\end{document}